\documentclass[preprint]{article}

\PassOptionsToPackage{numbers,sort&compress}{natbib}

\usepackage{neurips_2026}
\usepackage[utf8]{inputenc}
\usepackage[T1]{fontenc}
\usepackage{hyperref}
\usepackage{url}
\usepackage{amsmath,amssymb,amsfonts}
\usepackage{amsthm}
\usepackage{graphicx}
\usepackage{mathtools}
\usepackage{array}
\usepackage{multirow}
\usepackage{booktabs}
\usepackage{nicefrac}
\usepackage{microtype}
\usepackage{xcolor}
\usepackage{algorithm}
\usepackage{algpseudocode}

\newtheorem{theorem}{Theorem}

\newtheorem{definition}{Definition}
\newtheorem{assumption}{Assumption}
\newtheorem{proposition}{Proposition}

\newtheorem{lemma}{Lemma}

\usepackage{tabularx}
\usepackage{makecell}
\usepackage{threeparttable}

\newcommand{\Sset}{\mathcal{S}}
\newcommand{\Aset}{\mathcal{A}}
\newcommand{\Cset}{\mathcal{C}}
\newcommand{\PiD}{\Pi_{\mathrm{det}}}

\newcommand{\ind}{\mathbb{I}}
\newcommand{\eps}{\varepsilon}

\newcommand{\ThetaSet}{\Theta}

\begin{document}
\title{Learning Chance-Constrained MDPs with Bellman Distributional Certificates}

\author{%
  Chenbei Lu\\
  Cornell University AI for Science Institute\\
  Cornell University\\
  Ithaca, NY, USA\\
  \texttt{chenbei.lu@cornell.edu}
  \And
  Hongyu Yi\\
  Department of Electrical \& Computer Engineering\\
  University of Washington\\
  Seattle, WA, USA\\
  \texttt{hyyi@uw.edu}
}

\maketitle

\begin{abstract}
Safe reinforcement learning (RL) commonly enforces expected-cost constraints, but such expectation safety may fail to control the probability of rare high-cost trajectories. Chance-constrained MDPs (CCMDPs) impose a stronger probability-level requirement, but are widely viewed as harder because the chance constraint is nonconvex and depends on the full trajectory rather than a Bellman-linear expectation. In this paper, we reveal that this computational difficulty does not necessarily imply a higher statistical price. For tabular discounted CCMDPs with fixed bounded successor support and access to a certified planning oracle, we establish a model-based upper bound, with a matching lower bound up to logarithmic terms.
Technically, our key idea is the \emph{Bellman distributional certificate}, which constructs a Bellman recursion for constraint violation probabilities before policy selection. The certificate can be reused across candidate policies; combined with shared row-wise reverse-KL confidence sets, it gives a policy-uniform trajectory-KL transfer without a union bound over policies or time--budget Bellman tables. For stochastic policies, we give a model-free variance-reduced policy-gradient algorithm with a finite-sample expected KKT-residual guarantee and independent validation of every accepted policy. Numerical experiments on synthetic CCMDPs and an IEEE 14-bus energy storage control benchmark illustrate the safety and mechanism behavior of the proposed algorithms.
\end{abstract}

\section{Introduction}
\label{sec:intro}
Reinforcement learning (RL) is a central framework for online decision-making in uncertain environments \citep{Sutton2018RLBook}. In safety-critical applications, however, maximizing expected reward is not sufficient: the agent must also satisfy certain safety or reliability requirements, the central concern in safe reinforcement learning \citep{GarciaFernandez2015SafeRLSurvey,Wachi2024survey,Achiam2017CMDP_policy_opt}. A common formulation for safe reinforcement learning is constrained Markov decision processes (CMDPs), which maximize the cumulative reward subject to constraints on expected cumulative costs \citep{Derman1972ConstrainedChains,BeutlerRoss1985ConstrainedChains,Altman1999ConstrainedMDPs}. This expectation-form constraint is mathematically convenient and has enabled linear-programming, occupancy-measure, and Bellman-style analyses \citep{Altman1999ConstrainedMDPs,FeinbergShwartz1996ConstrainedDiscounted,Vaswani2022CMDPSampleComplexity}. However, CMDPs control safety only through expectations. They do not directly control the probability distribution of rare high-cost events such as overloads or failures \citep{Chow2018Percentile,borkar2014risk}.

Chance-constrained Markov decision processes (CCMDPs) provide a stronger probability-level formulation of safety \citep{Ono2015CCDP,Pfrommer2022safe}. Instead of constraining only the average cumulative cost, a CCMDP requires the cumulative trajectory cost to remain below a prescribed threshold with high probability \citep{Chen2024ProbabilisticConstraint}. In the discounted setting, this takes the form
\begin{equation*}
    \mathbb P_{\pi}\left(\sum\nolimits_{t=0}^{\infty}\gamma^t c(s_t,a_t)\le d\right)\ge 1-\delta.
\end{equation*}
Unlike an expected-cost constraint, this requirement directly limits the probability of high-cost trajectories and therefore better captures reliability requirements in applications such as energy systems, transportation, and financial markets \citep{DallAnese2017ChanceOPF,Zhao2018IntermodalChance,Beraldi2022EnhancedIndexation}. However, this stronger safety object appears much harder to learn. A chance constraint is generally nonconvex and depends on the full trajectory distribution rather than an expectation of additive costs, so it breaks the Bellman-linear and occupancy-measure structure used in CMDP analyses and leads to a generally nonconvex feasible set \citep{Altman1999ConstrainedMDPs,Vaswani2022CMDPSampleComplexity,Shen2024FlippingCCMDP}.

Learning CCMDPs is challenging for two closely related reasons. First, chance constraints restrict the distribution of cumulative trajectory costs rather than their expectations. This breaks the Bellman-linear and occupancy-measure structure that makes CMDPs tractable, and the resulting feasible set is generally nonconvex \citep{Altman1999ConstrainedMDPs,Vaswani2022CMDPSampleComplexity,Shen2024FlippingCCMDP}. Second, without a reusable certificate, one must estimate violation probabilities for each policy separately, leading to inefficient sample reuse \citep{Yi2025LCSS}. Addressing these challenges is essential for developing sample-efficient RL algorithms with reliable safety guarantees. See Appendix \ref{app:related} for a detailed literature review.

To overcome these challenges, our contributions can be summarized as follows.

\paragraph{Bellman distributional certificates for chance constraints.}
We introduce Bellman distributional certificates, which track a remaining safety budget to express the chance-constraint violation probability through a Bellman recursion. This replaces separate policy-level safety checks with reusable one-step Bellman updates for violation probabilities, which is the key to enable sample-efficient constraint feasibility certification across candidate policies. This Bellman-level view is what separates certification from full trajectory simulation and makes the certificate reusable after policy search.

\paragraph{Model-based learning with Bellman certificates.}
We develop a model-based algorithm for learning a near-optimal deterministic policy with empirical Bellman distributional certificates in CCMDPs. Under the fixed bounded-successor-support and certified-planning conditions stated below, we prove that the returned policy is chance-feasible and near-optimal with sample complexity \(\widetilde O(|\Sset||\Aset|[(1-\gamma)^{-3}\eps^{-2}+(1-\gamma)^{-1}\rho^{-2}])\), which significantly improves the best-known deterministic CCMDP learning result \citep{Yi2025LCSS}. We also prove a separate deterministic-policy lower bound with the same primary parameter dependences, showing that both displayed terms are unavoidable. The proof uses shared row-wise reverse-KL confidence sets and a trajectory-level KL transfer argument, which control the safety of data-dependent policies without a union bound over policies or time--budget Bellman tables.

\paragraph{Model-free policy-gradient algorithm with trajectory certificates.}
We further develop a model-free algorithm for stochastic policy learning using variance-reduced policy gradients and trajectory certificates. Under local regularity assumptions, the optimization phase attains an expected KKT-residual guarantee with \(\widetilde O(\eps_{\rm kkt}^{-3})\) dependence, and a fresh \(\widetilde O(\rho^{-2})\) validation stage certifies every accepted policy. To our knowledge, this is the first result for finite sample stochastic policy learning with original chance constraints \citep{Chow2018Percentile,Chen2024ProbabilisticConstraint,Yuan2020STORMPG,Zhang2021TSIVRPG}.

\section{Chance-Constrained Markov Decision Processes}
\label{sec:problem}
Consider a standard infinite-horizon discounted chance-constrained MDP $\mathcal M=\langle \Sset,\Aset,P,r,\gamma,\Cset,\mu\rangle$.
Here $\Sset$ is a finite state space, $\Aset$ is a finite action space, $P:\Sset\times\Aset\to\Delta(\Sset)$ is the unknown transition kernel, $r:\Sset\times\Aset\to[0,1]$ is the reward function, $\gamma\in(0,1)$ is the discount factor, and $\mu$ is the initial distribution of the state.
The set $\Cset=\{(c_i,d_i,\delta_i)\}_{i=1}^m$ specifies the following chance constraints:
\begin{align}
\mathbb P_{P,\pi,\mu}\!\left(\sum\nolimits_{t=0}^{\infty}\gamma^tc_i(s_t,a_t)\le d_i\right)
\ge 1-\delta_i,
\qquad\forall i\in[m].
\end{align}
where $c_i:\Sset\times\Aset\to[0,1]$ is the instantaneous safety cost, $d_i$ is the safety budget, and $\delta_i$ is the allowed violation probability.

Since the chance constraints depend on the full trajectory, we allow the policy $\pi$ to also depend on trajectory parameters in addition to the current state. Specifically, a policy $\pi:\Sset\times\Gamma\to\Delta(\Aset)$ maps current state $s$ and trajectory parameters $g\in\Gamma$ to the distribution simplex over actions. Ordinary Markov policies correspond to the special case where \(\Gamma\) is a singleton.

For any policy $\pi$, let $\mathbb P_{P,\pi,\mu}$ and $\mathbb E_{P,\pi,\mu}$ denote probability and expectation for trajectories generated by $P$, policy $\pi$, and initial distribution $\mu$, including updates of trajectory parameters when present. We define
\begin{equation}
q_{P,i}(\pi)=
\mathbb P_{P,\pi,\mu}\!\left(\sum\nolimits_{t=0}^{\infty}\gamma^tc_i(s_t,a_t)>d_i\right),
\qquad
J_P(\pi)=
\mathbb E_{P,\pi,\mu}\!\left[\sum\nolimits_{t=0}^{\infty}\gamma^t r(s_t,a_t)\right].
\end{equation}
For a policy class $\Pi$, the feasible set and optimal value are
\begin{equation}
\Pi^\star=\{\pi\in\Pi:q_{P,i}(\pi)\le\delta_i,\ i\in[m]\},
\qquad
V^\star_{\rm cc}=\sup\nolimits_{\pi\in\Pi^\star}J_P(\pi).
\end{equation}
The feasible set is defined by a threshold probability, not by an expected cumulative cost. This is the main difference from a CMDP and is the source of the nonconvexity we handle below.

\section{Bellman Distributional Certificates}
\label{sec:certificates}
{In this section, we define Bellman distributional certificates for estimating the upper confidence bound of chance-constraint violation probabilities.} The construction first makes the trajectory event finite by truncating the discounted tail and rounding the remaining safety budget, then computes the upper bound through a one-sided pessimistic Bellman recursion. A policy is certified safe when this bound is below the allowed violation level for every constraint.

\subsection{Violation-Probability Bellman Table}
The certificate should upper bound the probability that a chance constraint is violated. However, a policy-by-policy estimate of this probability is statistically inefficient: samples used to validate one completed policy do not directly certify another policy.
We therefore build the following Bellman table whose entries are violation probabilities from a state, a rounded remaining budget, and a time index, which can be reused across different policies.

To make this Bellman table finite, we truncate the discounted cost tail and discretize the remaining safety budget. {Specifically, for a tail allowance $\alpha_{\rm tail}$ and grid widths $\eta_i>0$, define}
\begin{equation}
H_\alpha=
\left\lceil
\frac{\log(1/((1-\gamma)\alpha_{\rm tail}))}{1-\gamma}
\right\rceil,
\qquad
b_i^0=\left\lfloor\frac{d_i-\alpha_{\rm tail}}{\eta_i}\right\rfloor,
\qquad
w_{i,h}(s,a)=\left\lceil\frac{\gamma^h c_i(s,a)}{\eta_i}\right\rceil .
\label{eq:certificate-discretization}
\end{equation}
Here $H_\alpha$ makes the discounted tail after the certificate horizon at most $\alpha_{\rm tail}$, $b_i^0$ is the initial integer budget, and $w_{i,h}(s,a)$ is the integer charge of taking action $a$ in state $s$ at time index $h$. The initial budget is rounded down and each one-step charge is rounded up, so the finite violation event is conservative for the original chance constraint. The finite budget state is $b=(b_1,\ldots,b_m)$, where each coordinate takes integer values $0,\ldots,b_i^0$ together with the failure value $-1$; if $b_i^0<0$, the coordinate starts at $-1$. We write $\mathcal B$ for the product of these coordinate sets.
The deterministic update used below, denoted $B_h(s,a,b)$, subtracts the charges $w_{i,h}(s,a)$ coordinatewise and sets any exhausted coordinate to $-1$. Below we write $H=H_\alpha$ when the tail allowance is fixed.

With these finite objects in place, we define the Bellman table.

\begin{definition}[Bellman violation-probability table]
For a fixed certificate horizon \(H\), policy \(\pi\), constraint \(i\), and
transition kernel \(P\), define \(q^\pi_{P,i}(s,b,h)\) as the conditional
probability that the \(i\)-th rounded budget coordinate becomes negative by the
certificate horizon, when the rounded process starts from original state \(s\),
rounded remaining-budget vector \(b\), and time index \(h\).
\end{definition}

\begin{lemma}[Markov sufficiency of rounded budgets]
\label{lem:certificate-parameter-sufficiency}
Fix a certificate horizon \(H\), budget grid \(\eta\), and a deterministic policy
\(\pi\) whose actions are selected as \(a_h=\pi_h(s_h,b_h)\). Then, for each
constraint \(i\), the conditional probability of rounded finite-prefix violation
after time \(h\), given the trajectory history \(\mathcal F_h\), depends on
\(\mathcal F_h\) only through \((s_h,b_h,h)\). In particular, there exists a
function \(q^\pi_{P,i}\) such that
\begin{equation}
\mathbb P_{P,\pi}\!\left(\exists t\in\{h,\ldots,H\}: b_{i,t}<0
\mid \mathcal F_h\right)
=
q^\pi_{P,i}(s_h,b_h,h).
\end{equation}
Moreover, \(q^\pi_{P,i}\) satisfies the Bellman recursion
\begin{equation}
q^\pi_{P,i}(s,b,h)
=
\sum\nolimits_{s'}P(s'\mid s,\pi_h(s,b))
q^\pi_{P,i}(s',B_h(s,\pi_h(s,b),b),h+1),
\end{equation}
with boundary condition $q^\pi_{P,i}(s,b,H)=\ind\{b_i<0\}$.
\end{lemma}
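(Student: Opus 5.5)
The plan is a backward induction on the time index \(h\), from \(h=H\) down to \(h=0\), proving two things together. First, for each \(h\) there is a function \(q^\pi_{P,i}(\cdot,\cdot,h)\) on \(\Sset\times\mathcal B\) such that the conditional violation probability given \(\mathcal F_h\) equals \(q^\pi_{P,i}(s_h,b_h,h)\) almost surely. Second, these functions satisfy the displayed recursion.

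The setup needs one observation. The augmented process \((s_h,b_h)\) is a time-inhomogeneous Markov chain under \(P\) and the deterministic policy \(\pi\), because:
\begin{itemize}
\item the action is \(a_h=\pi_h(s_h,b_h)\), a deterministic function of \((s_h,b_h,h)\);
\item the next budget is \(b_{h+1}=B_h(s_h,a_h,b_h)\), a deterministic function of \((s_h,b_h,h)\);
\item the next state is drawn from \(P(\cdot\mid s_h,a_h)\), independently of the rest of \(\mathcal F_h\) given \((s_h,a_h)\).
\end{itemize}
I also use monotonicity of the failure coordinate. Charges \(w_{i,h}\ge 0\) and the exhausted value \(-1\) is absorbing under \(B_h\), so the event \(\{\exists t\in\{h,\ldots,H\}: b_{i,t}<0\}\) coincides with \(\{b_{i,H}<0\}\), or equivalently with \(\{b_{i,h}<0\}\cup\{\exists t\in\{h+1,\ldots,H\}:b_{i,t}<0\}\). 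This explains why the boundary condition is just \(\ind\{b_i<0\}\).

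Base case \(h=H\). The event reduces to \(\{b_{i,H}<0\}\), which is \(\mathcal F_H\)-measurable. Its conditional probability is therefore \(\ind\{b_{i,H}<0\}\), a function of \(b_H\) alone.

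Inductive step. Suppose the claim holds at \(h+1\) with function \(q^\pi_{P,i}(\cdot,\cdot,h+1)\).
\begin{itemize}
\item On \(\{b_{i,h}<0\}\), absorption gives \(B_h(s,a,b)_i=-1\). Then \(q^\pi_{P,i}(s',B_h(\cdot),h+1)=1\) for every \(s'\) (by the inductive claim, since the failure persists to \(H\)), so the recursion returns \(1\), which is correct.
\item Otherwise the event at time \(h\) is the event at time \(h+1\). By the tower property, the conditional probability given \(\mathcal F_h\) is \(\mathbb E[\,q^\pi_{P,i}(s_{h+1},b_{h+1},h+1)\mid\mathcal F_h]\).
\item Substituting \(b_{h+1}=B_h(s_h,\pi_h(s_h,b_h),b_h)\) and using the Markov property for \(s_{h+1}\), this equals \(\sum_{s'}P(s'\mid s_h,\pi_h(s_h,b_h))\,q^\pi_{P,i}(s',B_h(s_h,\pi_h(s_h,b_h),b_h),h+1)\).
\item This is a function of \((s_h,b_h,h)\) only. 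Defining \(q^\pi_{P,i}(s,b,h)\) by this formula gives both the sufficiency claim and the recursion at once.
\end{itemize}

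I expect the main obstacle to be measure-theoretic bookkeeping rather than a real difficulty. The step needing care is the conditional-independence claim: the law of \(s_{h+1}\) given \(\mathcal F_h\) must be \(P(\cdot\mid s_h,a_h)\). This has to be stated precisely for the canonical trajectory measure \(\mathbb P_{P,\pi,\mu}\), with \(\mathcal F_h=\sigma(s_0,b_0,a_0,\ldots,s_h,b_h)\), and it follows directly from the construction of that measure by the Ionescu–Tulcea theorem. A smaller point is to check that the function defined recursively agrees with the definition of the Bellman violation-probability table on every \((s,b,h)\), including pairs that are unreachable from \(\mu\). This holds by defining the table via the chain started at \((s,b,h)\) and applying the same induction to that chain.
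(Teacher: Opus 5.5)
Your proposal is correct and follows essentially the same route as the paper: a backward induction from the boundary $h=H$, using that the action and the next rounded budget are deterministic functions of $(s_h,b_h,h)$ and that $s_{h+1}\sim P(\cdot\mid s_h,a_h)$, then the tower property to obtain the recursion. You also state explicitly that the failure value $-1$ is absorbing, so the violation event from time $h$ coincides with the one from time $h+1$, and that the table is defined from arbitrary starting triples $(s,b,h)$; both make precise steps the paper's proof leaves implicit.
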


The proof is given in Appendix~\ref{app:certificate-parameter-sufficiency-proof}.
This lemma is the reason the trajectory parameters used below are only the time index and the rounded remaining budget. The environment remains the original MDP with state \(s\); \(b\) and \(h\) summarize the information needed to evaluate the finite chance-violation event, while the finite certified solver chooses actions at these triples. For a fixed certificate horizon \(H\) and grid \(\eta\), we write \(\Pi_{H,\eta}\) for deterministic policies whose actions through the certificate horizon are mappings \(\pi_h(s,b)\), with an arbitrary fixed continuation thereafter.

\subsection{Pessimistic Bellman Certificates}
Lemma~\ref{lem:certificate-parameter-sufficiency} gives the exact Bellman recursion for the finite violation probability when the transition kernel is known. In learning, \(P\) is unknown, so the certificate maximizes each Bellman backup over a row-wise confidence set. Since the table represents violation probabilities, this robust backup overestimates the true violation probability whenever the true transition row belongs to the confidence set.

\begin{definition}[Pessimistic Bellman distributional certificate]
Fix a policy $\pi\in\Pi_{H,\eta}$, an empirical transition kernel $\widehat P$, and row-wise confidence sets $\{\mathcal C_{s,a}\}_{s,a}$.
For each constraint $i$, define the empirical violation table
$\overline q_i^\pi(s,b,h)$ backward by $\overline q_i^\pi(s,b,H)=\ind\{b_i<0\},$
and for $h=H-1,\ldots,0$ by
\begin{equation}
\overline q_i^\pi(s,b,h)
=
\sup_{p\in\mathcal C_{s,a}}
p^\top\overline q_i^\pi(\cdot,B_h(s,a,b),h+1),
\qquad a=\pi_h(s,b),
\label{eq:pessimistic-backup}
\end{equation}
The pessimistic Bellman distributional certificate of $\pi$ is $\overline q_{i,H,\eta}(\pi)
=
\sum\nolimits_s \mu(s)\overline q_i^\pi(s,b^0,0)$ for all $i\in[m].$
\end{definition}

The certificate connects the finite Bellman table back to the original infinite-horizon chance constraint. If the empirical upper bound is below the allowed violation probability, then the true rounded violation probability is also below that level. 

\begin{proposition}[Certified safety, Proof in Appendix~\ref{app:certified-safety-proof}]
\label{prop:certified-safety}
With high probability over the shared row samples, simultaneously for every $\pi\in\Pi_{H,\eta}$, $\overline q_{i,H,\eta}(\pi)\le \delta_i$ for every $i\in[m]$ implies that $\pi$ is feasible for the original infinite-horizon chance-constrained MDP.
\end{proposition}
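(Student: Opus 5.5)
The proof splits into three steps: a deterministic comparison between the robust and true tables on a good event, a conservativeness argument linking the rounded finite-prefix event to the original chance event, and a policy-uniformity argument that avoids any union bound over policies.

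\textbf{Step 1: the good event.} We define the good event \(\mathcal E=\{P(\cdot\mid s,a)\in\mathcal C_{s,a}\ \forall (s,a)\}\). The row-wise confidence sets are built from the shared samples, for instance reverse-KL balls around \(\widehat P(\cdot\mid s,a)\). We therefore take \(\mathbb P(\mathcal E)\ge 1-\beta\) by a standard per-row concentration bound (e.g.\ a method-of-types KL deviation bound) and a union bound over only the \(|\Sset||\Aset|\) rows.

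The key observation is that \(\mathcal E\) involves no policy. Everything that follows is a deterministic consequence of \(\mathcal E\), so it holds simultaneously for every \(\pi\in\Pi_{H,\eta}\), including data-dependent ones.

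\textbf{Step 2: robust table dominates the true table.} On \(\mathcal E\), we show by backward induction on \(h\) that \(\overline q_i^\pi(s,b,h)\ge q^\pi_{P,i}(s,b,h)\) for all \((s,b)\), every \(i\), and every \(\pi\in\Pi_{H,\eta}\).

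The base case \(h=H\) holds with equality, since both sides equal \(\ind\{b_i<0\}\).

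For the inductive step, set \(a=\pi_h(s,b)\) and \(b'=B_h(s,a,b)\). Then
\begin{equation*}
\overline q_i^\pi(s,b,h)\ge P(\cdot\mid s,a)^\top\overline q_i^\pi(\cdot,b',h+1)\ge P(\cdot\mid s,a)^\top q^\pi_{P,i}(\cdot,b',h+1)=q^\pi_{P,i}(s,b,h).
\end{equation*}
The first inequality holds because the true row lies in the feasible set of the supremum. The second holds by the induction hypothesis and nonnegativity of \(P\). The final equality is the Bellman recursion of Lemma~\ref{lem:certificate-parameter-sufficiency}.

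Averaging over \(\mu\) at \((b^0,0)\) gives \(q^{\rm fin}_{P,i}(\pi)\le\overline q_{i,H,\eta}(\pi)\le\delta_i\). Here \(q^{\rm fin}_{P,i}(\pi)\) denotes the probability that the rounded budget coordinate \(i\) goes negative by time \(H\); Lemma~\ref{lem:certificate-parameter-sufficiency} identifies this probability with the table value.

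\textbf{Step 3: the original violation event is contained in the rounded event.} Take any trajectory with \(\sum_{t\ge0}\gamma^tc_i(s_t,a_t)>d_i\).

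First, the tail after the horizon is small: since \(c_i\le 1\), the tail satisfies \(\sum_{t\ge H}\gamma^t c_i\le\gamma^H/(1-\gamma)\le\alpha_{\rm tail}\). The last inequality uses \(\gamma^H\le e^{-(1-\gamma)H}\) and the choice of \(H_\alpha\). Hence \(\sum_{t<H}\gamma^tc_i>d_i-\alpha_{\rm tail}\).

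Next, the rounding is conservative. Each integer charge satisfies \(\eta_i w_{i,t}\ge\gamma^tc_i\), and the initial budget satisfies \(\eta_i b_i^0\le d_i-\alpha_{\rm tail}\). Combining these gives
\begin{equation*}
\eta_i\sum\nolimits_{t<H}w_{i,t}>\eta_i b_i^0 .
\end{equation*}
So the integer budget, decremented by the charges, becomes negative at some \(t\le H\), and it is then absorbed at \(-1\). If \(b_i^0<0\), the coordinate is negative from the start.

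Therefore \(q_{P,i}(\pi)\le q^{\rm fin}_{P,i}(\pi)\le\delta_i\) for each \(i\), which is the claimed feasibility.

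\textbf{Main obstacle.} The main subtlety is the indexing in Step 3. We must check that the charge applied at time \(h\) via \(B_h\) matches the indexing of \(w_{i,h}\), and that a violation triggered by the last charge (at \(t=H-1\)) is registered at the boundary \(H\). With the convention \(b_{t+1}=B_t(s_t,a_t,b_t)\) this works out. A secondary point is that \(\pi\) may behave arbitrarily after time \(H\); this is harmless, because the tail bound used in Step 3 holds for any actions.
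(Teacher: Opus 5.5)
Your proposal is correct and follows essentially the same route as the paper's proof. The paper also takes a policy-free row-coverage event (method-of-types KL bound plus a union bound over the $|\Sset||\Aset|$ rows), proves by backward induction that the robust table dominates the true rounded table on that event, and passes to the original infinite-horizon constraint via the conservative-rounding inclusion of Lemma~\ref{lem:conservative-rounding}; as in your argument, uniformity over all $\pi\in\Pi_{H,\eta}$ holds because everything after the coverage event is deterministic.
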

\paragraph{Why the certificate is reusable.}
The key statistical point is that one confidence set is constructed for each original transition row and reused at every time and budget state. On the simultaneous row-coverage event, the robust recursion is valid for every policy, including a policy selected after observing the empirical model. Thus the statistical analysis needs neither fresh samples for different Bellman continuations nor a union bound over complete policies.

\section{Model-Based Learning with Bellman Certificates}
\label{sec:deterministic}
This section develops a model-based learning algorithm built on Bellman distributional certificates. We first describe the learning procedure and then present its finite-sample guarantee together with a separate statistical lower bound.

\subsection{Empirical Bellman-Certified Learning}
\label{sec:three-stage}
Bellman-certified model-based learning has three steps: environment model estimation, policy-wise Bellman value and certificate computation, and optimization over certified policies.

\paragraph{Model estimation.} We assume access to a standard generative model. At each original state-action pair $(s,a)$, the learner draws $n$ independent next-state samples from $P(\cdot\mid s,a)$ and forms one empirical row $\widehat P_{s,a}$. The same empirical kernel is reused across all time indices, budget states, constraints, and reward comparisons. The row reward $r(s,a)$ and the costs are known bounded functions. With the known support bound $d_0$ from Assumption~\ref{ass:sharp-certified-planning}, define
\begin{equation}
\kappa_n
:=
\frac{(d_0-1)\log(n+1)+\log(|\Sset||\Aset|/\zeta)}{n},
\qquad
\mathcal C_{s,a}
:=
\{p\in\Delta(\Sset):\mathrm{KL}(\widehat P_{s,a}\|p)\le\kappa_n\}.
\label{eq:kl-confidence-set}
\end{equation}
The empirical-to-candidate direction of the KL divergence keeps the set well defined even when a possible successor is not observed.

\paragraph{Bellman objects computation.} Given a deterministic policy $\pi$, the KL sets in \eqref{eq:kl-confidence-set} are plugged into the robust recursion from Section~\ref{sec:certificates}, producing the certificate values $\{\overline q_{i,H_\alpha,\eta}(\pi)\}_{i\in[m]}$. The policy passes the empirical safety test only if $\overline q_{i,H_\alpha,\eta}(\pi)\le\delta_i$ for all constraints.

The shared empirical kernel is also used to rank certified policies. Set $\widehat V_H^\pi(s,H,b)=0$ and, for $h=H-1,\ldots,0$, compute
\begin{equation}
\widehat V_H^\pi(s,h,b)
=
r(s,\pi_h(s,b))
+\gamma\,
{\widehat P}(\cdot\mid s,\pi_h(s,b))^\top
\widehat V_H^\pi(\cdot,h+1,B_h(s,\pi_h(s,b),b)),
\label{eq:reward-backup}
\end{equation}
and define $\widehat J(\pi)=\sum\nolimits_s\mu(s)\widehat V_H^\pi(s,0,b^0)$. Thus the robust Bellman distributional certificate checks feasibility, while ordinary empirical Bellman evaluation ranks the feasible candidates.

\paragraph{Finite certified problem.}
Bellman-certified model-based learning then optimizes over a finite policy class $\Pi_{\mathcal O}$, the class handled by the finite certified solver. It returns the policy with the largest empirical Bellman value among policies whose Bellman distributional certificates pass the chance constraints:
\begin{equation}
\max\nolimits_{\pi\in\Pi_{\mathcal O}}\widehat J(\pi)
\qquad
\text{s.t.}\quad
\overline q_{i,H_\alpha,\eta}(\pi)\le\delta_i,\quad i\in[m].
\label{eq:finite-certified-problem}
\end{equation}
Thus reward and safety enter the finite problem asymmetrically: $\widehat J$ ranks certified policies, while $\overline q_{i,H_\alpha,\eta}$ is the Bellman distributional certificate used for feasibility.

\begin{definition}[$\xi$-optimal finite certified solver]
\label{def:finite-certified-solver}
For $\xi\ge0$, whenever the certified feasible set is nonempty, a finite certified solver returns a policy $\widehat\pi\in\Pi_{\mathcal O}$ and its Bellman reward and safety tables such that $\overline q_{i,H_\alpha,\eta}(\widehat\pi)\le\delta_i$ for all $i\in[m]$ and
\begin{equation}
\widehat J(\widehat\pi)
\ge
\max\nolimits_{\pi\in\Pi_{\mathcal O}:\ \overline q_{i,H_\alpha,\eta}(\pi)\le\delta_i,\ i\in[m]}
\widehat J(\pi)-\xi.
\end{equation}
When $\xi=0$, the solver is exact.
\end{definition}

{
\begin{assumption}[Bounded successor support and certified planning]
\label{ass:sharp-certified-planning}
There is a known integer $d_0\ge2$, independent of $|\Sset|$, $|\Aset|$, $\gamma$, $\eps$, and $\rho$, such that
\begin{equation}
|\operatorname{supp}P(\cdot\mid s,a)|\le d_0,
\qquad (s,a)\in\Sset\times\Aset.
\end{equation}
The finite policy class $\Pi_{\mathcal O}\subseteq\Pi_{H,\eta}$ is fixed before sampling. Whenever the tightened problem is feasible, the certified planner returns an $\xi$-optimal policy in the sense of Definition~\ref{def:finite-certified-solver}.
\end{assumption}
}

The bounded-support condition restricts only the number of possible successors; their identities and probabilities remain unknown. The planning requirement is an oracle condition: it controls statistical accuracy but does not assert that the constrained deterministic optimization is polynomial-time solvable. Algorithm~\ref{alg:three-stage} summarizes the procedure.

\begin{algorithm}[t]
\caption{Bellman-certified model-based learning}
\label{alg:three-stage}
\begin{algorithmic}[1]
\Require Generative model, finite certified solver, samples per row $n$,
tail allowance $\alpha_{\rm tail}$, grids $\eta_i$, margin $\rho$,
confidence $\zeta$, solver gap $\xi$

\State Compute $H_\alpha$, $b^0$, $w_{i,h}$, and $B_h$ as in
\eqref{eq:certificate-discretization} and Section~\ref{sec:certificates}.

\For{each original row $(s,a)$}
    \State Draw $n$ next-state samples, form $\widehat P_{s,a}$, and construct
    $\mathcal C_{s,a}$ by \eqref{eq:kl-confidence-set}.
\EndFor

\State Evaluate robust safety backups by \eqref{eq:pessimistic-backup}
and empirical reward backups by \eqref{eq:reward-backup}, reusing $\widehat P$.

\State Solve \eqref{eq:finite-certified-problem} over $\Pi_{\mathcal O}$
with $\delta_i$ replaced by {$\delta_i-3\rho/4$}, to gap $\xi$;
denote the output by $\widehat\pi$.

\If{{the tightened certified set is empty}}
    \State \Return {\textsc{unresolved}}
\Else
    \State \Return $\widehat\pi$
\EndIf
\end{algorithmic}
\end{algorithm}

\subsection{Sample Complexity Guarantees}
\label{sec:sample-complexity}
Now we present the sample complexity guarantees for the proposed algorithm.
As is standard in finite-sample constrained RL, we state the guarantee relative to a comparator with a safety margin. Set $\bar\eta_\eps:=\min\{\eps/8,(1-\gamma)/128\}$, use $\alpha_{\rm tail}=\eta_i=\bar\eta_\eps$ in the rounded certificate, and let $q^{\rm rnd}_{P,i}(\pi)$ denote the resulting true rounded violation probability. For $\rho>0$, define the rounded $\rho$-interior optimal value
\begin{equation}
V_{\rho}^{\star}
=
\max\nolimits_{\pi\in\Pi_{\mathcal O}}
\{J_P(\pi): q^{\rm rnd}_{P,i}(\pi)\le\delta_i-\rho,\ i\in[m]\},
\label{eq:rounded-rho-comparator}
\end{equation}
and assume this set is nonempty. Let $\pi_{\rho}^{\star}$ be a maximizer. The margin is used only for statistical certification; conservative rounding separately guarantees safety for the original chance constraint. Such interior comparators are standard in finite-sample constrained RL \citep{Vaswani2022CMDPSampleComplexity,Buckley2025MultipleCMDP}.
We provide the upper and lower bounds below, with proofs in Appendix~\ref{app:scalar-prob-proof} and Appendix~\ref{app:model-based-lower-bounds}.

\begin{theorem}[Bellman-certified upper bound]
\label{thm:scalar-prob}
There are universal constants $C,c>0$ such that the following holds. Fix $\zeta\in(0,1)$, $\rho\in(0,1]$, $\eps\in(0,1)$, and $\xi\ge0$, and assume the set defining $V_{\rho}^{\star}$ is nonempty. Suppose Algorithm~\ref{alg:three-stage} uses a solver satisfying Assumption~\ref{ass:sharp-certified-planning}, with fixed support bound $d_0$. Run it with $\alpha_{\rm tail}=\eta_i=\bar\eta_\eps$ for every constraint and $n=\lfloor D/(|\Sset||\Aset|)\rfloor$ samples per row. If the total generative-model transition-sample budget $D$ satisfies
{
\begin{equation}
D
\ge
C d_0|\Sset||\Aset|
\left(
\frac{1}{(1-\gamma)^3\eps^2}
+
\frac{1}{(1-\gamma)\rho^2}
\right)
\log^c\!\left(
\frac{8d_0|\Sset||\Aset|}{\zeta\eps\rho(1-\gamma)}
\right)
.
\label{eq:scalar-total-samples}
\end{equation}
}
Then with probability at least $1-\zeta$, Algorithm~\ref{alg:three-stage} returns a policy $\widehat\pi$ satisfying $q_{P,i}(\widehat\pi)\le\delta_i$ for every $i\in[m]$ and $J_P(\widehat\pi)\ge V_{\rho}^{\star}-\eps-\xi$.
\end{theorem}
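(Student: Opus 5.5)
The proof works on a single good event $\mathcal E$. On $\mathcal E$, every true row lies in its reverse-KL confidence set and the empirical reward values are uniformly accurate. The whole argument is deterministic once we are on $\mathcal E$.

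\emph{Step 1 (coverage).} With $d_0$-bounded support, the method-of-types bound gives $\mathbb P(\mathrm{KL}(\widehat P_{s,a}\|P_{s,a})>\kappa_n)\le (n+1)^{d_0-1}e^{-n\kappa_n}=\zeta/(|\Sset||\Aset|)$. A union bound over the $|\Sset||\Aset|$ rows then gives $P_{s,a}\in\mathcal C_{s,a}$ for all rows with probability $1-\zeta/2$, after adjusting $\zeta$ by constants.

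\emph{Step 2 (safety).} On this event, Proposition~\ref{prop:certified-safety} applies to the data-dependent $\widehat\pi$, because the certificate is a robust upper bound simultaneously for all $\pi\in\Pi_{H,\eta}$. We get $q^{\rm rnd}_{P,i}(\widehat\pi)\le\overline q_{i}(\widehat\pi)\le\delta_i-3\rho/4$. Conservative rounding (budget floored, charges ceiled, tail after $H_\alpha$ at most $\alpha_{\rm tail}$) then gives $q_{P,i}(\widehat\pi)\le q^{\rm rnd}_{P,i}(\widehat\pi)\le\delta_i$.

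\emph{Step 3 (comparator passes the tightened test).} This is the main obstacle. We must show $\overline q_i(\pi^\star_\rho)\le q^{\rm rnd}_{P,i}(\pi^\star_\rho)+3\rho/4$, which also shows the tightened set is nonempty and the algorithm does not return \textsc{unresolved}. The plan is a trajectory-level KL transfer. Let $\tilde P$ be the worst-case kernel attaining the robust backups; it is time- and budget-dependent, with rows $\tilde p\in\mathcal C_{s,a}$. The robust certificate equals the violation probability under the lifted process driven by $\tilde P$. Pinsker's inequality together with the chain rule over the $H$-step augmented trajectory gives
\[
\bigl|\overline q_i-q^{\rm rnd}_{P,i}\bigr|\le\sqrt{\tfrac12\,\mathbb E_{P}\sum\nolimits_{h<H}\mathrm{KL}(P_{s_h,a_h}\|\tilde p_h)}.
\]
Row-wise KL in the needed direction is controlled by comparing both $P$ and $\tilde p$ to $\widehat P$. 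On the support, reverse KL with $\mathrm{KL}(\widehat P\|P),\mathrm{KL}(\widehat P\|\tilde p)\le\kappa_n$ gives $\mathrm{KL}(P\|\tilde p)\lesssim\kappa_n$ up to constants via a $\chi^2$ comparison. Summing naively over $H$ steps costs $\sqrt{H\kappa_n}$, with $H\approx(1-\gamma)^{-1}\log$. Requiring this to be at most $\rho/4$ gives $n\gtrsim d_0(1-\gamma)^{-1}\rho^{-2}\log$, which matches the second term of \eqref{eq:scalar-total-samples}. A single shared event covers this for every policy, so no union bound over policies or over $(h,b)$ tables is needed.

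\emph{Step 4 (reward).} I would need $|\widehat J(\pi)-J_P(\pi)|\le\eps/4$ simultaneously for $\pi\in\{\widehat\pi,\pi^\star_\rho\}$, and in fact uniformly over $\Pi_{\mathcal O}$, because $\widehat\pi$ is data dependent. Truncation to $H_\alpha$ and rounding with $\bar\eta_\eps$ cost at most $\eps/4$. For the statistical part, I would first try the same KL transfer with the value normalized by $(1-\gamma)^{-1}$. That gives only $(1-\gamma)^{-4}$, so to reach $(1-\gamma)^{-3}\eps^{-2}$ I would use a Bernstein/variance-aware simulation lemma on the augmented chain. The total variance of returns is $O((1-\gamma)^{-2})$, which gives error $\sqrt{\kappa_n/(1-\gamma)^3}$ uniformly over policies via the row-wise KL event, with the uniformity coming from the support bound $d_0$. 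If that fails, a leave-one-out/absorbing-MDP decoupling can be applied on the $(s,b,h)$ lift.

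\emph{Conclusion.} Combining the steps, $J_P(\widehat\pi)\ge\widehat J(\widehat\pi)-\eps/2\ge\widehat J(\pi^\star_\rho)-\xi-\eps/2\ge V^\star_\rho-\eps-\xi$. Setting $n=D/(|\Sset||\Aset|)$ yields the stated budget.
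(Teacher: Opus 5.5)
Your Step 3 has a genuine gap. The confidence sets control only $\mathrm{KL}(\widehat P_{s,a}\|p)$. Your chain-rule bound, however, is taken under $\mathbb E_P$, so it needs $\mathrm{KL}(P_{s,a}\|\tilde p)\lesssim\kappa_n$ for two members of the same reverse-KL ball, and that does not follow. KL has no triangle inequality, and a $\chi^2$ comparison needs probabilities bounded below. More decisively, the ball leaves $p$ unconstrained off $\operatorname{supp}\widehat P_{s,a}$ except through normalization. A robust maximizer therefore sends all off-support mass to one state maximizing the continuation violation probability, and generically puts zero mass on every other unobserved successor. Whenever a true successor with small positive probability went unsampled and receives zero mass under $\tilde p$, you get $\mathrm{KL}(P_{s,a}\|\tilde p)=\infty$, and your bound is vacuous.

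The repair is to route both comparisons through the empirical law, in the direction the sets actually control. Let $Q_i^\pi$ be the worst-case lifted kernel. The chain rule gives $\mathrm{KL}(\mathbb P^{0:H}_{\widehat P,\pi}\|\mathbb P^{0:H}_{P,\pi})\le H\kappa_n$ and $\mathrm{KL}(\mathbb P^{0:H}_{\widehat P,\pi}\|\mathbb P^{0:H}_{Q_i^\pi,\pi})\le H\kappa_n$. Two applications of Pinsker plus the triangle inequality in total variation then give $0\le\overline q_{i,H,\eta}(\pi)-q^{\rm rnd}_{P,i}(\pi)\le2\sqrt{H\kappa_n/2}$ simultaneously for all $\pi\in\Pi_{H,\eta}$. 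This is the paper's Lemma~\ref{lem:uniform-kl-transfer}. With $\sqrt{H\kappa_n/2}\le\rho/8$ it yields $\overline q_{i,H,\eta}(\pi^\star_\rho)\le\delta_i-3\rho/4$.

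Your Step 4 rests on a miscount. The same $\widehat P$-based transfer, applied to the truncated return with range $[0,R_H]$, gives $|\widehat J(\pi)-J_{P,H}(\pi)|\le R_H\sqrt{H\kappa_n/2}$ uniformly over $\Pi_{H,\eta}$. Trajectory KL grows linearly in $H$, so total variation grows only like $\sqrt H$; this is the scaling you already used in Step 3. Hence $R_H\sqrt{H\kappa_n/2}\le\eps/8$ needs $n\gtrsim d_0HR_H^2\eps^{-2}$ up to logarithms, which is $\widetilde O(d_0(1-\gamma)^{-3}\eps^{-2})$, not $(1-\gamma)^{-4}$.

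The variance-aware route you substitute is therefore unnecessary, and as written it is not a proof. You only assert a Bernstein simulation lemma uniform over all policies on the lifted, time-inhomogeneous chain. The leave-one-out/absorbing-MDP fallback decouples a particular data-dependent value function. It does not by itself give the uniform-over-policies control needed when $\widehat\pi$ is selected by a constrained, data-dependent problem.

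With the direct transfer bound applied to both $\widehat\pi$ and $\pi^\star_\rho$, the tail bound $\gamma^H/(1-\gamma)\le\bar\eta_\eps\le\eps/8$, and nonnegativity of rewards, your concluding chain goes through as in the paper. Your Steps 1 and 2 already match the paper.
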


\begin{theorem}[Model-based CCMDP lower bound]
\label{thm:model-based-lower-bounds}
There are universal constants $c,c_0>0$ such that the following holds for every fixed $d_0\ge2$, $\gamma\in[1/2,1)$, $\eps\in(0,c_0/(1-\gamma))$, $\rho\in(0,c_0)$, {$|\Sset|\ge4$}, and {$|\Aset|\ge2$}. Use $\alpha_{\rm tail}=\eta=\bar\eta_\eps$ and take $\Pi_{\mathcal O}=\Pi_{H,\eta}$ in \eqref{eq:rounded-rho-comparator}. Let $\PiD$ be the full deterministic policy class represented by the trajectory parameters in Section~\ref{sec:problem}. Any generative-model learner that, on every one-constraint CCMDP satisfying $|\operatorname{supp}P(\cdot\mid s,a)|\le d_0$ and having a nonempty rounded $\rho$-interior feasible set, returns with probability at least $2/3$ a policy $\widehat\pi\in\PiD$ satisfying jointly $q_P(\widehat\pi)\le\delta$ and $J_P(\widehat\pi)\ge V^{\star}_\rho-\eps$ must, in the worst case, use a total number $D$ of transition samples satisfying
\begin{equation}
D
\ge
c|\Sset||\Aset|
\left(
\frac{1}{(1-\gamma)^3\eps^2}
+
{\frac{1}{(1-\gamma)\rho^2}}
\right).
\end{equation}
\end{theorem}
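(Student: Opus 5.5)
The lower bound is the maximum of two separate terms, so I would prove each one with its own family of hard instances. Each family is a one-constraint CCMDP with successor support at most $d_0$ (in fact $2$ suffices). I would then combine them through $\max\{x,y\}\ge (x+y)/2$. Both arguments follow the standard Le Cam/Fano template for generative models. Embed about $|\Sset||\Aset|/4$ independent copies of a two-point hypothesis gadget, one per state--action pair, into an MDP on $|\Sset|\ge4$ states. Show that any policy meeting the success criterion must reveal, for a constant fraction of the gadgets, which hypothesis holds. Then use Assouad's lemma with a per-row KL bound: $n$ samples from $\mathrm{Bern}(p)$ versus $\mathrm{Bern}(p+\Delta)$ give KL of order $n\Delta^2/(p(1-p))$. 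This forces $n\gtrsim p(1-p)/\Delta^2$ samples per gadget row, and summing over rows gives the factor $|\Sset||\Aset|$.

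\textbf{Reward term.} I would make the constraint vacuous, for instance with $c\equiv0$, so that every policy is rounded-$\rho$-feasible and $V^\star_\rho$ is the unconstrained optimum over $\Pi_{H,\eta}$. I would then reuse the Azar--Munos--Kappen hard instance (self-loops with probability $p=\gamma$-ish, i.e.\ $1-p\asymp 1-\gamma$, and perturbation $\Delta\asymp (1-\gamma)^2\eps$). This yields the $|\Sset||\Aset|(1-\gamma)^{-3}\eps^{-2}$ bound for returning an $\eps$-optimal policy. Because the cost is zero, the budget coordinate never moves, so $\PiD$ and $\Pi_{H,\eta}$ collapse to Markov-type policies and the classical argument applies unchanged. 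The restriction $\eps<c_0/(1-\gamma)$ is what keeps the perturbation valid.

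\textbf{Safety term.} Here I would fix reward so that at each gadget state an action $a_1$ earns reward about $\eps'$ more than a safe action $a_0$, with $\eps'$ a constant multiple of $1$ so that $J$-suboptimality is visible. Taking $a_1$ leads, with unknown probability $\theta\in\{\delta-2\rho,\ \delta+2\rho\}$ (scaled by a $1-\gamma$ factor, see below), to an absorbing state with cost $1$ that exceeds the budget. I would choose $d$ and the discretization so that the rounded and true violation events coincide up to $o(\rho)$, using $\eta=\bar\eta_\eps$. I would set $\delta$ near $1/2$ so the Bernoulli variance is constant.

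Feasibility now forces the learner to avoid $a_1$ under the bad hypothesis. The $\rho$-interior optimality requirement with gap $\eps$ forces it to take $a_1$ under the good hypothesis, provided the reward gap exceeds $\eps$, which the condition $\eps<c_0/(1-\gamma)$ allows. So the learner must distinguish $\theta$ differing by $4\rho$, which costs $\gtrsim\rho^{-2}$ samples per gadget. The extra factor $(1-\gamma)^{-1}$ comes from routing: the initial state reaches a uniformly random gadget, and the violating transition at the gadget has per-step probability $\theta(1-\gamma)$ on a self-loop of expected length $(1-\gamma)^{-1}$. The per-row perturbation is then $\Delta\asymp(1-\gamma)\rho$ around $p\asymp1-\gamma$, giving per-row KL $n(1-\gamma)\rho^2$ and hence $n\gtrsim (1-\gamma)^{-1}\rho^{-2}$.

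\textbf{Main obstacle.} The hardest step is the safety construction: a single chance constraint must be decoupled across gadgets so that Assouad applies. With one global violation probability, errors at different gadgets combine into one number. I would handle this by making the initial distribution uniform over gadgets and choosing $\delta$, $\rho$ so that any policy choosing $a_1$ on more than a constant fraction of bad gadgets violates the aggregated constraint, while a policy that is correct on most gadgets is suboptimal by more than $\eps$ on the rest. Rescaling $\rho$ by the number of gadgets would lose the $|\Sset||\Aset|$ factor. To avoid that, I would try the Fano/Varshamov--Gilbert packing version of the argument, with a hamming-separation threshold equal to a constant fraction of gadgets.
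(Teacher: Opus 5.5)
Your per-sample KL scaling (failure probability of order $1-\gamma$, gap of order $(1-\gamma)\rho$, so KL of order $(1-\gamma)\rho^2$) is correct and matches the paper's $\kappa\le C\rho^2/H_0$. The argument around it has two genuine gaps.

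\textbf{The $|\Aset|$ factor.} Your template cannot produce it. Assouad over independent two-point bits, one per state--action pair, needs the output's loss to dominate the Hamming error on every bit. A deterministic policy commits to one action per history, so it reveals nothing about the actions it avoids. If each action at a state is independently good or bad, a learner can test actions one at a time and find a good one after $O(1/\kappa)$ queries per state on average. Nothing then forces an $|\Aset|/\kappa$ cost. Your concrete safety gadget is weaker still: a known safe $a_0$ and one risky $a_1$ give at most $|\Sset|$ unknowns.

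The paper instead hides a single safe action among all $|\Aset|$ actions at each of $M=|\Sset|-2$ decision states. Under the safe action the state stays put with probability $p_s=1-1/(4H_0)$; under any other action with $p_s-\Delta$, where $\Delta=128\rho/H_0$. Otherwise the process moves to a unit-cost failure state, which causes violation iff it is reached by time $H_0$. The paper then proves a needle-in-a-haystack bound (Lemma~\ref{lem:adaptive-needle}). Against a null model in which every arm is bad, the transcript is independent of the hidden arm, so only a $1/|\Aset|$ fraction of queries carries KL. This gives $\mathbb E[N]\gtrsim|\Aset|/\kappa$. Simulating every state except a uniformly random index $J$ converts this to $D/M\gtrsim|\Aset|H_0/\rho^2$. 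The reward term uses the same one-good-action structure with $\Delta=2048\eps/H_0^2$, not independent per-pair bits.

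\textbf{Decoupling the averaged constraint.} You leave this unresolved, and your design does not bind. With a known zero-violation action $a_0$, the learner can offset risky choices at some gadgets by retreating to $a_0$ at others, trading off reward. So the single averaged constraint never forces identification gadget by gadget.

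No rescaling of $\rho$ and no packing argument is needed. In the paper all rewards are zero, so $\eps$-optimality is automatic and exact safety alone must do the work. The all-safe policy is the minimum-violation policy, with violation exactly $q_s=\delta-\rho$. Under the uniform initial distribution, $q_P(\pi)-q_s$ is the average of per-gadget excesses. Gadget $j$'s excess is at least $\Delta p_u^{H_0-1}W_j$, where $W_j$ counts non-safe actions along the history that stays at $x_j$ for the first $H_0$ steps, and $H_0\Delta p_u^{H_0-1}\ge64\rho$. Safety therefore gives $\sum_jW_j\le MH_0/64$, so majority decoding along the ``stay at $x_j$'' history is wrong on at most $M/32$ states.

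That decoding also handles the history dependence of $\PiD$. Your remark that zero cost collapses policies to Markov ones overlooks this: the time index still advances, and the trajectory parameters may encode the whole history.
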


Now we discuss further implications of our results.

\paragraph{Sample complexity and comparison to prior CCMDP learning.}
Theorem~\ref{thm:scalar-prob} gives a post-selection safe policy with sample complexity
\begin{equation*}
D
=
\widetilde O\!\left(
d_0|\Sset||\Aset|
\left[
\frac{1}{(1-\gamma)^3\eps^2}
+
{\frac{1}{(1-\gamma)\rho^2}}
\right]
\right)
,
\end{equation*}
for state space size $|\mathcal{S}|$, action space size $|\mathcal{A}|$, fixed successor bound $d_0$, effective horizon $(1-\gamma)^{-1}$, accuracy $\epsilon$ and constraint margin $\rho$. The \(\eps^{-2}\) term is the usual value-estimation cost, while the \(\rho^{-2}\) term is the cost of certifying exact chance safety near the constraint boundary. Compared with the deterministic CCMDP bound reported by \cite{Yi2025LCSS}, the rate improves the displayed state and horizon dependences. For fixed $d_0$, Theorem~\ref{thm:model-based-lower-bounds} shows that the reward term $|\Sset||\Aset|(1-\gamma)^{-3}\eps^{-2}$ and the chance-boundary term $|\Sset||\Aset|(1-\gamma)^{-1}\rho^{-2}$ are both unavoidable. {The upper bound and lower bound therefore have the same primary parameter dependences up to logarithmic factors.} The sharp upper dependence comes from reusing one KL-controlled empirical model across all Bellman backups rather than accumulating local confidence buffers.

\paragraph{Comparison with MDPs and CMDPs.}
For deterministic-policy learning, the lower bound separates two sources of statistical hardness. The reward-learning term $|\Sset||\Aset|(1-\gamma)^{-3}\eps^{-2}$ matches the standard dependence for discounted MDPs and relaxed-feasibility CMDPs \citep{Azar2013minimaxPAC,agarwal2020model,Vaswani2022CMDPSampleComplexity}. {The chance constraint certification learning cost $|\Sset||\Aset|(1-\gamma)^{-1}\rho^{-2}$ has the same term-wise dependence as the corresponding upper-bound term.} Strict-feasibility CMDP bounds use a different notion of feasibility margin and are therefore not directly comparable term by term.

\section{Model-Free Stochastic Policy Learning}
\label{sec:model-free-tail-pg}
We further study the model-free stochastic policy learning without estimating the transition kernel. The goal is to optimize the same chance-violation probability directly from sampled trajectories. This section presents a variance-reduced policy-gradient algorithm and gives {a finite-sample guarantee} together with independent safety certification.

\subsection{Trajectory Violation-Probability Policy Gradient}
\label{sec:model-free-algorithm}
Compared with standard policy gradient for unconstrained MDPs, the chance-constrained setting requires three additional designs. First, the rollout must keep enough trajectory information to evaluate the chance event. Fix the certificate length \(H\) and budget grid from Section~\ref{sec:certificates}. During each rollout, the algorithm maintains the rounded remaining budget \(b_t\) and the clipped time index \(\bar h_t=\min\{t,H\}\), and uses the stochastic policy class $\Pi_\Theta=\{\pi_\theta(a\mid s,\bar h,b):\theta\in\ThetaSet\subset\mathbb R^p\}$, where {$\ThetaSet$ is an open parameter domain and} \((\bar h,b)\) are trajectory parameters rather than environment states.

Second, the constraint optimized by the algorithm is the violation probability itself $q^H_{P,i}(\theta)
=
\mathbb P_{P,\pi_\theta,\mu}\!\left(b_{i,H}<0\right).$
This is the trajectory-sampling analogue of the finite event used by the Bellman distributional certificate, and conservative rounding connects it back to the original infinite-horizon chance constraint with feasibility.

Third, the optimization step must handle probability constraints estimated from trajectories. We design a variance-reduced policy-gradient method \citep{Yuan2020STORMPG,Zhang2021TSIVRPG} for the finite CCMDP objective: reward gradients are estimated with the standard discounted policy-gradient oracle, while chance-violation gradients are estimated from length-\(H\) violation indicators and likelihood-ratio scores. The update solves a tightened constrained problem so that optimization error and statistical safety error are separated. The final candidate is then tested on an independent validation batch: {every accepted candidate is certified, while an inconclusive test returns \textsc{unresolved}.} This gives a model-free counterpart to the Bellman certificate: optimization uses trajectory samples, while safety is still checked on the original violation event. Algorithm~\ref{alg:trajectory-tail-pg} gives the routine, and Appendix~\ref{app:proof-tail-pg} proves the oracle identities and convergence guarantee.

\begin{algorithm}[t]
\caption{Model-free trajectory-sampling violation-probability policy gradient}
\label{alg:trajectory-tail-pg}
\begin{algorithmic}[1]
\Require Rollout access, budget-aware policy class $\pi_\theta(a\mid s,\bar h,b)$, certificate length $H$, grids $\eta_i$, margin $\rho$, confidence $\zeta$, KKT tolerance $\eps_{\rm kkt}$
\State {Initialize $x_0=(\theta_0,z_0)$ using the feasible-initialization and merit-gap clause of Assumption~\ref{ass:stoch-regular}.}
\For{{$k=0,\ldots,T-1$}}
    \State Draw reward rollouts and length-$H$ violation-indicator rollouts, maintaining $(\bar h_t,b_t)$ {and sharing the full-budget rollout across constraints}.
    \State Update the recursive estimator $v_k$ of the smooth penalty gradient $\nabla\Phi_\beta(x_k)$, with periodic independent refresh batches.
    \State{Take the analyzed proximal penalty step $x_{k+1}=\operatorname{prox}_{\eta_{\rm opt}h}(x_k-\eta_{\rm opt}v_k)$, using independent samples for Jacobian--constraint products.}
\EndFor
\State {Draw $R$ uniformly from $\{0,\ldots,T-1\}$ and take the policy component $\theta_{\rm cand}$ of the already-computed $x_{R+1}$.} Validate it with fresh length-$H$ trajectories; accept only if $\widehat q^H_{i,\mathrm{val}}(\theta_{\rm cand})+\rho/2\le\delta_i$ for all $i$, and otherwise return \textsc{unresolved}.
\end{algorithmic}
\end{algorithm}

\subsection{Sample Complexity Guarantees}
\label{sec:model-free-guarantee}
We now state the finite-sample guarantee for the proposed model-free algorithm. {The optimization guarantee relies on conditions stated below with more details in Assumption~\ref{ass:finite-penalty}.}

\begin{assumption}[Local regularity]
\label{ass:stoch-regular}
{We assume the local smoothness, bounded-score, bounded-variance, compact-interior-iterate, feasible-initialization, and bounded-multiplier conditions used by the trajectory oracles and variance-reduced updates~\citep{Yuan2020STORMPG,Zhang2021TSIVRPG,Li2024StocIALM}. The complete proof-level optimization conditions are stated in Assumption~\ref{ass:finite-penalty}. In particular, that assumption includes the additional residual-domination condition~\eqref{eq:residual-domination}.}
\end{assumption}

{Let $\bar J_P(\theta):=(1-\gamma)J_P(\pi_\theta)$,} $f_H(\theta)=-\bar J_P(\theta)$ and
\(g_i^H(\theta)=q^H_{P,i}(\theta)+2\rho-\delta_i\). The \(2\rho\) tightening separates optimization error from the independent validation error used for final safety certification. For the finite tightened problem, we measure local constrained optimality by
\begin{equation}
\mathcal R_H(\theta)
=
\min_{\lambda\in[0,\Lambda]^m}
\left\{
\left\|\nabla f_H(\theta)+\sum\nolimits_i\lambda_i\nabla g_i^H(\theta)\right\|_2
+\|[g^H(\theta)]_+\|_1
+\sum\nolimits_i|\lambda_i g_i^H(\theta)|
\right\},
\end{equation}
where \(\Lambda\) is the local multiplier bound from Assumption~\ref{ass:stoch-regular}.

\begin{theorem}[Model-free trajectory certificate]
\label{thm:tail-pg}
There is a universal constant $C_{\rm pg}>0$ such that the following holds. Fix $\alpha_{\rm tail}\in(0,(1-\gamma)^{-1})$, grid widths $\eta_i>0$, margin $\rho\in(0,1]$, KKT accuracy $\eps_{\rm kkt}\in(0,1)$, and confidence $\zeta\in(0,1)$. Run Algorithm~\ref{alg:trajectory-tail-pg} with \(H=\left\lceil \log(1/((1-\gamma)\alpha_{\rm tail}))/(1-\gamma)\right\rceil\), and use \(M_{\rm val}=\left\lceil 2\log(2m/\zeta)/\rho^2\right\rceil\) fresh validation trajectories per constraint. Let \(K_{\rm opt}\) denote the resulting problem-dependent local optimization constant. {Under Assumptions~\ref{ass:stoch-regular} and~\ref{ass:finite-penalty},} it suffices that the {expected total number $D:=\mathbb E[N_{\rm trans}]$} of environment transitions for optimization and validation satisfies
\begin{equation}
\begin{aligned}
D
\ge&
\frac{C_{\rm pg}K_{\rm opt}G^2m}{\eps_{\rm kkt}^{3}(1-\gamma)^3}
\log^3\left(\frac{2}{(1-\gamma)\alpha_{\rm tail}}\right)
\log^4\left(
\frac{8K_{\rm opt}G^2m}{\eps_{\rm kkt}\rho\zeta(1-\gamma)^3\alpha_{\rm tail}}
\right)
\\
&\quad+
\frac{C_{\rm pg}m}{\rho^2(1-\gamma)}
\log\left(\frac{2}{(1-\gamma)\alpha_{\rm tail}}\right)
\log\left(\frac{2m}{\zeta}\right).
\end{aligned}
\label{eq:variance-aware-pg-samples}
\end{equation}
Then the optimization phase returns a candidate satisfying
\(\mathbb E[\mathcal R_H(\theta_{\rm cand})]\le\eps_{\rm kkt}\).
Moreover, with probability at least \(1-\zeta\) over the independent validation trajectories, Algorithm~\ref{alg:trajectory-tail-pg} either returns validation unresolved or accepts a policy feasible for the original infinite-horizon CCMDP. On the same event, if {the returned candidate} satisfies \(q^H_{P,i}(\theta_{\rm cand})\le\delta_i-\rho\) for all \(i\), then it is accepted.

\end{theorem}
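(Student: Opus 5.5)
The proof splits into three parts: the optimization claim, the safety claim for every accepted policy, and the acceptance claim. The optimization claim is separated from the validation claims by using independent samples.

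\textbf{Optimization phase.} I would first establish the oracle identities for the trajectory estimators.
\begin{itemize}
\item The reward gradient uses the standard discounted likelihood-ratio estimator, truncated at an effective horizon $O((1-\gamma)^{-1}\log(1/((1-\gamma)\alpha_{\rm tail})))$.
\item The violation-probability gradient is $\nabla q^H_{P,i}(\theta)=\mathbb E[\ind\{b_{i,H}<0\}\sum_{t<H}\nabla\log\pi_\theta(a_t\mid s_t,\bar h_t,b_t)]$.
\item This identity is valid because, by Lemma~\ref{lem:certificate-parameter-sufficiency}, the augmented process $(s_t,\bar h_t,b_t)$ is Markov, so the trajectory density factorizes.
\item Bounded scores with constant $G$ give second-moment bounds of order $G^2H$ for each estimator, and of order $G^2H^2$ for the Hessian-type terms used in the STORM-type recursion.
\end{itemize}

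I would then invoke the variance-reduced proximal-penalty analysis in the style of \citep{Yuan2020STORMPG,Zhang2021TSIVRPG,Li2024StocIALM}, applied to the smooth penalty $\Phi_\beta$ under Assumption~\ref{ass:finite-penalty}. This yields $\mathbb E\|\mathcal G(x_{R+1})\|\lesssim \eps$ for the prox-gradient mapping $\mathcal G$ after $T\approx K_{\rm opt}\eps^{-3}$ iterations with mini-batches of the stated sizes. The residual-domination condition \eqref{eq:residual-domination} then converts the prox-gradient stationarity into $\mathbb E[\mathcal R_H(\theta_{\rm cand})]\le\eps_{\rm kkt}$. Multiplying the number of rollouts by the rollout length $H$ produces the first term in \eqref{eq:variance-aware-pg-samples}, including the $(1-\gamma)^{-3}$ factor and the logarithmic powers. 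I expect this bookkeeping to be the main obstacle: tracking how $H$, $m$, and $G^2$ enter the variance constants, and bounding $\mathbb E[N_{\rm trans}]$ when geometric rollouts are used for the reward oracle.

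\textbf{Validation and safety.} Conditioned on the optimization phase, $\theta_{\rm cand}$ is fixed. The $M_{\rm val}$ fresh trajectories give i.i.d.\ Bernoulli indicators with mean $q^H_{P,i}(\theta_{\rm cand})$. Hoeffding's inequality with a union bound over the $m$ constraints shows that, with probability at least $1-\zeta$,
\[
|\widehat q^H_{i,\mathrm{val}}-q^H_{P,i}(\theta_{\rm cand})|\le\sqrt{\log(2m/\zeta)/(2M_{\rm val})}\le\rho/2
\quad\text{for all } i .
\]
On this event, acceptance requires $\widehat q^H_{i,\mathrm{val}}+\rho/2\le\delta_i$, which implies $q^H_{P,i}(\theta_{\rm cand})\le\delta_i$.

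I would then apply the conservative-rounding argument behind Proposition~\ref{prop:certified-safety}, now for the exact kernel and stochastic policies.
\begin{itemize}
\item Because each charge is rounded up, a rounded budget that never becomes negative certifies $\sum_{t<H}\gamma^tc_i\le \eta_i b_i^0\le d_i-\alpha_{\rm tail}$.
\item The choice of $H$ ensures that the remaining tail is at most $\gamma^H/(1-\gamma)\le\alpha_{\rm tail}$.
\item Hence $q_{P,i}(\pi_{\theta_{\rm cand}})\le q^H_{P,i}(\theta_{\rm cand})\le\delta_i$, so any accepted policy is feasible for the original CCMDP.
\end{itemize}

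\textbf{Acceptance.} On the same event, if $q^H_{P,i}(\theta_{\rm cand})\le\delta_i-\rho$ for all $i$, then $\widehat q^H_{i,\mathrm{val}}+\rho/2\le\delta_i-\rho+\rho=\delta_i$, so the candidate is accepted.

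The validation cost is $mM_{\rm val}H$ transitions, which is of order $m\rho^{-2}(1-\gamma)^{-1}\log(2/((1-\gamma)\alpha_{\rm tail}))\log(2m/\zeta)$. This is the second term in \eqref{eq:variance-aware-pg-samples}.
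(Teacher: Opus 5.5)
Your validation, safety, and acceptance arguments match the paper's Step~3 essentially verbatim: Hoeffding plus a union bound, conservative rounding as in Lemma~\ref{lem:stoch-tail}, the $\rho/2+\rho/2$ acceptance margin, and the $mM_{\rm val}H$ validation count. Your optimization template is also the paper's: a variance-reduced proximal penalty method under Assumption~\ref{ass:finite-penalty}, followed by residual domination.

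The gap is the reward oracle. You truncate the reward likelihood-ratio estimator at a horizon of order $(1-\gamma)^{-1}\log(1/((1-\gamma)\alpha_{\rm tail}))$. But in $\mathcal R_H$ only the constraints are length-$H$ objects; the objective is $f_H=-(1-\gamma)J_P(\pi_\theta)$, the untruncated discounted reward. Your estimator is therefore unbiased for the gradient of the truncated return, not for $\nabla f_H$. The difference is the gradient of the discarded tail, whose size is governed by $\gamma^H$ and hence by $\alpha_{\rm tail}$. Since $\alpha_{\rm tail}$ is fixed independently of $\eps_{\rm kkt}$, no iteration count drives $\mathbb E[\mathcal R_H(\theta_{\rm cand})]$ below $\eps_{\rm kkt}$; for $\alpha_{\rm tail}$ close to $(1-\gamma)^{-1}$ the certificate horizon is even $H=1$. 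The unbiasedness of refresh and difference samples demanded in item~3 of Assumption~\ref{ass:finite-penalty} also fails.

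The paper instead draws an independent geometric time $\tau$ with $\mathbb P(\tau=t)=(1-\gamma)\gamma^t$ and uses $r(s_\tau,a_\tau)\sum_{u\le\tau}\nabla_\theta\log\pi_\theta(a_u\mid s_u,\bar h_u,b_u)$. This estimator is exactly unbiased for $\nabla\bar J_P$, has second moment at most $2G^2/(1-\gamma)^2$, and costs $(1-\gamma)^{-1}$ transitions in expectation. That is why $D$ is an expected transition count, and it is the origin of the $(1-\gamma)^{-3}$ in the transition-weighted variance $\mathsf W_H=O(G^2[(1-\gamma)^{-3}+mH^3])$. Your passing mention of geometric rollouts is inconsistent with the estimator you actually specify. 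Truncating at an $\eps_{\rm kkt}$-dependent horizon instead would require a biased-oracle analysis and regularity assumptions for the truncated problem, which the theorem does not supply.

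Two smaller links are also missing. First, residual domination \eqref{eq:residual-domination} is stated for $\operatorname{dist}(0,\nabla\Phi_\beta(x)+\partial h(x))$ at the computed iterate $x_{R+1}$, not for the prox-gradient mapping. You therefore need the proximal-optimality inclusion to bound this distance by $(1+L_\beta\eta_{\rm opt})$ times the normalized step length plus the gradient-estimation error, where $L_\beta$ is the smoothness constant of $\Phi_\beta$. Second, domination only controls the slack residuals $r_\theta,r_z,r_c$. These must then be converted into $\mathcal R_H$ via Lemma~\ref{lem:slack-residual}, taking $\lambda=[\beta c(x)]_+\in[0,\Lambda]^m$. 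Once the oracle is fixed, the bookkeeping you flag is short: substitute $\mathsf W_H$ and $H\le C\log(2/((1-\gamma)\alpha_{\rm tail}))/(1-\gamma)$ into the $\widetilde O(K_{\rm opt}\mathsf W\eps^{-3})$ transition count of Lemma~\ref{lem:vr-alm-template}.
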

\paragraph{Discussion.}
Theorem~\ref{thm:tail-pg} provides a finite-sample expected-KKT guarantee for the finite rounded chance problem together with an independent safety guarantee for every accepted stochastic policy. The optimization term has the explicit rollout factor $(1-\gamma)^{-3}$, while {$K_{\rm opt}$ collects the local problem-dependent constants described in Appendix~\ref{app:proof-tail-pg}. The validation term has Bernoulli mean-estimation dependence $\widetilde O(m/((1-\gamma)\rho^2))$.} The guarantee is local, and validation may return \textsc{unresolved}.

\section{Numerical Studies}
\label{sec:experiments}
We evaluate the Bellman-certification mechanism on an IEEE 14-bus
energy storage control benchmark~\citep{ieee14bus}. The experiment is a
finite-state CCMDP constructed from a DC power-flow simulator. The state records
storage state of charge, time block, and load regime; actions charge, discharge,
or idle a storage device placed at bus~14. Rewards measure normalized operating
benefit, while the safety cost measures normalized transmission-line overload
severity. The chance constraint limits the probability that the discounted
cumulative overload severity exceeds a prescribed threshold.
Appendix~\ref{app:ieee14-storage-setting} gives the complete experimental
specification.

We compare a practical Bellman-buffered CCMDP selector with a Markov-CMDP
expected-cost surrogate over the same structured policy class. The latter
replaces the original chance constraint $q_P(\pi)
:=
\mathbb{P}_{P,\pi,\mu}\!\left(C_\pi>d\right)
\leq \delta,
\
C_\pi:=\sum_{t=0}^{\infty}\gamma^t c(s_t,a_t)$, by the sufficient condition $\mathbb{E}_{P,\pi,\mu}[C_\pi]\leq \delta d$ obtained from Markov's inequality. We also report the true-model
chance-constrained optimum over the same policy class as an oracle reference.
The practical Bellman buffer used in this numerical study has the same leading
$N^{-1/2}$ dependence as the theoretical concentration radius but uses a
calibrated constant; its precise form is reported in
Appendix~\ref{app:ieee14-storage-setting}. Thus, the experiment illustrates the
Bellman-certification mechanism of Theorem~\ref{thm:scalar-prob}. All learned policies are
selected using sampled transition models and are evaluated only afterward under
the true finite simulator.

Figure~\ref{fig:ieee14-storage} summarizes the resulting
safety--performance geometry. Panel~B shows the reward--violation landscape of
the structured policy class. As the number $N$ of transition samples per row
increases, the average performance of the Bellman-selected policies moves
toward the chance-constrained oracle. Panel~C makes this behavior explicit
through the optimality gap
\[
J^\star_{\mathrm{chance}}-J(\widehat{\pi}_N).
\]
The Bellman selector is conservative when data are scarce, but this statistical
conservatism decreases with additional transition samples. In contrast, the
Markov-CMDP expected-cost surrogate retains a nonzero gap even under the true
transition model. Panel~D explains this persistent gap: many policies satisfy
the original chance constraint $q_P(\pi)\leq\delta$ while violating the
stronger sufficient condition $\mathbb{E}[C_\pi]\leq\delta d$. Hence the two
methods exhibit qualitatively different forms of conservatism: finite-sample
statistical conservatism for the Bellman selector and structural conservatism
for the expected-cost surrogate.

\begin{figure}[t]
    \centering
    \includegraphics[width=\linewidth]{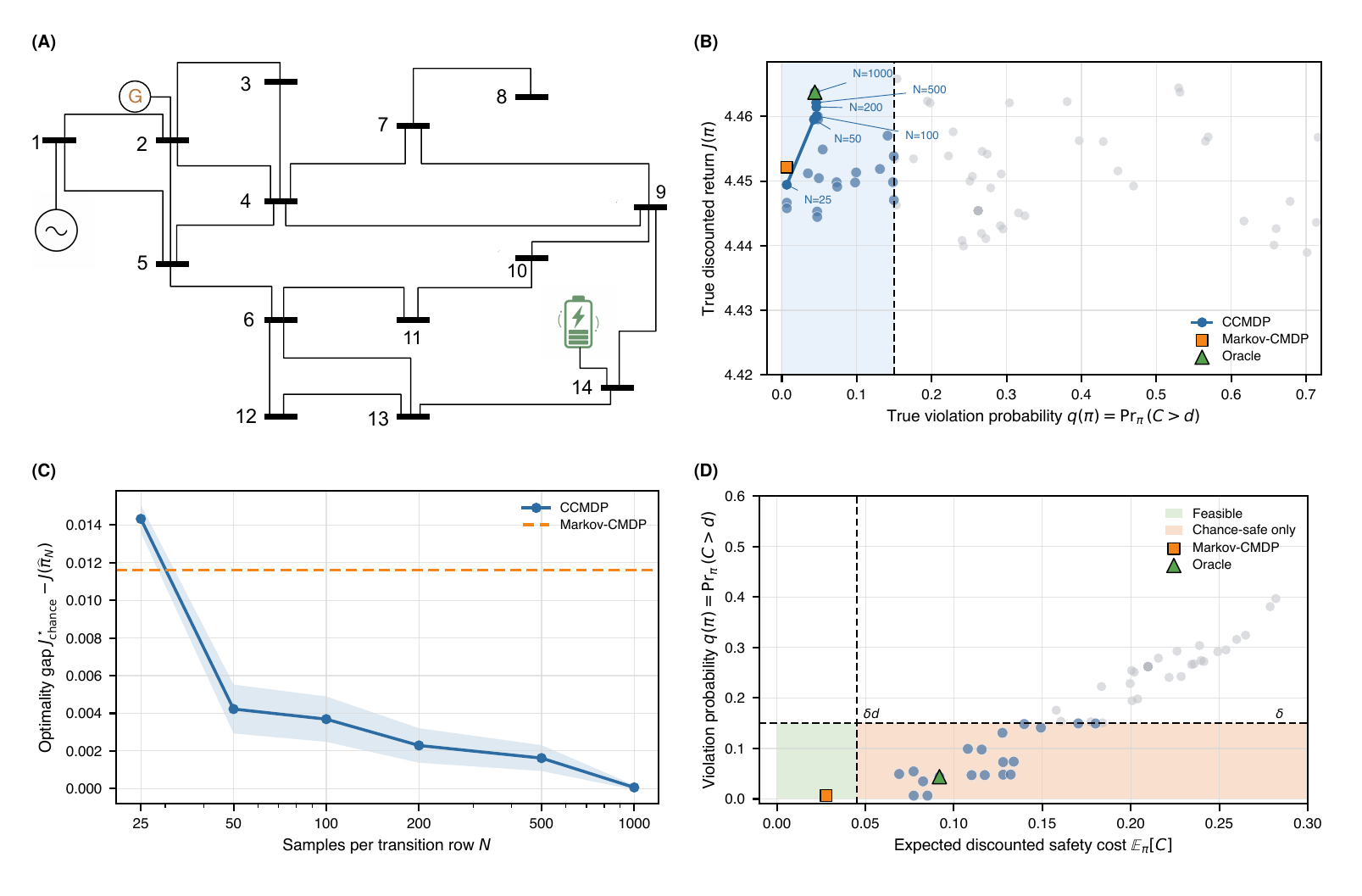}
    \caption{\textbf{A:} IEEE 14-bus DC network with the battery energy-storage system located at bus~14.
\textbf{B:} Reward--violation landscape over the structured policy class.
The labeled CCMDP points show the mean performance of the practical
Bellman-buffered selector over repeated empirical-model trials as the number
$N$ of transition samples per row increases; the star denotes the
chance-constrained oracle, and the square denotes the Markov-CMDP expected-cost
surrogate.
\textbf{C:} Optimality gap
$J^\star_{\mathrm{chance}}-J(\widehat{\pi}_N)$ versus $N$.
The shaded band denotes the mean plus or minus $1.96$ standard errors across
$60$ independent empirical-model trials, while the dashed Markov-CMDP line is
the true-model expected-cost-surrogate benchmark.
\textbf{D:} Violation probability versus expected discounted safety cost over
the same policy class. The Markov condition
$\mathbb{E}[C_\pi]\leq\delta d$ is sufficient for
$q_P(\pi)\leq\delta$ but excludes many policies that are feasible for the
original chance constraint, explaining its persistent performance gap.
Violation probabilities in Panels~B and~D are estimated by Monte Carlo
simulation under the true transition kernel.
    }
    \label{fig:ieee14-storage}
\end{figure}

\section{Conclusion}
\label{sec:conclusion}
We introduced Bellman distributional certificates for learning infinite-horizon CCMDPs. The certificate turns chance safety into local violation-probability Bellman updates. We also proved a lower bound separating the reward-learning term from the chance-boundary certification term. Under the fixed bounded-successor-support and certified-planning conditions in Assumption~\ref{ass:sharp-certified-planning}, the model-based upper bound achieves exact safety and matches these primary dependences up to logarithmic factors. We developed a trajectory-based stochastic policy-gradient route and evaluated the resulting safety interface on both a synthetic CCMDP and an IEEE 14-bus energy-storage control benchmark.

\paragraph{Limitations and Future Directions.}
\label{sec:limitations}
The model-based guarantee is limited to deterministic policies, assumes a fixed bound on the number of successors of every state--action pair, and relies on the certified planning oracle in Assumption~\ref{ass:sharp-certified-planning}. The model-free guarantee applies to stochastic policies, but provides only {an approximate KKT-residual guarantee} and may return \textsc{unresolved} after validation. Extending the model-based analysis to broader policy classes, transition structures, and concrete solvers, and obtaining global model-free guarantees, are left for future work.

\bibliographystyle{plainnat}
\bibliography{ref.bib}

\newpage
\appendix
\setcounter{figure}{0}
\setcounter{table}{0}
\renewcommand{\thefigure}{A.\arabic{figure}}
\renewcommand{\thetable}{A.\arabic{table}}
\renewcommand{\theHfigure}{appendix.figure.\arabic{figure}}
\renewcommand{\theHtable}{appendix.table.\arabic{table}}

\section*{Roadmap for the appendix}
The Appendix is organized as follows:
\begin{itemize}
    \item Appendix~\ref{app:related} provides additional related work on CMDPs, CCMDPs, risk-sensitive RL, distributional RL, and post-selection certification.

    \item Appendix~\ref{app:certificate-parameter-sufficiency-proof} proves the Markov sufficiency of the rounded remaining-budget state and the Bellman recursion for finite-prefix violation probabilities.

    \item Appendix~\ref{app:tail-certification} proves the conservative rounding lemma that connects the finite rounded violation event to the original infinite-horizon chance constraint.

    \item Appendix~\ref{app:certified-safety-proof} establishes simultaneous KL row coverage and proves the certified-safety proposition.

    \item Appendix~\ref{app:scalar-prob-proof} proves the sample-complexity upper bound for Bellman-certified model-based learning.

    \item Appendix~\ref{app:model-based-lower-bounds} proves the model-based CCMDP lower bound.

    \item Appendix~\ref{app:deterministic-hardness} illustrates the computational hardness of exact deterministic planning for CCMDPs, showing that exact reward maximization over deterministic policies under a discounted chance constraint is $\mathsf{NP}$-hard.

    \item Appendix~\ref{app:proof-tail-pg} presents the model-free trajectory-certificate analysis, including the policy-gradient oracle identities and the proof of the model-free sample-complexity guarantee.

    \item Appendix~\ref{sec:exp_details} provides concise experimental settings for the synthetic diagnostics and the IEEE 14-bus storage-control experiment.
\end{itemize}

\section{Related Work}
\label{app:related}

Our work contributes to a few key literatures within the safe RL and theoretical RL community. We discuss each in turn below.

\paragraph{Safe RL and Constrained MDPs.}
Safe reinforcement learning studies sequential decision-making problems in which the learner must optimize reward while satisfying safety requirements~\citep{GarciaFernandez2015SafeRLSurvey,Wachi2024survey}. 
A large part of this literature builds on constrained Markov decision processes, where safety is modeled through constraints on expected cumulative costs~\citep{Derman1972ConstrainedChains,BeutlerRoss1985ConstrainedChains,Altman1999ConstrainedMDPs}. 
This formulation has enabled linear-programming and dynamic-programming characterizations in known models, as well as Lagrangian, primal-dual, policy-gradient, and Lyapunov-based methods for learning and policy optimization~\citep{Altman1999ConstrainedMDPs,Achiam2017CMDP_policy_opt,Chow2018Lyapunov,Paternain2019AdditiveProperty,Tessler2019RCPO,Ding2020NPGPDCMDP}. 
It has also supported finite-sample analyses for CMDP learning under exploration, generative-model, or primal-dual settings~\citep{efroni2020exploration,Vaswani2022CMDPSampleComplexity,Ding2021OPDOP,Buckley2025MultipleCMDP}. 
However, expected-cost constraints do not directly certify trajectory-level reliability. A policy may have small expected cost while still assigning non-negligible probability to unsafe trajectories, which motivates risk-sensitive, percentile, and chance-constrained formulations~\citep{haskell2013stochastic,borkar2014risk,Chow2018Percentile}. 
In contrast, our work focuses on chance constraints, where the safety requirement is imposed on the probability of violating a cumulative-cost threshold. This distinction changes both the algorithmic and statistical nature of the problem: the learner must control a tail-probability event rather than a first moment.

\paragraph{Chance-Constrained MDPs.}
Chance-constrained MDPs provide a direct way to model trajectory-level safety by requiring the probability of cumulative cost violation to stay below a prescribed threshold~\citep{Chow2018Percentile,Ono2015CCDP,Chen2024ProbabilisticConstraint,Shen2024FlippingCCMDP}, which are widely adopted in various practical scenarios \citep{DallAnese2017ChanceOPF,Zhao2018IntermodalChance,Beraldi2022EnhancedIndexation}. 
This formulation is substantially harder than expected-cost CMDPs: the constraint depends on the distribution of the cumulative cost around a threshold, the feasible set is generally nonconvex, and small model errors can change feasibility by moving probability mass across the threshold~\citep{haskell2013stochastic,borkar2014risk,delage2010percentile,Chow2018Percentile,Shen2024FlippingCCMDP,Yi2025LCSS}. 
Known-model approaches address these difficulties through budget augmentation, dynamic programming over cost distributions, or search-based planning in fully or partially observable models~\citep{Ono2015CCDP,Santana2016RAO,AlyassiKhonji2023CCSSP}. 
Learning and control approaches often use chance-constraint approximations, model-predictive control, actor-critic or Lagrangian updates, probabilistic-constraint gradients, or risk/percentile surrogates such as CVaR~\citep{Nemirovski2007ConvexApprox,Pfrommer2022safe,Peng2021MBCC,Peng2021CCRL_application,Giuseppi2020CCRL,chow2015risk,tamar2015optimizing,Chow2018Percentile,Chen2024ProbabilisticConstraint}. 
These methods are useful computationally or empirically, but they typically do not give finite-sample guarantees for the original chance event after a policy has been selected from data. The closest deterministic CCMDP learning result estimates policy-level violation probabilities for stationary deterministic policies~\citep{Yi2025LCSS}. 
Our work keeps the chance constraint itself as the certified object. On the model-based side, we use Bellman distributional certificates to certify the policy returned by a finite planner; on the model-free side, we optimize and validate the rounded violation probability directly from trajectories.

\paragraph{Finite-Sample Analysis for Model-Based Algorithms.}
Our model-based algorithm follows the standard model-based RL approach, where the learner estimates the transition model from a generative model and then plans in the empirical model. Model-based methods for discounted MDPs have been extensively studied~\citep{kearns2002sparse,Azar2013minimaxPAC,sidford2018near,agarwal2020model,li2020breaking}, achieving the minimax-optimal sample complexity \(\widetilde O(|\Sset||\Aset|\eps^{-2}(1-\gamma)^{-3})\) in the generative-model setting. Finite-sample guarantees have also been developed for CMDPs with expected cumulative cost constraints~\citep{efroni2020exploration,Ding2021OPDOP,Vaswani2022CMDPSampleComplexity,Buckley2025MultipleCMDP}; in this setting, the leading sample complexity has the same \(|\Sset||\Aset|\eps^{-2}(1-\gamma)^{-3}\) dependence, while exact constraint satisfaction typically requires a strict-feasibility margin. Chance-constrained MDPs are harder because feasibility is determined by a tail-probability event rather than an expected cumulative cost. {The closest deterministic CCMDP learning result estimates policy-level violation probabilities after model learning with sample complexity $\widetilde{\mathcal O}(|\Sset|^2|\Aset|(1-\gamma)^{-4}(\eps^{-2}+\rho^{-2}))$ when translated to our notation \citep{Yi2025LCSS}. In contrast, under fixed bounded successor support and access to a certified planning oracle, our upper bound has primary dependence $\widetilde O(|\Sset||\Aset|[(1-\gamma)^{-3}\eps^{-2}+(1-\gamma)^{-1}\rho^{-2}])$. The separate deterministic-policy lower bound has the same primary parameter dependences over the same bounded-successor problem class and uses the same rounded interior comparator.}

\paragraph{Finite-Sample Analysis for Model-Free Algorithms.}
Our model-free algorithm follows the policy-optimization view of RL, where the learner directly updates a policy from sampled trajectories without estimating the transition model~\citep{Sutton2018RLBook}. Model-free methods for safe RL and CMDPs have been widely studied through Lagrangian, primal-dual, natural policy-gradient, reward-constrained, Lyapunov-based, and backward-value-function approaches~\citep{Achiam2017CMDP_policy_opt,Tessler2019RCPO,Chow2018Lyapunov,WachiSui2020SafeRL,Satija2020BackwardValueFunctions,Ding2020NPGPDCMDP,Ding2021OPDOP}. These works provide important algorithmic and finite-sample guarantees for expected-cost constraints, but they do not directly control the probability of a trajectory-level violation event. A parallel line of work studies risk-sensitive or chance-constrained policy learning through CVaR objectives, chance-constraint approximations, model-predictive control, actor-critic updates, or probabilistic-constraint gradients~\citep{borkar2014risk,tamar2015optimizing,chow2015risk,Pfrommer2022safe,Peng2021MBCC,Giuseppi2020CCRL,Peng2021CCRL_application,Chen2024ProbabilisticConstraint}. These methods move closer to event-level safety, but their guarantees typically concern surrogate risks, approximations, asymptotic convergence, or local stationarity rather than finite-sample certification of the original chance constraint after learning. {To our knowledge, our result is the first to combine an explicit finite-sample approximate-KKT bound for the direct rounded chance objective with an independent post-learning certificate for the original chance event.} Technically, our algorithm combines a trajectory-level violation-probability gradient estimator with stochastic augmented-Lagrangian and variance-reduced policy-gradient ideas~\citep{Shen2019HAPG,Yuan2020STORMPG,Zhang2021TSIVRPG,Li2024StocIALM}, and then certifies the final data-dependent policy using an independent validation batch.

\paragraph{Distributional RL and Post-Selection Certification.}
Distributional RL learns the distribution of cumulative return and is often used to improve value estimation or optimize risk-sensitive criteria~\citep{bellemare2017distributional,dabney2018distributional,liang2024bridging}. Our use of ``distributional'' is different: we do not learn the full return distribution or replace the chance constraint with a surrogate risk measure. Instead, the certificate upper bounds the violation probability of the original threshold event. The main statistical issue is post-selection: the final policy is chosen after inspecting data, so fixed-policy validation is not sufficient. Our model-based algorithm addresses this by certifying Bellman violation-probability entries before policy selection, while our model-free algorithm uses an independent validation batch for the final learned policy.

\section{Proof of Lemma~\ref{lem:certificate-parameter-sufficiency}}
\label{app:certificate-parameter-sufficiency-proof}

\begin{proof}
Fix a constraint \(i\). Let \(\mathcal F_h\) be the history up to time \(h\), including \(s_h\) and \(b_h\). Define the future failure event
\begin{equation*}
E_{i,h}
=
\left\{\exists t\in\{h,\ldots,H\}: b_{i,t}<0\right\}.
\end{equation*}
We show by backward induction that its conditional probability depends only on \((s_h,b_h,h)\).

At \(h=H\), no transition remains. Thus
\begin{equation*}
\mathbb P_{P,\pi}(E_{i,H}\mid \mathcal F_H)
=
\ind\{b_{i,H}<0\}.
\end{equation*}
This is the terminal value of \(q^\pi_{P,i}\). Now let $h<H$ and assume the result holds at time \(h+1\). Given \(\mathcal F_h\), the action is $a_h=\pi_h(s_h,b_h)$. The next state follows $P$, and the next budget is
\begin{equation*}
b_{h+1}=B_h(s_h,a_h,b_h).
\end{equation*}
The event from time \(h\) is the event from time \(h+1\) after this update. The induction hypothesis and the tower property give
\begin{equation*}
\begin{aligned}
\mathbb P_{P,\pi}(E_{i,h}\mid\mathcal F_h)
&=
\sum\nolimits_{s'}P(s'\mid s_h,a_h)\,
q^\pi_{P,i}(s',B_h(s_h,a_h,b_h),h+1)  =
q^\pi_{P,i}(s_h,b_h,h),
\end{aligned}
\end{equation*}
where the last equality is the Bellman recursion. The last expression uses the history only through \((s_h,b_h,h)\). This proves the induction step and the lemma.
\end{proof}

\section{Conservative Rounding}
\label{app:tail-certification}

\begin{lemma}[Conservative rounding]
\label{lem:conservative-rounding}
For every policy $\pi\in\Pi_{H,\eta}$ and constraint $i\in[m]$, the original infinite-horizon violation probability is bounded by the rounded finite-prefix violation probability:
\begin{equation}
q_{P,i}(\pi)\le q^{\rm rnd}_{P,i}(\pi).
\end{equation}
\end{lemma}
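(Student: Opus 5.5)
The plan is a pathwise event inclusion: on every trajectory, if the original discounted cost exceeds $d_i$, then the rounded $i$-th budget coordinate becomes negative by time $H$. Taking probabilities under the same law $\mathbb P_{P,\pi,\mu}$ then gives $q_{P,i}(\pi)\le q^{\rm rnd}_{P,i}(\pi)$. Since the policy $\pi\in\Pi_{H,\eta}$ generates both the original trajectory and the rounded budget process on a common probability space, no coupling argument is needed. Fix a trajectory $(s_t,a_t)_{t\ge0}$ and suppose $b_{i,t}\ge0$ for all $t\le H$; I will show that $\sum_t\gamma^tc_i(s_t,a_t)\le d_i$ on this trajectory.

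First, split the cost at the horizon. Because $c_i\in[0,1]$, the tail satisfies $\sum_{t\ge H}\gamma^tc_i\le\gamma^H/(1-\gamma)$. Using $\gamma^H\le e^{-(1-\gamma)H}$ and the choice of $H_\alpha$, this is at most $\alpha_{\rm tail}$. This is the only place where the definition of $H$ enters.

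Second, bound the prefix by the rounded charges. Each step uses $\gamma^hc_i(s_h,a_h)\le\eta_i w_{i,h}(s_h,a_h)$, since the charge is rounded up. If the coordinate is never set to $-1$ through time $H$, every update is an exact subtraction, so $b_{i,H}=b_i^0-\sum_{h<H}w_{i,h}\ge0$. Hence
\[
\sum_{h<H}\gamma^hc_i\le\eta_i\sum_{h<H}w_{i,h}\le\eta_i b_i^0\le d_i-\alpha_{\rm tail},
\]
where the last step uses the floor in $b_i^0$. Adding the tail bound gives total cost at most $d_i$, which is the contrapositive we want.

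The main care point is bookkeeping around the failure value $-1$ and the case $b_i^0<0$. If $b_i^0<0$, the coordinate starts at $-1$, so the rounded event holds trivially and the inequality is immediate. Otherwise I need the convention that "exhausted" means the subtraction would go below $0$, so that surviving coordinates equal $b_i^0$ minus the cumulative charges. I also have to match indices: the charges run over $h=0,\dots,H-1$ and the check is over $t\in\{0,\dots,H\}$, consistent with Lemma~\ref{lem:certificate-parameter-sufficiency}.

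Finally, $q^{\rm rnd}_{P,i}(\pi)$ should be read as $\sum_s\mu(s)q^\pi_{P,i}(s,b^0,0)$, which equals the probability of the finite-prefix rounded violation event. The behavior of $\pi$ after $H$ (its arbitrary continuation) only affects the tail, and the tail bound holds uniformly over it.
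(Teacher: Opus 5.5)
Your proposal is correct and follows the paper's proof essentially line for line. Both argue a pathwise contrapositive: survival of the rounded coordinate through time $H$ forces $\sum_{h<H}\gamma^h c_i\le\eta_i\sum_{h<H}w_{i,h}\le\eta_i b_i^0\le d_i-\alpha_{\rm tail}$, this is combined with the tail bound $\gamma^H/(1-\gamma)\le\alpha_{\rm tail}$, and then probabilities are taken. Your extra bookkeeping (the case $b_i^0<0$, what ``exhausted'' means, and the derivation via $\gamma^H\le e^{-(1-\gamma)H}$) only spells out details the paper leaves implicit.
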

\begin{proof}
Set $H=H_\alpha$. Fix a trajectory and a constraint $i$. Suppose the rounded test does not fail by time $H$. Then
\begin{equation*}
\sum\nolimits_{h=0}^{H-1}w_{i,h}(s_h,a_h)\le b_i^0.
\end{equation*}
Each rounded charge is an upper bound, and $b_i^0$ is rounded down. Hence
\begin{equation*}
\sum\nolimits_{h=0}^{H-1}\gamma^h c_i(s_h,a_h)
\le
\eta_i\sum\nolimits_{h=0}^{H-1}w_{i,h}(s_h,a_h)
\le
\eta_i b_i^0
\le
d_i-\alpha_{\rm tail}.
\end{equation*}
The cost after time $H$ is at most
\begin{equation*}
\sum\nolimits_{h=H}^{\infty}\gamma^h c_i(s_h,a_h)
\le
\sum\nolimits_{h=H}^{\infty}\gamma^h
=
\frac{\gamma^H}{1-\gamma}
\le
\alpha_{\rm tail}.
\end{equation*}
The two bounds give $\sum_{h=0}^{\infty}\gamma^h c_i(s_h,a_h)\le d_i$. Thus, if the original constraint fails, the rounded test must also fail. Taking probabilities gives $q_{P,i}(\pi)\le q^{\rm rnd}_{P,i}(\pi)$.
\end{proof}

\section{Proof of Proposition~\ref{prop:certified-safety}}
\label{app:certified-safety-proof}

We first show that every true transition row belongs to its KL confidence set and then apply Bellman monotonicity.

\begin{lemma}[Simultaneous KL row coverage]
\label{lem:kl-row-coverage}
Under Assumption~\ref{ass:sharp-certified-planning}, the confidence sets in \eqref{eq:kl-confidence-set} satisfy
\begin{equation}
\mathbb P\!\left(
P(\cdot\mid s,a)\in\mathcal C_{s,a}
\text{ for every }(s,a)
\right)
\ge1-\zeta.
\end{equation}
\end{lemma}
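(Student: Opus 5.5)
Fix a row $(s,a)$ and write $p=P(\cdot\mid s,a)$, whose support $S_{s,a}=\operatorname{supp}p$ has size at most $d_0$ by Assumption~\ref{ass:sharp-certified-planning}. The empirical row $\widehat P_{s,a}$ built from $n$ i.i.d.\ draws is supported in $S_{s,a}$, so it is the empirical type of a multinomial on at most $d_0$ outcomes. The plan is to prove the single-row bound
\begin{equation*}
\mathbb P\!\left(\mathrm{KL}(\widehat P_{s,a}\,\|\,p)>\kappa_n\right)\le \frac{\zeta}{|\Sset||\Aset|}
\end{equation*}
by the method of types, and then take a union bound over the $|\Sset||\Aset|$ rows. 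Because each row's samples are drawn independently, no dependence issue arises; only the union bound over rows is needed. It is this union bound that produces the $\log(|\Sset||\Aset|/\zeta)$ term in $\kappa_n$. Note that the condition $\mathrm{KL}(\widehat P_{s,a}\|p)\le\kappa_n$ is exactly $p\in\mathcal C_{s,a}$. Also, $p\in\Delta(\Sset)$ automatically, so membership is only the KL condition.

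For the single-row bound, restrict to the alphabet $S_{s,a}$ with $k=|S_{s,a}|\le d_0$. The standard type-class estimate gives, for every type $Q$ with denominator $n$,
\begin{equation*}
\mathbb P(\widehat P_{s,a}=Q)\le \exp\!\bigl(-n\,\mathrm{KL}(Q\|p)\bigr).
\end{equation*}
This bound has no polynomial prefactor: it follows from $|T(Q)|\le e^{nH(Q)}$ and $p^n(x)=e^{-n(H(Q)+\mathrm{KL}(Q\|p))}$ for any sequence $x$ of type $Q$. I then sum over all types with $\mathrm{KL}(Q\|p)>\kappa_n$. The number of types is $\binom{n+k-1}{k-1}\le (n+1)^{k-1}\le (n+1)^{d_0-1}$, which gives
\begin{equation*}
\mathbb P\!\left(\mathrm{KL}(\widehat P_{s,a}\|p)>\kappa_n\right)\le (n+1)^{d_0-1}e^{-n\kappa_n}.
\end{equation*}
Substituting $n\kappa_n=(d_0-1)\log(n+1)+\log(|\Sset||\Aset|/\zeta)$, the right-hand side equals exactly $\zeta/(|\Sset||\Aset|)$. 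This choice of $\kappa_n$ is what makes the KL radius depend on $d_0$ and not on $|\Sset|$.

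The only delicate point is the type count. A naive count over all of $\Sset$ would give $(n+1)^{|\Sset|-1}$, which would spoil the rate. So I must use the fact that $\widehat P_{s,a}$ lives on the fixed but unknown support $S_{s,a}$ almost surely. The count is carried out on that support, and it is valid even though the learner does not know $S_{s,a}$, because the event being bounded does not require the learner to know it. I also need the convention $\mathrm{KL}(Q\|p)<\infty$ for $Q\ll p$, which holds here. A final union bound over the rows gives a coverage probability of at least $1-\zeta$.
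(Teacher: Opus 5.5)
Your proposal is correct and follows the paper's argument. You bound each row by the method of types on the true support, $\mathbb P(\mathrm{KL}(\widehat P_{s,a}\|P_{s,a})>\kappa_n)\le(n+1)^{d_0-1}e^{-n\kappa_n}=\zeta/(|\Sset||\Aset|)$, and then take a union bound over the $|\Sset||\Aset|$ rows; the paper cites this tail bound, while you derive it from the type-class estimate and the count $\binom{n+k-1}{k-1}\le(n+1)^{k-1}$. Your point that the type count runs over the unknown support $\operatorname{supp}P(\cdot\mid s,a)$ is exactly the paper's remark that unknown successor identities add no factor.
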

For one row whose true support has size at most $d_0$, the method-of-types bound gives
\(
\mathbb P(\mathrm{KL}(\widehat P_{s,a}\|P_{s,a})>x)
\le(n+1)^{d_0-1}e^{-nx}
\); see, e.g., \citet{Agrawal2020MultinomialKL}. Substituting $x=\kappa_n$ and taking a union bound over the $|\Sset||\Aset|$ original rows proves the lemma. Notice that the unknown identities of the successors introduce no additional factor because every empirical type is supported on the true row support.

Fix any realized dataset in the event of Lemma~\ref{lem:kl-row-coverage}, any policy $\pi\in\Pi_{H,\eta}$, and any constraint $i$. We show that the robust Bellman table dominates the true rounded table.

We use backward induction. At $h=H$, both tables equal $\ind\{b_i<0\}$. Suppose the bound holds at $h+1$. For $a=\pi_h(s,b)$,
\begin{equation*}
\begin{aligned}
P(\cdot\mid s,a)^\top q_{P,i}^{\pi}(\cdot,B_h(s,a,b),h+1)
&\le P(\cdot\mid s,a)^\top\overline q_i^\pi(\cdot,B_h(s,a,b),h+1)\\
&\le \sup_{p\in\mathcal C_{s,a}}
p^\top\overline q_i^\pi(\cdot,B_h(s,a,b),h+1).
\end{aligned}
\end{equation*}
The first inequality uses the induction hypothesis, and the second uses $P_{s,a}\in\mathcal C_{s,a}$. Induction and averaging over $s_0\sim\mu$ give $q^{\rm rnd}_{P,i}(\pi)\le\overline q_{i,H,\eta}(\pi)$. If the certificate accepts $\pi$, then
\begin{equation*}
q_{P,i}(\pi)
\le q^{\rm rnd}_{P,i}(\pi)
\le\overline q_{i,H,\eta}(\pi)
\le\delta_i.
\end{equation*}
The argument is deterministic on the row-coverage event and therefore holds simultaneously for every policy and constraint. This proves the proposition.\hfill\(\square\)

\section{Proof of Theorem~\ref{thm:scalar-prob}}
\label{app:scalar-prob-proof}

The proof has four steps. We record the finite rounded problem, establish a uniform trajectory-transfer bound, prove safety, and then bound reward loss and sample size.

Let
\[
H=H_\alpha
=
\left\lceil
\frac{\log(1/((1-\gamma)\alpha_{\rm tail}))}{1-\gamma}
\right\rceil,
\qquad
R_H=\sum_{h=0}^{H-1}\gamma^h,
\qquad
|\mathcal B|=\prod_{i=1}^m N_i(\alpha_{\rm tail},\eta_i),
\]
Here $N_i(\alpha_{\rm tail},\eta_i)$ is the number of values in budget coordinate $i$, including failure. Fix a maximizer $\pi_\rho^\star$ of $V_\rho^\star$. Let $J_{P,H}(\pi):=\mathbb E_{P,\pi,\mu}[\sum_{h=0}^{H-1}\gamma^h r(S_h,A_h)]$ and set $r_n:=\sqrt{H\kappa_n/2}$.
Under the theorem's choice \(\alpha_{\rm tail}=\eta_i=\bar\eta_\eps\), we have \(H=\widetilde O((1-\gamma)^{-1})\), \(R_H\le(1-\gamma)^{-1}\), and \(\gamma^H/(1-\gamma)\le\bar\eta_\eps\le\eps/8\).

\par\medskip
\noindent\textbf{Step 1: Record the rounded state space.}\par
\nopagebreak[4]
Coordinate $i$ has at most \(\max\{1,\lfloor(d_i-\bar\eta_\eps)/\bar\eta_\eps\rfloor+2\}\) values. This finiteness makes the certified planning problem well defined, but $|\mathcal B|$ does not enter the confidence event because the same original-row KL set is valid for all time and budget states. The planner may still take time proportional to $|\mathcal B|$.

\par\medskip
\noindent\textbf{Step 2: Transfer the shared empirical model uniformly.}\par
\nopagebreak[4]
Let $\widehat q_i^\pi$ be the rounded violation probability under the shared empirical kernel $\widehat P$. The row event of Lemma~\ref{lem:kl-row-coverage} is independent of the policy selected from the data. On this event, the following deterministic transfer bound therefore holds simultaneously over the complete policy class.
\begin{lemma}[Uniform trajectory transfer]
\label{lem:uniform-kl-transfer}
On the event of Lemma~\ref{lem:kl-row-coverage}, every $\pi\in\Pi_{H,\eta}$ and $i\in[m]$ satisfy
\begin{equation}
\begin{aligned}
|\widehat q_i^\pi-q^{\rm rnd}_{P,i}(\pi)|&\le r_n,
&0\le\overline q_{i,H,\eta}(\pi)-q^{\rm rnd}_{P,i}(\pi)&\le2r_n,\\
|\widehat J(\pi)-J_{P,H}(\pi)|&\le R_Hr_n.
\end{aligned}
\end{equation}
\end{lemma}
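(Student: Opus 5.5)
The plan is to lift everything to the finite-horizon path space of the augmented process \((s_h,b_h,h)\) and compare path laws under \(\widehat P\), \(P\), and the adversarial kernel implicit in the robust recursion. Fix \(\pi\in\Pi_{H,\eta}\). Because \(\pi\) is deterministic and \(b_{h+1}=B_h(s_h,a_h,b_h)\) is a deterministic function of the past, a length-\(H\) trajectory is determined by \((s_0,\dots,s_H)\). Under any kernel \(Q\) its law \(\mathbb P^\pi_Q\) factorizes as \(\mu(s_0)\prod_{h<H}Q(s_{h+1}\mid s_h,a_h)\) with \(a_h=\pi_h(s_h,b_h)\). The rounded violation indicator \(\ind\{b_{i,H}<0\}\) is a \([0,1]\)-valued function of the path, and the truncated discounted reward \(\sum_{h<H}\gamma^h r(s_h,a_h)\) lies in \([0,R_H]\). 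Hence \(|\widehat q_i^\pi-q^{\rm rnd}_{P,i}(\pi)|\le \mathrm{TV}(\mathbb P^\pi_{\widehat P},\mathbb P^\pi_P)\), and \(|\widehat J(\pi)-J_{P,H}(\pi)|\le R_H\,\mathrm{TV}(\cdot,\cdot)\). Here I read \(\widehat J\) as the \(H\)-step empirical value, which is what \eqref{eq:reward-backup} computes.

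The key step is the chain rule for KL on path measures:
\[
\mathrm{KL}(\mathbb P^\pi_{\widehat P}\|\mathbb P^\pi_P)=\mathbb E_{\mathbb P^\pi_{\widehat P}}\Big[\sum_{h<H}\mathrm{KL}\big(\widehat P_{s_h,a_h}\|P_{s_h,a_h}\big)\Big]\le H\kappa_n .
\]
The inequality uses the event of Lemma~\ref{lem:kl-row-coverage}, which says \(P_{s,a}\in\mathcal C_{s,a}\), i.e.\ \(\mathrm{KL}(\widehat P_{s,a}\|P_{s,a})\le\kappa_n\) for every original row. The row bound is uniform over rows and does not depend on the time or budget coordinates, so the sum is bounded pointwise along every path and no union bound over policies is needed. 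Pinsker's inequality then gives \(\mathrm{TV}\le\sqrt{H\kappa_n/2}=r_n\), which yields the first and third inequalities simultaneously for all \(\pi\) and \(i\).

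For the certificate, the lower bound \(\overline q_{i,H,\eta}(\pi)\ge q^{\rm rnd}_{P,i}(\pi)\) is exactly the backward induction already done in the proof of Proposition~\ref{prop:certified-safety}. For the upper bound:
\begin{enumerate}
\item Realize the supremum in \eqref{eq:pessimistic-backup} by a maximizing row \(p^\star_{s,b,h}\in\mathcal C_{s,a}\); it exists since \(\mathcal C_{s,a}\) is compact, being a sublevel set of a lower semicontinuous function of \(p\).
\item This gives a time- and budget-inhomogeneous kernel \(Q\) on the augmented process, and by construction \(\overline q_{i,H,\eta}(\pi)=\mathbb P^\pi_Q(b_{i,H}<0)\). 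The maximizer may depend on \(i\), but we fix \(i\) throughout.
\item Each step satisfies \(\mathrm{KL}(\widehat P_{s,a}\|p^\star)\le\kappa_n\). Reading the path law as a Markov chain on \((s,b,h)\), the same chain-rule argument gives \(\mathrm{KL}(\mathbb P^\pi_{\widehat P}\|\mathbb P^\pi_Q)\le H\kappa_n\), hence \(|\overline q_{i,H,\eta}(\pi)-\widehat q_i^\pi|\le r_n\).
\item The triangle inequality with the first bound gives \(\overline q-q^{\rm rnd}\le 2r_n\).
\end{enumerate}

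The main obstacle is step 3: justifying the chain rule when the adversarial row depends on \((b,h)\) and not only on \((s,a)\). The path law must be written over the full augmented trajectory \((s_h,b_h)_{h\le H}\), which is still a deterministic function of the state sequence, and each conditional KL must be checked to stay \(\le\kappa_n\) at every visited node. A secondary point is absolute continuity. The reverse-KL direction \(\mathrm{KL}(\widehat P\|\cdot)\) is finite only when the candidate row covers \(\operatorname{supp}\widehat P_{s,a}\), and this is exactly the orientation Pinsker needs here. Unobserved successors of \(P\) or \(Q\) cause no problem, because we only compare expectations of bounded functions.
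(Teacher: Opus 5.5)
Your proposal is correct and follows the paper's proof essentially step for step. You bound the KL divergence between augmented trajectory laws by $H\kappa_n$ via the chain rule on the row-coverage event and apply Pinsker to the failure indicator and the truncated reward. You then realize the robust suprema by a nonstationary kernel with rows in $\mathcal C_{s,a}$ to get $|\overline q_{i,H,\eta}(\pi)-\widehat q_i^\pi|\le r_n$, and close with Proposition~\ref{prop:certified-safety} and the triangle inequality. The only cosmetic difference is that the paper also records $\overline q_{i,H,\eta}(\pi)\ge\widehat q_i^\pi$ from $\widehat P_{s,a}\in\mathcal C_{s,a}$, which your two-sided bound shows is not needed for the stated inequalities.
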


The sample condition in the theorem permits an integer $n=D/(|\Sset||\Aset|)$, up to rounding, such that $r_n\le\rho/8$ and $R_Hr_n\le\eps/8$. Applying Lemma~\ref{lem:uniform-kl-transfer} gives, simultaneously for $\pi_\rho^\star$ and the data-dependent output $\widehat\pi$,
\[
\left|\widehat J(\pi)-J_{P,H}(\pi)\right|\le\frac{\eps}{8},
\qquad
0\le
\overline q_{i,H,\eta}(\pi)-q^{\rm rnd}_{P,i}(\pi)
\le\frac{\rho}{4}.
\]

\par\medskip
\noindent\textbf{Step 3: Prove feasibility and safety.}\par
\nopagebreak[4]

The rounded comparator satisfies $q^{\rm rnd}_{P,i}(\pi_\rho^\star)\le\delta_i-\rho$. The uniform transfer bound therefore gives
\[
\overline q_{i,H,\eta}(\pi_\rho^\star)
\le
\delta_i-\rho+\frac{\rho}{4}
\le
\delta_i-\frac{3\rho}{4},
\qquad i\in[m].
\]
Thus $\pi_\rho^\star$ is feasible for the tightened empirical problem, so Algorithm~\ref{alg:three-stage} does not return \textsc{unresolved} on the row-coverage event. Its robust feasibility and Lemma~\ref{lem:uniform-kl-transfer} give, for each $i$,
\[
q^{\rm rnd}_{P,i}(\widehat\pi)
\le
\overline q_{i,H,\eta}(\widehat\pi)
\le
\delta_i-\frac{3\rho}{4}
\le
\delta_i.
\]
Lemma~\ref{lem:conservative-rounding} now gives $q_{P,i}(\widehat\pi)\le\delta_i$.

\par\medskip
\noindent\textbf{Step 4: Bound reward loss and count samples.}\par
\nopagebreak[4]
Now compare rewards. Nonnegative rewards give $J_P(\widehat\pi)\ge J_{P,H}(\widehat\pi)$. The reward after time \(H\) is at most \(\gamma^H/(1-\gamma)\le\eps/8\). Also, \(\pi_\rho^\star\) is feasible for the tightened problem, so empirical $\xi$-optimality applies. Hence
\[
\begin{aligned}
J_P(\widehat\pi)
&\ge J_{P,H}(\widehat\pi)\ge \widehat J(\widehat\pi)-\frac{\eps}{8}\ge \widehat J(\pi_\rho^\star)-\xi-\frac{\eps}{8}\ge J_{P,H}(\pi_\rho^\star)-\xi-\frac{\eps}{4}\\
&\ge J_P(\pi_\rho^\star)-\xi-\frac{3\eps}{8}
\ge V_\rho^\star-\xi-\eps .
\end{aligned}
\]
The next-to-last inequality uses the reward-tail bound. The last one uses $J_P(\pi_\rho^\star)=V_\rho^\star$ and $3\eps/8\le\eps$.

Finally, $r_n\le\min\{\rho/8,\eps/(8R_H)\}$ follows from $n=\widetilde O(d_0H[\rho^{-2}+R_H^2\eps^{-2}])$. The algorithm draws this batch once at each original state--action row, so $D=|\Sset||\Aset|n$. Substituting $H=\widetilde O((1-\gamma)^{-1})$ and $R_H\le(1-\gamma)^{-1}$ gives \eqref{eq:scalar-total-samples}.\hfill\(\square\)

\subsection{Proof of Lemma~\ref{lem:uniform-kl-transfer}}
Fix a realized dataset in the event of Lemma~\ref{lem:kl-row-coverage}. For a policy $\pi$, let $\mathbb P_{M,\pi}^{0:H}$ denote the law of the augmented trajectory through time $H$ under transition kernel $M$. The KL chain rule and \eqref{eq:kl-confidence-set} give
\[
\mathrm{KL}\!\left(
\mathbb P_{\widehat P,\pi}^{0:H}
\,\middle\|\,
\mathbb P_{P,\pi}^{0:H}
\right)
=
\sum_{h=0}^{H-1}
\mathbb E_{\widehat P,\pi}\!\left[
\mathrm{KL}(\widehat P_{S_h,A_h}\|P_{S_h,A_h})
\right]
\le H\kappa_n.
\]
Pinsker's inequality bounds the total variation distance between these trajectory laws by $r_n$. Applying it to the rounded failure indicator gives $|\widehat q_i^\pi-q^{\rm rnd}_{P,i}(\pi)|\le r_n$. Applying it to the truncated discounted reward, whose range is contained in $[0,R_H]$, gives $|\widehat J(\pi)-J_{P,H}(\pi)|\le R_Hr_n$.

It remains to control the conservatism of the robust certificate. Rectangularity of the row-wise sets implies that the backward suprema in \eqref{eq:pessimistic-backup} are attained by a possibly nonstationary augmented-state kernel $Q_i^\pi$ satisfying $Q_{i,h}^\pi(\cdot\mid s,b)\in\mathcal C_{s,\pi_h(s,b)}$ at every triple $(s,b,h)$. Its rounded failure probability equals $\overline q_{i,H,\eta}(\pi)$. A second application of the KL chain rule yields
\[
\mathrm{KL}\!\left(
\mathbb P_{\widehat P,\pi}^{0:H}
\,\middle\|\,
\mathbb P_{Q_i^\pi,\pi}^{0:H}
\right)
\le H\kappa_n.
\]
Because $\widehat P_{s,a}\in\mathcal C_{s,a}$, robust maximization and Pinsker give $0\le\overline q_{i,H,\eta}(\pi)-\widehat q_i^\pi\le r_n$. Because $P_{s,a}\in\mathcal C_{s,a}$, Proposition~\ref{prop:certified-safety} gives $q^{\rm rnd}_{P,i}(\pi)\le\overline q_{i,H,\eta}(\pi)$. Combining these inequalities with $|\widehat q_i^\pi-q^{\rm rnd}_{P,i}(\pi)|\le r_n$ proves the safety claim.

All arguments are deterministic after conditioning on the row event and use no policy union bound. Hence they hold simultaneously for every $\pi\in\Pi_{H,\eta}$, including a data-dependent output policy.\hfill\(\square\)

{
\paragraph{Relation to the original CCMDP optimum.}
The rounded finite event is a conservative certificate, not a different safety notion. Lemma~\ref{lem:conservative-rounding} gives \(q_{P,i}(\pi)\le q^{\rm rnd}_{P,i}(\pi)\), so any policy certified by the rounded event is safe for the original infinite-horizon chance constraint. If $C_i^\pi:=\sum_{t\ge0}\gamma^tc_i(S_t,A_t)$ and $\Delta_i:=\alpha_{\rm tail}+H_\alpha\eta_i$, the same rounding inequalities imply
\[
q_{P,i}(\pi)
\le q^{\rm rnd}_{P,i}(\pi)
\le
\mathbb P_{P,\pi,\mu}\!\left(C_i^\pi>d_i-\Delta_i\right).
\]
Thus a true $2\rho$-interior comparator also belongs to the rounded $\rho$-interior set whenever its probability mass on $(d_i-\Delta_i,d_i]$ is at most $\rho$ for every constraint. Consequently, if $V_{2\rho}^{\star,\mathrm{true}}:=\max\{J_P(\pi):q_{P,i}(\pi)\le\delta_i-2\rho,\ i\in[m]\}$ has such a maximizer, Theorem~\ref{thm:scalar-prob} also guarantees reward at least $V_{2\rho}^{\star,\mathrm{true}}-\eps-\xi$. Without this boundary condition, the rounded and original comparators need not agree, even when the transition kernel is deterministic.
}

\section{Proof of Theorem~\ref{thm:model-based-lower-bounds}}
\label{app:model-based-lower-bounds}

The proof has three steps. We prove the reward lower bound, prove the safety lower bound, and add the two bounds. Every transition row in both hard families has at most two successors, so the constructions satisfy Assumption~\ref{ass:sharp-certified-planning} for every fixed $d_0\ge2$.

\par\medskip
\noindent\textbf{Step 1: Prove the reward lower bound.}\par
\nopagebreak[4]
Set $H_0=\lceil(1-\gamma)^{-1}\rceil$, $M=|\Sset|-1$, and $B=|\Aset|$. Build $M$ decision states $x_1,\ldots,x_M$ and one absorbing state $z$. The initial state is uniform over the decision states. The reward is one at each $x_j$ and zero at $z$. Set $c\equiv0$, $d=1$, and $\delta=1/2$. Then every policy is safe under both the original and rounded events. For $c_0<1/2$, the rounded $\rho$-interior optimum is the usual reward optimum.

Let $\vartheta=(\vartheta_1,\ldots,\vartheta_M)\in[B]^M$ be hidden. At $x_j$, action $\vartheta_j$ is good. The process returns to $x_j$ with probability
\begin{equation*}
p_s=1-\frac{1}{4H_0}
\quad\text{if }a=\vartheta_j,
\qquad
p_u=p_s-\Delta
\quad\text{otherwise},
\qquad
\Delta=\frac{2048\eps}{H_0^2};
\end{equation*}
and moves to $z$ otherwise. Rewards and costs are known. Thus only transition samples contain information about $\vartheta$. Choose $c_0$ small enough that $\Delta\le1/(4H_0)$. Then $p_u\ge1-1/(2H_0)$, so all probabilities are valid.

Fix any deterministic output policy $\pi$; it may depend on history. Consider the history that starts at $x_j$ and stays there for $t$ transitions. Let $a_{j,t}$ be the action of $\pi$ on this history. Let $S_j(t)$ be the probability of staying at $x_j$ until time $t$, with $S_j(0)=1$. Let $W_j=\sum_{t=0}^{H_0-1}\ind\{a_{j,t}\ne\vartheta_j\}$. Also let $V^\pi(x_j)$ be the value from $x_j$.

The all-good policy has value $V^\star=(1-\gamma p_s)^{-1}$. Since $V^\star=1+\gamma p_sV^\star$, each bad action loses $\gamma\Delta V^\star$. Thus the performance-difference identity gives
\begin{equation*}
V^\star-V^\pi(x_j)
=
\gamma\Delta V^\star
\sum_{t\ge0}\gamma^tS_j(t)
\ind\{a_{j,t}\ne\vartheta_j\}.
\end{equation*}
For $t<H_0$, we have $V^\star\ge H_0/2$, $\gamma^t\ge1/4$, and $S_j(t)\ge p_u^t\ge1/2$. Average over the initial state to get
\begin{equation*}
V^\star-J_P(\pi)
\ge
\frac{\Delta H_0}{32M}\sum_{j=1}^M W_j.
\end{equation*}
If $\pi$ is $\eps$-optimal, the choice of $\Delta$ gives $\sum_jW_j\le MH_0/64$. Decode $\vartheta_j$ as the most common action in $(a_{j,0},\ldots,a_{j,H_0-1})$. Use fixed tie-breaking. A wrong answer requires $W_j\ge H_0/2$. Thus the decoder is correct on at least $31M/32$ coordinates.

A sample from a decision row is Bernoulli with mean $p_s$ or $p_u$. The Bernoulli KL bound gives
\begin{equation*}
\kappa
:=
\max\!\left\{
\mathrm{kl}(\mathrm{Bern}(p_s),\mathrm{Bern}(p_u)),
\mathrm{kl}(\mathrm{Bern}(p_u),\mathrm{Bern}(p_s))
\right\}
\le C H_0\Delta^2
\le C\frac{\eps^2}{H_0^3}.
\end{equation*}
Choose $c_0$ small enough for Lemma~\ref{lem:adaptive-needle}. Draw $\vartheta$ uniformly from $[B]^M$, and draw a hidden index $J$ uniformly from $[M]$. The learner succeeds with probability at least $2/3$. The decoder above then identifies $\vartheta_J$ with probability at least $(2/3)(31/32)>3/5$.

Now draw the hidden actions for all contexts except $J$ from the prior and simulate their samples internally. Send only queries at $J$ to the one-good-row experiment. This gives the same product-prior law. The expected number of sent queries is at most $D/M$. Lemma~\ref{lem:adaptive-needle} gives $D/M\ge cB/\kappa$. Since $M\ge|\Sset|/2$ and $H_0\ge(1-\gamma)^{-1}$,
\begin{equation*}
D
\ge
c\frac{MB}{\kappa}
\ge
c'\frac{|\Sset||\Aset|}{(1-\gamma)^3\eps^2},
\end{equation*}
This is the reward lower bound.

\par\medskip
\noindent\textbf{Step 2: Prove the safety lower bound.}\par
\nopagebreak[4]
The next lemma gives the safety lower bound. We prove it in Appendix~\ref{app:chance-boundary-proof}.
\begin{lemma}[Chance-boundary testing lower bound]
\label{lem:chance-boundary-lower}
There are universal constants $c,c_0>0$ such that, for every $\gamma\in[1/2,1)$, $\rho\in(0,c_0)$, $|\Sset|\ge4$, and $|\Aset|\ge2$, there exists a zero-reward, one-constraint CCMDP family with two successors per row and a nonempty rounded $\rho$-interior feasible set for which any generative-model learner that returns an exactly safe policy in $\PiD$ with probability at least $2/3$ must use, in the worst case,
\begin{equation}
D
\ge
c\,\frac{|\Sset||\Aset|}{(1-\gamma)\rho^2}
\end{equation}
transition samples.
\end{lemma}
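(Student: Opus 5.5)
We prove Lemma~\ref{lem:chance-boundary-lower} by reducing exact safety to identifying, for each of $\Omega(|\Sset||\Aset|)$ hidden rows, which action is safe, where each row carries only an $O(\rho\sqrt{1-\gamma})$ KL-signal per sample. The reduction then ends with Lemma~\ref{lem:adaptive-needle} in the same embedding-by-simulation form as Step~1 of the reward bound.

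\textbf{Construction.} Take $M=|\Sset|-2\ge|\Sset|/2$ decision states $x_1,\dots,x_M$, an absorbing safe state $z$, and an absorbing unsafe state $u$. The initial distribution is uniform over the decision states, the reward is zero, and there is a single constraint. Each state is visited at most once. At $x_j$ the hidden good action $\vartheta_j\in[|\Aset|]$ leads to $u$ with probability $p_-$, while every other action leads to $u$ with probability $p_+$; otherwise the process moves to $z$. Every row therefore has two successors.

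\textbf{Cost and threshold.} The cost is $c(u,\cdot)=1$ and zero elsewhere. Choose $d$ well inside the range $(0,\gamma/(1-\gamma))$ and away from rounding boundaries, so that reaching $u$ violates the constraint under both the original and the rounded events. With $\gamma\ge1/2$ and $\eta=\bar\eta_\eps$, a single visit to $u$ accumulates cost $\gamma/(1-\gamma)\gg d$. Avoiding $u$ gives zero cost, which is safe with room to spare.

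\textbf{Signal size.} To obtain the $(1-\gamma)^{-1}$ factor, let the violation probability be small, of order $\delta\asymp 1-\gamma$, rather than constant. Set $p_\pm=\delta\pm\rho$ with $\delta\asymp(1-\gamma)$ and $\rho\le c_0\delta$. The Bernoulli KL between $p_+$ and $p_-$ is then $\Theta(\rho^2/\delta)=\Theta(\rho^2/(1-\gamma))$. This gives a per-row sample requirement of $|\Aset|(1-\gamma)/\rho^2$, which is the wrong direction.

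To get $|\Aset|/((1-\gamma)\rho^2)$ instead, I would rather make the per-sample signal $\rho^2(1-\gamma)$. Mirror the reward construction: let $x_j$ self-loop with continuation probability near $1-1/(4H_0)$, and let each step leak to $u$ with probability $\delta/H_0\pm\rho/H_0$. The per-visit leak difference is $\rho/H_0$ around a base probability $\Theta(1/H_0)$. The KL is then $\Theta((\rho/H_0)^2\cdot H_0)=\Theta(\rho^2(1-\gamma))$, while the trajectory-level violation probability still differs by $\Theta(\rho)$ across roughly $H_0$ visits. Here $u$ and $z$ are the absorbing exits, so each row has three successors. To keep $d_0=2$, I would merge the exits: from $x_j$ the process either stays at $x_j$ or leaves to a row whose next transition decides between $u$ and $z$. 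This intermediate row is known, with no hidden parameter.

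\textbf{Feasibility and decoding.} Set $\delta$ equal to the violation probability of the all-good policy plus $\rho$, so the rounded $\rho$-interior set is nonempty. Any policy that uses bad actions on more than a constant fraction of the $H_0$ steps at a $\Theta(1)$ fraction of the $x_j$ incurs an extra violation probability exceeding $\rho$ after averaging over $\mu$. This forces the output to be infeasible. Choose the constants so that the stochastic-budget ambiguity is covered: the violation excess from $W_j$ bad steps is at least $c\rho W_j/(H_0 M)$ summed over $j$, derived via the same performance-difference computation as in Step~1, applied to the violation indicator. Consequently an exactly safe output has $\sum_j W_j\le MH_0/64$. Majority-vote decoding then recovers $\vartheta_J$ with probability above $3/5$.

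\textbf{Counting samples.} Embedding a single hidden row and simulating the others, exactly as in Step~1, together with Lemma~\ref{lem:adaptive-needle}, yields $D/M\ge c|\Aset|/\kappa$ with $\kappa=O(\rho^2(1-\gamma))$. This gives $D\ge c|\Sset||\Aset|/((1-\gamma)\rho^2)$.

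\textbf{Main obstacle.} The delicate step is the calibration. The per-step leak, the continuation probability, and $\delta$ must satisfy three requirements at once. First, the KL per sample must be $O(\rho^2(1-\gamma))$. Second, the trajectory-level violation gap must be linear in the number of bad steps with slope $\rho/H_0$. Third, the rounding must not blur the threshold. I would first handle the rounding by choosing $d$ so that a single entry into $u$ overshoots by a margin much larger than $H_\alpha\eta$, which makes the original and rounded events coincide exactly.
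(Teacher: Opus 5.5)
Your scaling matches the paper's: exit probability $\Theta(1/H_0)$ per step, an action gap of $\Theta(\rho/H_0)$, per-sample KL $O(\rho^2/H_0)$, a violation excess linear in $W_j$, majority-vote decoding, and the single-row embedding into Lemma~\ref{lem:adaptive-needle}. The gap is in the step you flagged as the main obstacle, and your proposed fix does not work for your final construction.

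\textbf{Merging the exits removes the signal you describe.} With a known intermediate row, which you need for two successors per row, the hidden action changes only \emph{when} the process leaves $x_j$. It leaves almost surely, so the probability of ever reaching $u$ does not depend on the actions at $x_j$, and the ``per-step leak into $u$'' picture carries no signal. All of the signal comes from a time cutoff. The constraint must be violated exactly when the cost state is reached by some time $T^*\asymp H_0$, so that survival to $T^*$ is $\Theta(1)$ and each bad step before $T^*$ adds $\Omega(\rho/H_0)$. You never choose $d$ to create this cutoff.

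\textbf{Your rounding fix is impossible.} You want $d$ such that every entry into $u$ overshoots $d$ by much more than $H_\alpha\eta$. But with $u$ absorbing, an entry at time $\tau$ costs $\gamma^\tau/(1-\gamma)\to0$, so late entries do not violate at all. What you actually need is that $d$ separate two consecutive entry costs by more than the rounding error. With an absorbing $u$ charged at every step, that error is an accumulated ceiling error, bounded only by about $(H_\alpha-\tau)\eta$. For $\eta=(1-\gamma)/128$ this is of order $\log(1/(1-\gamma))/64$, which is not uniformly below the half-gap $\gamma^{T^*}/2$ as $\gamma\to1$. When it is not, the rounded event contains entry times the original event excludes, with probability of order $1/H_0$, which can far exceed $\rho$. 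Then one of two things breaks:
\begin{itemize}
\item with $\delta=q_s+\rho$, the all-good policy is not a rounded $\rho$-interior comparator;
\item with $\delta$ raised to the rounded value plus $\rho$, exactly safe policies may use many bad actions, so $\sum_jW_j\le MH_0/64$ no longer follows.
\end{itemize}

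\textbf{How the paper closes this: a one-shot cost.} From $x_j$ the chain either stays or moves to a failure state $f$, which pays cost one once and then moves to an absorbing zero-cost state $z$. Each trajectory therefore pays at most a single cost $\gamma^\tau$. With $d=(\gamma^{H_0}+\gamma^{H_0+1})/2$, violation means exactly ``reach $f$ by time $H_0$''. The whole rounding perturbation is $\alpha_{\rm tail}+2\eta\le3(1-\gamma)/128$, which is below the half-gap $\gamma^{H_0}(1-\gamma)/2\ge(1-\gamma)/16$. Together with $H_\alpha>H_0+1$, the original and rounded indicators then agree trajectory by trajectory.

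This design also removes your intermediate row, since every row already has two successors. The hidden action only switches the staying probability between $p_s=1-1/(4H_0)$ and $p_s-\Delta$ with $\Delta=128\rho/H_0$. The survival product $q_j(\pi)=1-\prod_{t<H_0}\bigl(p_s-\Delta\,\ind\{a^\pi_{j,t}\ne\vartheta_j\}\bigr)$ gives the linear excess $\Delta p_u^{H_0-1}W_j$ directly by telescoping.
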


In the family from Lemma~\ref{lem:chance-boundary-lower}, all rewards are zero and the rounded $\rho$-interior set is nonempty. Thus exact safety alone requires $D\ge c|\Sset||\Aset|/((1-\gamma)\rho^2)$.

\par\medskip
\noindent\textbf{Step 3: Add the two bounds.}\par
\nopagebreak[4]
Take the union of the two hard families. Its worst-case complexity is at least the larger bound. Since $\max\{x,y\}\ge(x+y)/2$, this gives the sum up to a universal constant.
\hfill\(\square\)

\subsection{Proof of Lemma~\ref{lem:chance-boundary-lower}}
\label{app:chance-boundary-proof}
The proof has four steps. We build the hard family, decode its safe actions, bound the KL of one sample, and apply a testing lower bound.

\par\medskip
\noindent\textbf{Step 1: Build the hard family.}\par
\nopagebreak[4]
Set $H_0=\lceil(1-\gamma)^{-1}\rceil$, $M=|\Sset|-2$, and $B=|\Aset|$. Build decision states $x_1,\ldots,x_M$, a failure state $f$, and a safe absorbing state $z$. The initial state is uniform over $x_1,\ldots,x_M$. Let $\vartheta\in[B]^M$ be hidden. Action $\vartheta_j$ is the safe action at $x_j$. Set
\[
p_s=1-\frac{1}{4H_0},
\qquad
\Delta=\frac{128\rho}{H_0},
\qquad
p_u=p_s-\Delta.
\]
At $x_j$, the process stays at $x_j$ with probability $p_s$ under action $\vartheta_j$ and with probability $p_u$ under any other action. It moves to $f$ otherwise. From $f$, it moves to $z$ in one step. State $z$ is absorbing.

All rewards are zero. The cost is one at $f$ and zero elsewhere. Set
\[
d=\frac{\gamma^{H_0}+\gamma^{H_0+1}}{2},
\qquad
q_s=1-p_s^{H_0},
\qquad
\delta=q_s+\rho.
\]
Bernoulli's inequality gives $p_s^{H_0}\ge3/4$. Choose $c_0$ small enough that $0<\delta<1$.

The state $f$ is visited for one step. If it is reached at time $\tau$, the total cost is $\gamma^\tau$. Since $\gamma^{H_0+1}<d<\gamma^{H_0}$, violation means reaching $f$ by time $H_0$. The all-safe policy has violation probability $q_s=\delta-\rho$. Thus the $\rho$-interior set is nonempty. Its value is zero because all rewards are zero.

We verify that this is also the rounded event used in the theorem. Here each trajectory pays at most one nonzero cost. Moreover,
\[
\frac{\gamma^{H_0}-\gamma^{H_0+1}}{2}
=
\frac{\gamma^{H_0}(1-\gamma)}{2}
\ge\frac{1-\gamma}{16},
\qquad
\alpha_{\rm tail}+2\eta
\le\frac{3(1-\gamma)}{128}.
\]
Also, $\alpha_{\rm tail}\le(1-\gamma)/128$ implies
$H_\alpha\ge\log(128/(1-\gamma)^2)/(1-\gamma)>H_0+1$. Therefore a payment at time $\tau\le H_0$ crosses the rounded threshold, while a payment at time $\tau\ge H_0+1$ does not: the latter follows from $\gamma^{H_0+1}+\alpha_{\rm tail}+2\eta<d$. Hence the original and rounded violation indicators agree trajectory by trajectory throughout this hard family, and the all-safe policy is a rounded $\rho$-interior comparator.

\par\medskip
\noindent\textbf{Step 2: Recover the safe actions.}\par
\nopagebreak[4]
Fix a deterministic output policy $\pi$; it may depend on history. At $x_j$, consider the history that stays at $x_j$ for the first $t$ transitions. Let $a^\pi_{j,t}$ be the action on this history. This defines an action sequence for $t=0,\ldots,H_0-1$. Let $W_j=\sum_{t=0}^{H_0-1}\ind\{a^\pi_{j,t}\ne\vartheta_j\}$. Starting from $x_j$, the probability of violation is
\[
q_j(\pi)
=
1-\prod_{t=0}^{H_0-1}
\left(p_s-\Delta\ind\{a^\pi_{j,t}\ne\vartheta_j\}\right).
\]
Changing one factor from $p_s$ to $p_u$ lowers survival by at least $\Delta p_u^{H_0-1}$. A telescoping expansion gives
\[
q_j(\pi)-q_s
\ge
\Delta p_u^{H_0-1}W_j,
\qquad
q_P(\pi)-q_s
\ge
\frac{\Delta p_u^{H_0-1}}{M}\sum_{j=1}^M W_j.
\]
Choose $c_0$ so that $128\rho\le1/4$. Then $p_u\ge1-1/(2H_0)$, $p_u^{H_0-1}\ge1/2$, and $H_0\Delta p_u^{H_0-1}\ge64\rho$. If $\pi$ is safe, $q_P(\pi)\le q_s+\rho$. Hence $\sum_jW_j\le MH_0/64$.

Decode $\vartheta_j$ as the most common action in $(a^\pi_{j,0},\ldots,a^\pi_{j,H_0-1})$. Use fixed tie-breaking. A wrong answer needs $W_j\ge H_0/2$. The decoder can therefore be wrong on at most $M/32$ states. Every safe policy recovers at least $31M/32$ entries of $\vartheta$.

\par\medskip
\noindent\textbf{Step 3: Bound the KL of one sample.}\par
\nopagebreak[4]
A decision-row sample is Bernoulli with mean $p_s$ or $p_u$. The mean gap is $\Delta=O(\rho/H_0)$, and the failure probability is of order $H_0^{-1}$. The Bernoulli KL bound gives
\[
\kappa
:=
\max\!\left\{
\mathrm{kl}(\mathrm{Bern}(p_s),\mathrm{Bern}(p_u)),
\mathrm{kl}(\mathrm{Bern}(p_u),\mathrm{Bern}(p_s))
\right\}
\le C H_0\Delta^2
\le C\frac{\rho^2}{H_0}.
\]
Thus one sample has only $O(\rho^2/H_0)$ KL. Choose $c_0$ small enough for the lemma used next.

\par\medskip
\noindent\textbf{Step 4: Reduce to one random state.}\par
\nopagebreak[4]
We use the following one-context testing bound. Its proof is in Appendix~\ref{app:adaptive-needle-proof}.
\begin{lemma}[Adaptive one-good-row identification]
\label{lem:adaptive-needle}
Let $B\ge2$, let $\Theta$ be uniform on $[B]$, and suppose that a query to arm $a$ returns an independent Bernoulli observation with mean $p_s\in(0,1)$ when $a=\Theta$ and mean $p_u\in(0,1)$ otherwise. Define $\kappa=\max\{\mathrm{kl}(\mathrm{Bern}(p_s),\mathrm{Bern}(p_u)),\mathrm{kl}(\mathrm{Bern}(p_u),\mathrm{Bern}(p_s))\}$. For a sufficiently small universal $\kappa_0>0$, every adaptive randomized procedure with $0<\kappa\le\kappa_0$ that identifies $\Theta$ with probability at least $3/5$, averaged over the uniform prior, uses an expected number $N$ of queries satisfying
\begin{equation}
\mathbb E[N]\ge c\,\frac{B}{\kappa}
\end{equation}
for a universal constant $c>0$.
The conclusion is unchanged in the presence of auxiliary observations whose conditional law given the past is the same for every value of $\Theta$.
\end{lemma}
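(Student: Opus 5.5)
We prove Lemma~\ref{lem:adaptive-needle} by a change-of-measure argument against a reference world in which no arm is good. Let $\mathbb P_0$ be the law of the whole interaction (queries, observations, auxiliary observations, internal randomness) when every arm returns $\mathrm{Bern}(p_u)$. Let $\mathbb P_\theta$ be the law when arm $\theta$ is good. Let $N_a$ be the number of queries to arm $a$, so $N=\sum_a N_a$.

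First, we may assume $\mathbb E_\theta[N]<\infty$ for each $\theta$ and truncate the procedure at a large finite horizon. This keeps the divergence decomposition well defined.

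Second, apply the divergence decomposition for adaptive bandit procedures. Auxiliary observations have the same conditional law under every $\Theta$, so they contribute nothing, and
\[
\mathrm{KL}(\mathbb P_0\|\mathbb P_\theta)=\mathbb E_0[N_\theta]\,\mathrm{kl}(\mathrm{Bern}(p_u),\mathrm{Bern}(p_s))\le\kappa\,\mathbb E_0[N_\theta].
\]
Let $E_\theta$ be the event that the output equals $\theta$. By Pinsker,
\[
\mathbb P_\theta(E_\theta)\le\mathbb P_0(E_\theta)+\sqrt{\kappa\,\mathbb E_0[N_\theta]/2}.
\]
Average over $\theta$ uniform on $[B]$. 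Since $\sum_\theta\mathbb P_0(E_\theta)\le1$ and by Cauchy--Schwarz,
\[
\frac35\le\frac1B+\sqrt{\frac{\kappa\,\mathbb E_0[N]}{2B}}.
\]
For $B\ge3$ this already gives $\mathbb E_0[N]\ge cB/\kappa$.

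The case $B=2$ needs separate handling, since $1/B=1/2<3/5$ still leaves a positive gap, $1/10$. There the same inequality gives $\mathbb E_0[N]\ge c/\kappa$, which is $\ge c'B/\kappa$ for $B=2$. So both cases are covered with a universal constant.

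The main obstacle is that the bound is stated for $\mathbb E[N]$ under the true prior-averaged law, not under $\mathbb P_0$. I would transfer it as follows. Assume for contradiction that $\bar N:=\frac1B\sum_\theta\mathbb E_\theta[N]\le c_1B/\kappa$. By Markov's inequality, stopping the procedure at $T=10\bar N$ (outputting arbitrarily if not yet finished) loses at most $1/10$ success probability, leaving a procedure with success $\ge1/2$ and deterministic budget $T$. Rerun the argument for this truncated procedure with $\mathbb E_0[N_\theta]$ in place of the earlier quantity. Since $\sum_\theta\mathbb E_0[N_\theta]\le T$, averaging gives
\[
\frac12\le\frac1B+\sqrt{\kappa T/(2B)}.
\]
This forces $T\gtrsim B/\kappa$ when $B\ge3$; the gap is tight at $B=2$. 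To handle $B=2$ and all $B$ uniformly, I would instead truncate at $T=K\bar N$ with $K$ large enough that the residual success exceeds $1/B+\Omega(1)$. For $B=2$, compare the two worlds $\theta=1,2$ directly with the Bretagnolle--Huber inequality: $\mathrm{KL}(\mathbb P_1\|\mathbb P_2)\le\kappa T$ (the per-query divergence is at most $\kappa$ on either arm), so
\[
\mathbb P_1(E_1^c)+\mathbb P_2(E_2^c)\ge\tfrac12e^{-\kappa T},
\]
which forces $T\gtrsim1/\kappa$ against a success probability bounded away from $1/2$ after truncation. The smallness condition $\kappa\le\kappa_0$ ensures the bound $c/\kappa$ exceeds constants, so the one-query baseline cannot win. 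Combining the two cases yields $\mathbb E[N]\ge cB/\kappa$.
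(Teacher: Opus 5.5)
Your proof is essentially the paper's. Both truncate at a large multiple of the prior-averaged $\mathbb E[N]$: the paper stops after $\lceil 32n\rceil$ queries, so the residual success $3/5-1/32$ exceeds $1/B\le 1/2$ by $11/160$. Both then compare with the all-$p_u$ null model via the KL chain rule and Pinsker. Your per-$\theta$ Pinsker bound followed by Jensen averaging is equivalent to the paper's single Pinsker bound on the joint law of $(\Theta,\text{transcript})$.

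You should drop the Bretagnolle--Huber detour for $B=2$, because it proves nothing here. It gives $\mathbb P_1(E_1^c)+\mathbb P_2(E_2^c)\ge\tfrac12e^{-\kappa T}$, and the right side never exceeds $\tfrac12$. A success probability near $3/5$ only caps the left side at about $0.86$, so no lower bound on $T$ follows; Bretagnolle--Huber becomes useful only when the success exceeds $3/4$. The detour is also unnecessary: your averaged inequality $3/5-1/K\le 1/B+\sqrt{\kappa T/(2B)}$ with $K=32$ already covers $B=2$.
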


Draw $\vartheta$ uniformly from $[B]^M$. Also draw a hidden index $J$ uniformly from $[M]$. A safe output is correct on at least a $31/32$ fraction of states. The learner returns such an output with probability at least $2/3$. Hence $\mathbb P(\widehat\vartheta_J=\vartheta_J)\ge(2/3)(31/32)=31/48>3/5$.

Simulate every state except $J$ internally. Send only queries at $J$ to the one-good-row oracle. This produces the same law as the original hard family, and the learner does not know $J$. If $N_j$ is the number of samples from state $j$, then
\[
\mathbb E[N_J]
=
\frac1M\sum_{j=1}^M\mathbb E[N_j\mid J=j]
=
\frac1M\sum_{j=1}^M\mathbb E_{\rm prior}[N_j]
\le
\frac{D}{M}.
\]
Lemma~\ref{lem:adaptive-needle} gives $D/M\ge cB/\kappa\ge cBH_0/\rho^2$. Use $M\ge|\Sset|/2$, $B=|\Aset|$, and $H_0\ge(1-\gamma)^{-1}$ to get the claimed bound.
\hfill\(\square\)

\subsection{Proof of Lemma~\ref{lem:adaptive-needle}}
\label{app:adaptive-needle-proof}
The proof has three steps. We truncate the run, compare it with a null model, and apply Pinsker's inequality.

\par\medskip
\noindent\textbf{Step 1: Stop the procedure.}\par
\nopagebreak[4]
Let $n=\mathbb E[N]$ under the uniform prior. Stop after $\lceil32n\rceil$ queries. By Markov's inequality, this changes the output with probability at most $1/32$. The stopped procedure therefore succeeds with probability at least $3/5-1/32$.

\par\medskip
\noindent\textbf{Step 2: Define a null model.}\par
\nopagebreak[4]
Let $\mathbb Q_1$ be the law of the hidden arm and the stopped transcript. The transcript includes the algorithm's random bits and output. Define a null law $\mathbb Q_0$ in which every arm has mean $p_u$ and $\Theta$ is still uniform. Under $\mathbb Q_0$, the transcript is independent of $\Theta$. Thus any estimate is correct with probability $1/B$. If $N_a$ is the number of queries to arm $a$, the KL chain rule gives
\[
\mathrm{KL}(\mathbb Q_0,\mathbb Q_1)
=
\frac1B\sum_{a=1}^B
\mathbb E_0[N_a]\,
\mathrm{kl}(\mathrm{Bern}(p_u),\mathrm{Bern}(p_s))
\le
\frac{\kappa\lceil32n\rceil}{B}.
\]
Given $\Theta=a$, only samples from arm $a$ differ under the two laws. Auxiliary observations have the same law, so they add zero KL.

\par\medskip
\noindent\textbf{Step 3: Apply Pinsker's inequality.}\par
\nopagebreak[4]
Since $1/B\le1/2$, the two success probabilities differ by at least $(3/5-1/32)-1/2=11/160$. Pinsker's inequality gives $\kappa\lceil32n\rceil/B\ge c_1$. For small enough $\kappa_0$, this implies $n\ge cB/\kappa$.
\hfill\(\square\)

\section{Computational Hardness of Exact Deterministic Planning}
\label{app:deterministic-hardness}

\begin{proposition}[Hardness of exact deterministic planning]
\label{prop:deterministic-hardness}
Even when the transition model is known and deterministic, exact reward maximization over deterministic policies subject to a discounted chance constraint is $\mathsf{NP-hard}$.
\end{proposition}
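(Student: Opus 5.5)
We reduce from \textsc{Partition} (equivalently 0/1 \textsc{Knapsack}) to the decision version of the deterministic-transition planning problem: given a known deterministic MDP, a discounted chance constraint $(c,d,\delta)$, and a target $v$, decide whether some deterministic policy $\pi\in\PiD$ satisfies $q_P(\pi)\le\delta$ and $J_P(\pi)\ge v$. With deterministic transitions and a deterministic policy, the only randomness is the initial state drawn from $\mu$. Each initial state therefore produces one fixed trajectory, and $q_P(\pi)$ is the $\mu$-mass of the initial states whose trajectories violate the budget. The chance constraint then becomes a weighted covering condition over the starting states, and this is the combinatorial structure we exploit.

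Given positive integers $a_1,\ldots,a_n$ with $\sum_j a_j=2T$, the plan is a ``parallel starts'' construction. We build $n$ disjoint gadgets, one per item, with $\mu(x_j)=a_j/(2T)$. At $x_j$ there are two actions.
\begin{itemize}
\item The ``take'' action moves to an absorbing state with per-step reward $1$ and per-step cost $1$.
\item The ``skip'' action moves to an absorbing state with reward and cost $0$.
\end{itemize}
Under take, the discounted cost is $\gamma/(1-\gamma)$. We set $d$ strictly below this value and strictly above $0$, so a take trajectory violates and a skip trajectory is safe. We also set $\delta=1/2$. Both $q_P(\pi)$ and $J_P(\pi)$ are then proportional to $\sum_{j\in S}a_j$, where $S$ is the set of taken items, so reward and violation are aligned. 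That alone only gives a trivial greedy problem, so we must decouple them.

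To decouple reward from violation, we give each gadget reward proportional to a second weight. This requires a two-weight knapsack, where the item value equals the weight for \textsc{Partition}. Because the value equals the weight, the question ``$\sum_{S}a_j\le T$ and $\sum_S a_j\ge T$'' encodes exactly the \textsc{Partition} question. Concretely, we choose per-step rewards so that $J_P(\pi)=\frac{\gamma}{1-\gamma}\cdot\frac{1}{2T}\sum_{j\in S}a_j$. We set $v=\frac{\gamma}{1-\gamma}\cdot\frac12$ and $\delta=\frac12$. A feasible policy reaching $v$ then exists if and only if some $S$ has $\sum_S a_j=T$.

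Next we must check that trajectory-dependent (history or budget-augmented) deterministic policies gain nothing. Each start state determines a single trajectory, so any such policy induces the same take/skip choice per gadget. The feasible values are therefore exactly those of subsets $S$. The reduction is polynomial, provided $\mu$ and the rewards use rational encodings with polynomially many bits; in particular, $\gamma$ should be a fixed rational such as $1/2$. The equivalence of exact optimization with the decision version then gives NP-hardness.

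The main obstacle is that, as written, the encoding is weakly NP-hard only, because \textsc{Partition} is pseudo-polynomial. The statement, however, only claims NP-hardness, so this is acceptable. The subtler point is making sure the policy class $\PiD$, which allows dependence on trajectory parameters, cannot randomize across starts or mix. Deterministic policies cannot, and the per-start decomposition settles it. I would double-check boundary strictness, namely that $d$ separates $0$ from $\gamma/(1-\gamma)$, and that ties at $\delta$ are handled by the non-strict inequality.
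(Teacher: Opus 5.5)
Your reduction is correct, but it takes a different route from the paper's.

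\textbf{The paper's route.} It reduces from 0--1 knapsack using a single deterministic chain $1\to2\to\cdots\to n+1$ from a fixed start. At state $j$, \emph{choose} pays reward $v_j/(M_v\gamma^{j-1})$ and cost $w_j/(M_w\gamma^{j-1})$, so the discount factors cancel. The unique trajectory then has discounted cost $\sum_j w_jx_j/M_w$, and the violation probability is $0$ or $1$. With $d=B/M_w$, the chance constraint is literally the knapsack capacity constraint.

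\textbf{Your route.} You put all the combinatorics into the initial distribution. Your gadgets have depth one and constant $0/1$ rewards and costs, with $\mu(x_j)=a_j/(2T)$. Then $q_P(\pi)$ is the $\mu$-mass of the taken starts, and the chance constraint becomes a weighted covering constraint.

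\textbf{What each buys.} Your construction needs no $\gamma^{-(j-1)}$ rescaling, and each individual trajectory's violation status is trivial. It therefore isolates the hardness in how violation mass aggregates across trajectories. The paper's construction covers a stronger instance class. It needs no randomness at all (the start is a point mass), so it shows hardness even when the ``chance'' constraint degenerates to a hard budget on one path, and it handles arbitrary values and weights. Yours genuinely requires a non-degenerate $\mu$. The statement allows this, since $\mu$ is a separate component of $\mathcal M$ and only the transitions must be deterministic. Your worry about weak NP-hardness applies equally to the paper's knapsack reduction, and it is consistent with the statement.

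\textbf{A correction to your exposition.} The aligned problem is not ``a trivial greedy problem.'' Maximizing $\sum_{j\in S}a_j$ subject to $\sum_{j\in S}a_j\le T$ is Subset Sum, which is NP-hard. Your ``decoupling'' detour also collapses back to value $=$ weight, which is exactly the construction you first wrote down. That paragraph should simply be deleted.
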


\begin{proof}
Reduce from 0--1 knapsack. Let the item values be $v_j$, the weights be $w_j$, and the capacity be $B$. Fix a rational $\gamma\in(0,1)$. Build a deterministic chain through states $1,\ldots,n$. State $n+1$ is absorbing. At state $j$, action \emph{choose} gives reward $v_j/(M_v\gamma^{j-1})$ and cost $w_j/(M_w\gamma^{j-1})$. Action \emph{skip} gives zero reward and cost. Both actions move to state $j+1$. Set
\[
M_v=\gamma^{-(n-1)}\max_j v_j,
\qquad
M_w=\gamma^{-(n-1)}\max_j w_j,
\]
Replace a zero maximum by one. Then all one-step rewards and costs lie in $[0,1]$.

A deterministic policy chooses a vector $x\in\{0,1\}^n$. The factors $\gamma^{j-1}$ cancel, so
\[
J(x)=\frac{1}{M_v}\sum_{j=1}^n v_jx_j,
\qquad
C(x)=\frac{1}{M_w}\sum_{j=1}^n w_jx_j.
\]
Set $d=B/M_w$ and take any $\delta\in(0,1)$. The trajectory is deterministic, so its violation probability is zero or one. The chance constraint holds exactly when $C(x)\le d$, or $\sum_jw_jx_j\le B$.

Thus feasible policies are exactly feasible knapsack solutions. Their discounted reward is proportional to total item value. An exact planner would solve 0--1 knapsack. The construction has polynomial size for fixed rational $\gamma$, so the problem is $\mathsf{NP}$-hard.
\end{proof}

\section{Proof of Theorem~\ref{thm:tail-pg}}
\label{app:proof-tail-pg}
The proof has three steps. We define the rounded problem, find a KKT candidate, and test this candidate on fresh samples.

\par\medskip
\noindent\textbf{Step 1: Define the rounded problem and its oracles.}\par
\nopagebreak[4]
Set $H$ as in Theorem~\ref{thm:tail-pg}. A rollout starts from $b_0=b^0$ and updates the rounded budget until time $H$. After time $H$, set $b_t=b_H$ and $\bar h_t=H$, so $a_t\sim\pi_\theta(\cdot\mid s_t,\bar h_t,b_t)$ with $\bar h_t=\min\{t,H\}$. For constraint $i$, let $q^H_{P,i}(\theta)=\mathbb P_{P,\pi_\theta,\mu}(b_{i,H}<0)$ and define $f_H(\theta)=-\bar J_P(\theta)$ and $g_i^H(\theta)=q^H_{P,i}(\theta)+2\rho-\delta_i$.
The rounded problem is
\begin{equation*}
\min_{\theta\in\ThetaSet} f_H(\theta)
\qquad\text{s.t.}\quad
g_i^H(\theta)\le0,\quad i\in[m].
\end{equation*}
All iterates stay inside $\mathcal K\Subset\ThetaSet$. Thus the KKT residual has no boundary term for $\theta$. Assumption~\ref{ass:stoch-regular} gives $\|\nabla_\theta\log\pi_\theta(a\mid s,\bar h,b)\|_2\le G$.
The geometric reward rollout also has the required likelihood-ratio moment bound. The next two lemmas give the stochastic gradients used by the optimizer.
\begin{lemma}[Trajectory first-order oracle]
\label{lem:trajectory-oracle}
Under Assumption~\ref{ass:stoch-regular}, the rollouts defined below give unbiased estimates of $f_H(\theta)$, $\nabla f_H(\theta)$, $g_i^H(\theta)$, and $\nabla g_i^H(\theta)$. Their gradient second moments satisfy
\begin{equation}
\mathbb E\|\widehat{\nabla f_H}(\theta)\|_2^2\le \frac{2G^2}{(1-\gamma)^2},
\qquad
\mathbb E\sum\nolimits_{i=1}^m\|\widehat{\nabla g_i^H}(\theta)\|_2^2\le mG^2H^2,
\end{equation}
and their value samples satisfy
\begin{equation}
|\widehat f_H(\theta)|\le1,\qquad
|\widehat g_i^H(\theta)|\le3 .
\end{equation}
One call uses at most $(1-\gamma)^{-1}+H$ transitions in expectation. Its transition-weighted variance is therefore
\begin{equation}
\mathsf W_H
=
O\!\left(
G^2\left[\frac{1}{(1-\gamma)^3}+mH^3\right]\right).
\end{equation}
\end{lemma}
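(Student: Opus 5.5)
The plan is to specify the two rollout estimators explicitly and verify each claim by the likelihood-ratio (score-function) identity.

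\emph{Reward oracle.} Draw a horizon $T\sim\mathrm{Geom}(1-\gamma)$, so that $\mathbb P(T\ge t)=\gamma^t$. Roll out the augmented process $(s_t,\bar h_t,b_t)$ from $\mu$. For the value, use $\widehat f_H=-r(s_T,a_T)$; since $\mathbb P(T=t)=(1-\gamma)\gamma^t$, this has mean $-(1-\gamma)J_P(\pi_\theta)=f_H(\theta)$ and magnitude at most $1$. For the gradient, use the standard unbiased geometric-horizon policy-gradient estimator: draw $T\sim\mathrm{Geom}(1-\gamma)$ to sample $(s_T,a_T)$ from the discounted occupancy, and draw an independent $T'\sim\mathrm{Geom}(1-\gamma)$ continuation from $(s_T,a_T)$ to form an unbiased $(1-\gamma)Q$-estimate. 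Then set
\[
\widehat{\nabla f_H}=-\frac{1}{1-\gamma}\,\widehat Q\,\nabla_\theta\log\pi_\theta(a_T\mid s_T,\bar h_T,b_T).
\]
Unbiasedness follows from the policy-gradient theorem applied to the augmented Markov chain; this is valid because $(\bar h,b)$ evolves deterministically given $(s,a)$, by Lemma~\ref{lem:certificate-parameter-sufficiency}. One subtlety is the normalization $\bar J=(1-\gamma)J$. Since $|\widehat Q|\le 1$ after scaling and $\|\nabla\log\pi\|\le G$, the second moment is at most of order $G^2/(1-\gamma)^2$. Getting the exact constant $2$ requires the right scaling of $\widehat Q$, namely $\widehat Q=\sum_{t\le T'} r_t$ with $\mathbb E$-bounded second moment $2/(1-\gamma)^2$, which holds via $\mathbb E[T'^2]\le 2/(1-\gamma)^2$. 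The expected length is $O((1-\gamma)^{-1})$.

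\emph{Constraint oracle.} Roll out exactly $H$ steps and set $\widehat g_i^H=\ind\{b_{i,H}<0\}+2\rho-\delta_i$. This is unbiased and lies in $[-1,3]$ because $\rho\le1$. For the gradient, use
\[
\widehat{\nabla g_i^H}=\ind\{b_{i,H}<0\}\sum_{t=0}^{H-1}\nabla_\theta\log\pi_\theta(a_t\mid s_t,\bar h_t,b_t).
\]
Unbiasedness comes from differentiating $\mathbb E[\ind\{\cdot\}]$ under the trajectory density on a finite horizon, which is justified by smoothness on the compact set $\mathcal K$. By Cauchy--Schwarz, $\|\sum_t\nabla\log\pi\|\le HG$, which gives $mG^2H^2$ summed over $i$ when one rollout is shared across constraints.

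\emph{Transition-weighted variance.} Multiply each second moment by its expected cost. The reward oracle contributes $\frac{G^2}{(1-\gamma)^2}\cdot\frac1{1-\gamma}$, and the constraint oracle contributes $mG^2H^2\cdot H$. Adding these gives $\mathsf W_H$.

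The main obstacle is the reward-gradient second-moment bound with the stated constant. A naive REINFORCE estimator over a geometric horizon accumulates $T$ scores and yields an extra $(1-\gamma)^{-1}$. The plan therefore uses the single-score occupancy-sampling estimator, where $\widehat Q$ is an independent continuation, to avoid this.
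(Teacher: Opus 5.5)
Your constraint oracle, value-sample bounds, and transition-weighted accounting match the paper's proof. The difference is the reward gradient. There your route is legitimate, but the write-up has a normalization slip, and the obstacle you cite as motivation does not exist. The paper uses exactly the estimator you dismiss as naive. With $\mathbb P(\tau=t)=(1-\gamma)\gamma^t$, it takes $r(s_\tau,a_\tau)\sum_{u=0}^{\tau}\nabla_\theta\log\pi_\theta(a_u\mid s_u,\bar h_u,b_u)$. This is unbiased for $\nabla\bar J_P$ by the elementary score identity $\nabla_\theta\mathbb E[r(s_t,a_t)]=\mathbb E[r(s_t,a_t)\sum_{u\le t}\nabla_\theta\log\pi_\theta]$. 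Its squared norm is at most $G^2(\tau+1)^2$, and $\mathbb E[(\tau+1)^2]=(1+\gamma)/(1-\gamma)^2\le 2/(1-\gamma)^2$. Summing $\tau+1$ scores costs no extra $(1-\gamma)^{-1}$, because a single bounded reward multiplies them; this computation is exactly where the constant $2$ comes from. The rollout uses $\mathbb E[\tau]=\gamma/(1-\gamma)$ transitions, so the per-call count $(1-\gamma)^{-1}+H$ holds verbatim. No policy-gradient theorem, $Q$-function, or Markov structure of $(s_t,\bar h_t,b_t)$ is needed.

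In your occupancy-sampling version, the prefactor and $\widehat Q$ must be chosen consistently. If you keep $-(1-\gamma)^{-1}$, take $\widehat Q=r(s_{T+T'},a_{T+T'})$, which is unbiased for $(1-\gamma)Q$ and lies in $[0,1]$. The second moment is then at most $G^2/(1-\gamma)^2$, which already implies the claim. The partial sum $\sum_{t=0}^{T'}r_{T+t}$ you then propose is unbiased for $Q$ itself. Combined with the $(1-\gamma)^{-1}$ prefactor, as written, the estimator has mean $(1-\gamma)^{-1}\nabla f_H(\theta)$ and second moment of order $G^2/(1-\gamma)^4$. That sum may only be used without the prefactor, and then it gives the constant $2$.

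Two smaller points. First, the appeal to Lemma~\ref{lem:certificate-parameter-sufficiency} is misplaced, since that lemma concerns deterministic policies and violation tables. All you need is that continuing the same rollout from time $T$ gives an unbiased estimate of the conditional future return. Second, your two-phase rollout uses $\mathbb E[T+T']=2\gamma/(1-\gamma)$ transitions. This exceeds $(1-\gamma)^{-1}$ when $\gamma>1/2$, so you recover the stated per-call count only up to a factor $2$. That is harmless for the $O(\cdot)$ bound on $\mathsf W_H$. But it means your route gains a constant in the second moment and pays it back in rollout length, while the paper's single-phase estimator meets every stated constant directly.
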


The recursive update also needs an unbiased gradient difference.
\begin{lemma}[Trajectory gradient differences]
\label{lem:trajectory-difference}
For each $F\in\{g_1^H,\ldots,g_m^H\}$, sample $\tau$ under $\pi_\theta$ and define
\begin{equation}
L_{\theta'\mid\theta}(\tau)
=
\frac{p_{\theta'}(\tau)}{p_\theta(\tau)}
\end{equation}
and $\Delta_F(\theta,\theta';\tau)=\widehat\nabla F(\theta;\tau)-L_{\theta'\mid\theta}(\tau)\widehat\nabla F(\theta';\tau)$. Then
\begin{equation}
\mathbb E_{\tau\sim p_\theta}\Delta_F(\theta,\theta';\tau)
=
\nabla F(\theta)-\nabla F(\theta'),
\qquad
\mathbb E_{\tau\sim p_\theta}\|\Delta_F(\theta,\theta';\tau)\|_2^2
\le
L_{\rm pg}^2\|\theta-\theta'\|_2^2 .
\end{equation}
The same statements hold for the exact normalized reward under geometric stopping and the likelihood-ratio moment condition in Assumption~\ref{ass:stoch-regular}. They also hold for a truncated reward rollout, although Theorem~\ref{thm:tail-pg} does not use that approximation.
\end{lemma}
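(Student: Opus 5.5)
The plan is to prove the unbiasedness identity first and then the second-moment bound.

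\emph{Unbiasedness.} This is a change-of-measure argument. Let $p_\theta$ be the law of the length-$H$ augmented trajectory, i.e.\ the one carrying $(s_t,\bar h_t,b_t,a_t)$. The budget update $B_h$ is deterministic, so $b_t$ and $\bar h_t$ are functions of the history. Hence
\[
p_\theta(\tau)=\mu(s_0)\prod_{t<H}\pi_\theta(a_t\mid s_t,\bar h_t,b_t)\,P(s_{t+1}\mid s_t,a_t),
\]
and the transition factors cancel in $L_{\theta'\mid\theta}(\tau)$, which is computable from policy probabilities alone. Lemma~\ref{lem:trajectory-oracle} already gives $\mathbb E_{\tau\sim p_\theta}\widehat\nabla F(\theta;\tau)=\nabla F(\theta)$. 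Importance sampling then gives
\[
\mathbb E_{\tau\sim p_\theta}\bigl[L_{\theta'\mid\theta}(\tau)\widehat\nabla F(\theta';\tau)\bigr]=\mathbb E_{\tau\sim p_{\theta'}}\widehat\nabla F(\theta';\tau)=\nabla F(\theta').
\]
This step needs absolute continuity of $p_{\theta'}$ with respect to $p_\theta$, which I would take from the positivity and likelihood-ratio conditions in Assumption~\ref{ass:stoch-regular}, since iterates stay in the compact set $\mathcal K$. Subtracting the two expectations yields the first identity.

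\emph{Second moment.} For $F=g_i^H$, the estimator is $\widehat\nabla F(\theta;\tau)=\ind\{b_{i,H}<0\}\sum_{t<H}\nabla\log\pi_\theta(a_t\mid\cdot)$. I would split the difference as
\[
\Delta_F=\bigl[\widehat\nabla F(\theta)-\widehat\nabla F(\theta')\bigr]+(1-L_{\theta'\mid\theta})\,\widehat\nabla F(\theta')
\]
and bound each piece separately.

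The first piece is controlled by Lipschitzness of the score, which is part of the local smoothness clause. This gives $\|\widehat\nabla F(\theta)-\widehat\nabla F(\theta')\|\le H L_s\|\theta-\theta'\|$, since the indicator does not depend on $\theta$ once $\tau$ is fixed.

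The second piece is bounded by $\|\widehat\nabla F(\theta')\|\le HG$ times $|1-L_{\theta'\mid\theta}|$. The main obstacle is controlling $\mathbb E_{p_\theta}(1-L)^2$, i.e.\ the $\chi^2$-divergence between trajectory laws. I would handle it by a mean-value expansion: $\nabla_{\theta'}L=L\sum_t\nabla\log\pi_{\theta'}$, so $|1-L|\le\sup_u L_u\,HG\|\theta-\theta'\|$ along the segment from $\theta$ to $\theta'$. The factor $\mathbb E_{p_\theta}[\sup_u L_u^2]$ is then bounded by the importance-weight variance condition $\mathbb E_{p_\theta}L^2\le W$, which is standard in Zhang et al.\ and Yuan et al.\ and which I would invoke from Assumption~\ref{ass:finite-penalty}.

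Combining the two pieces through $(x+y)^2\le2x^2+2y^2$ gives the stated bound with
\[
L_{\rm pg}^2=2H^2L_s^2+2H^4G^4W,
\]
absorbed into $L_{\rm pg}$. For the reward under geometric stopping, the same argument applies with $H$ replaced by the geometric horizon $T\sim\mathrm{Geom}(1-\gamma)$. The moments $\mathbb E T^k=O((1-\gamma)^{-k})$ replace the powers of $H$, and the likelihood-ratio moment condition is used in place of $W$. The truncated-reward case is identical with deterministic length.
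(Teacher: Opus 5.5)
Your proposal is correct and uses the same ingredients as the paper's proof. For unbiasedness, the transition factors cancel and positivity justifies the change of measure. For the second moment, you use a mean-value bound built from bounded scores, Lipschitz scores and likelihood-ratio moments on $\mathcal K$.

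The difference is in how the second moment is organized. The paper writes $\Delta_F=R_F(\theta;\tau)-R_F(\theta';\tau)$ with $R_F(u;\tau)=L_{u\mid\theta}(\tau)\widehat\nabla F(u;\tau)$. It applies the mean-value inequality to this whole reweighted estimator and simply names the second moment of $\sup_{u\in[\theta,\theta']}\|\nabla_uR_F(u;\tau)\|_{\rm op}$ as $L_{\rm pg}^2$, appealing to compactness. Your split $\Delta_F=[\widehat\nabla F(\theta;\tau)-\widehat\nabla F(\theta';\tau)]+(1-L_{\theta'\mid\theta}(\tau))\widehat\nabla F(\theta';\tau)$ keeps the score-Lipschitz term free of importance weights and isolates the only term that needs them. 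This gives an explicit constant $L_{\rm pg}^2=2H^2L_s^2+2H^4G^4W$, so the horizon dependence inside $K_{\rm opt}$ becomes visible. The paper's version is shorter but leaves $L_{\rm pg}$ opaque.

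One inference needs repair. A bound $\mathbb E_{p_\theta}[L_{u\mid\theta}^2]\le W$ for each fixed $u$ does not bound $\mathbb E_{p_\theta}[\sup_{u\in[\theta,\theta']}L_{u\mid\theta}^2]$. Use the integral form instead. With $u_v=\theta+v(\theta'-\theta)$, you have $1-L_{\theta'\mid\theta}=-\int_0^1L_{u_v\mid\theta}\langle\sum_{t<H}\nabla\log\pi_{u_v}(a_t\mid s_t,t,b_t),\theta'-\theta\rangle\,dv$. Jensen in $v$ and Fubini then give $\mathbb E_{p_\theta}(1-L_{\theta'\mid\theta})^2\le H^2G^2W\|\theta-\theta'\|_2^2$ from the pointwise condition along the segment alone, which recovers your constant. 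In the paper's finite tabular setting, the supremum is also deterministically bounded on $\mathcal K$ by positivity and continuity; that is what the paper's appeal to compactness implicitly uses.

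A similar caution applies to the geometric reward rollout. There the importance weight depends on the stopping time $T$, so $\mathbb E[(T+1)^4L^2]$ does not factor as $\mathbb E[(T+1)^4]\cdot W$ unless the moment condition bounds $\mathbb E[L^2\mid T]$ uniformly in $T$. The likelihood-ratio moment condition you invoke should be stated in that joint form.
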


Lemmas~\ref{lem:trajectory-oracle} and~\ref{lem:trajectory-difference} give the required refresh and difference estimators, with transition-weighted variance $\mathsf W_H=O\!\left(G^2[(1-\gamma)^{-3}+mH^3]\right)$.

\par\medskip
\noindent\textbf{Step 2: Find a KKT candidate.}\par
\nopagebreak[4]
For $\min_\theta f(\theta)$ subject to $g_i(\theta)\le0$, define
\begin{equation*}
\mathcal R(\theta)
=
\min_{\lambda\in[0,\Lambda]^m}
\left\{
\left\|\nabla f(\theta)+\sum_i\lambda_i\nabla g_i(\theta)\right\|_2
+\|[g(\theta)]_+\|_1
+\sum_i|\lambda_i g_i(\theta)|
\right\}.
\end{equation*}
The next lemma bounds this residual.
\begin{lemma}[Finite violation-probability KKT theorem]
\label{lem:vr-alm-template}
Under Assumption~\ref{ass:finite-penalty}, the recursive variance-reduced method returns $\widehat\theta$ with $\mathbb E[\mathcal R(\widehat\theta)]\le\eps$ after
\begin{equation}
\widetilde O\!\left(K_{\rm opt}\mathsf V\eps^{-3}\right)
\end{equation}
stochastic first-order oracle calls, where $\mathsf V$ and $K_{\rm opt}$ are defined in that assumption. If oracle components have different rollout lengths, the expected number of transitions is $\widetilde O(K_{\rm opt}\mathsf W\eps^{-3})$.
\end{lemma}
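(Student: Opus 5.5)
We need to prove Lemma~\ref{lem:vr-alm-template}: a variance-reduced stochastic proximal penalty method, run on the smooth penalty merit $\Phi_\beta$, returns a point with expected KKT residual at most $\eps$ after $\widetilde O(K_{\rm opt}\mathsf V\eps^{-3})$ oracle calls. The plan has three parts: a STORM/SPIDER-type stationarity analysis of the penalty merit, a translation from penalty stationarity to the residual $\mathcal R$, and an accounting of oracle calls and transitions.

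\textbf{Step 1: merit function and descent.} I take the merit $\Phi_\beta(x)=f(\theta)+\frac{\beta}{2}\sum_i[g_i(\theta)+z_i]^2$ on $x=(\theta,z)$, with slacks $z\ge0$ enforced by the nonsmooth indicator $h$. By Assumption~\ref{ass:finite-penalty}, $f,g_i$ are smooth with bounded values on the compact $\mathcal K$, so $\Phi_\beta$ is $L_\beta$-smooth with $L_\beta=O(\beta)$. Write $G_\eta(x)=\eta^{-1}(x-\operatorname{prox}_{\eta h}(x-\eta\nabla\Phi_\beta(x)))$ for the proximal gradient mapping, and take $\eta_{\rm opt}\le 1/(2L_\beta)$. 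The standard proximal descent inequality then gives
\[
\Phi_\beta(x_{k+1})\le\Phi_\beta(x_k)-\tfrac{\eta}{4}\|G_\eta(x_k)\|^2+\eta\|v_k-\nabla\Phi_\beta(x_k)\|^2 .
\]
The gradient $\nabla\Phi_\beta$ involves the product $(g_i+z_i)\nabla g_i$. I will estimate this product with independent samples for the value and the gradient, which makes it unbiased; this is the "independent samples for Jacobian--constraint products" in Algorithm~\ref{alg:trajectory-tail-pg}. Its second moment is then bounded by $\mathsf V$ via Lemma~\ref{lem:trajectory-oracle}.

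\textbf{Step 2: variance recursion.} With refresh period $q$, refresh batch $b_0$, and difference minibatch $b_1$, Lemma~\ref{lem:trajectory-difference} applied to each component, together with the bounded constraint values, gives the recursion
\[
\mathbb E\|v_k-\nabla\Phi_\beta(x_k)\|^2\le \frac{\mathsf V\beta^2}{b_0}+\sum_{j}\frac{C\beta^2L_{\rm pg}^2}{b_1}\,\mathbb E\|x_{j+1}-x_j\|^2 ,
\]
summed over the current epoch. Telescoping the descent inequality over $T$ steps and absorbing the accumulated error with $b_1\gtrsim q\,\eta^2\beta^2L_{\rm pg}^2$ yields
\[
\frac1T\sum_k\mathbb E\|G_\eta(x_k)\|^2\lesssim\frac{\Phi_\beta(x_0)-\Phi_\beta^{\inf}}{\eta T}+\frac{\mathsf V\beta^2}{b_0},
\]
where the numerator is the merit gap from the feasible-initialization clause. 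I will choose $b_0\sim\mathsf V\beta^2\eps^{-2}$ and $q=b_1\sim\eps^{-1}$, up to the $\beta$ factors. This gives the usual $\eps^{-3}$ total count, which is the content of $K_{\rm opt}$ and the logarithms.

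\textbf{Step 3: residual translation (the main obstacle).} From a small gradient mapping I need a bound on $\mathcal R$. I set $\lambda_i=\beta[g_i+z_i]$. I will try to show $\lambda_i\le\Lambda$ using the bounded-multiplier clause of Assumption~\ref{ass:finite-penalty}. Then the stationarity term is bounded by $\|G_\eta\|$ plus the smoothness-induced shift $O(L_\beta\eta\|G_\eta\|)$. The $z$-component of $G_\eta$ controls complementarity, giving $|\lambda_ig_i|\lesssim\|G_\eta\|$ plus a term in $\beta^{-1}$. Feasibility violation $[g]_+$ is the hardest piece, since penalty methods only bound it by $\Lambda/\beta$ unless a constraint qualification holds. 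Here I would invoke the residual-domination condition~\eqref{eq:residual-domination}, which I expect to give exactly
\[
\mathcal R(\theta)\le C(\|G_\eta(x)\|+\beta^{-1}).
\]
With that inequality, I take $\beta\sim\eps^{-1}$ but fold the $\beta$-dependence into $K_{\rm opt}$. Jensen's inequality then gives $\mathbb E\mathcal R\le C\sqrt{\mathbb E\|G_\eta\|^2}+C/\beta\le\eps$ at the uniformly sampled iterate $x_{R+1}$.

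\textbf{Step 4: transition count.} I weight each oracle type by the expected rollout length from Lemma~\ref{lem:trajectory-oracle}. This replaces $\mathsf V$ by $\mathsf W$ and gives the stated transition bound.
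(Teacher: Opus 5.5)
Your Steps 1, 2 and 4 follow the paper's route: the slack penalty merit $F_\beta=\Phi_\beta+h$, independent samples for the Jacobian--constraint product, the SARAH/STORM recursion with the error absorbed via $b\gtrsim q\eta^2L_{\rm ms}^2$, and rollout-length weighting to pass from $\mathsf V$ to $\mathsf W$. Step 3, however, has a genuine gap that destroys the claimed rate. Assumption~\ref{ass:finite-penalty} fixes $\beta\ge1$ independently of the target accuracy. It also assumes the multiplier bound $\Lambda$ for that fixed $\beta$, and it requires $K_{\rm opt}$ not to depend on $\eps$. Taking $\beta\sim\eps^{-1}$ and folding it into $K_{\rm opt}$ is therefore not permitted. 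This is not a bookkeeping issue: $\Phi_\beta$ has smoothness constant $L_\beta=O(\beta)$ and refresh variance $O(\beta^2\mathsf V)$. The variance-reduced count $\widetilde O(\bar\sigma_\beta L_\beta\Delta\eps^{-3}+\bar\sigma_\beta^2\eps^{-2})$ therefore becomes $\widetilde O(\beta^2(\sqrt{\mathsf V}\Delta\eps^{-3}+\mathsf V\eps^{-2}))$, which is of order $\eps^{-5}$ once $\beta\sim\eps^{-1}$.

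The missing idea is that the residual-domination condition \eqref{eq:residual-domination} is stronger than the inequality you expect. The $\beta^{-1}$ term you anticipate is what the bounded-multiplier clause alone yields, namely $\|[g]_+\|_1\le\|[g+z]_+\|_1\le\Lambda/\beta$. By contrast, \eqref{eq:residual-domination} bounds $r_\theta+r_z+r_c$, including the full feasibility residual $r_c=\|g(\theta)+z\|_1$, by $K_{\rm res}\operatorname{dist}(0,\nabla\Phi_\beta(x)+\partial h(x))$ with no additive $\beta^{-1}$ term. Feed this into Lemma~\ref{lem:slack-residual} with $\lambda=[\beta c(x)]_+$. You need the positive part so that $\lambda$ lands in $[0,\Lambda]^m$; the negative part is charged to $r_z$. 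This gives $\mathcal R(\theta)\le(1+B_J+\Lambda+Z\sqrt m)K_{\rm res}\operatorname{dist}(0,\nabla\Phi_\beta(x)+\partial h(x))$ at the fixed $\beta$. It therefore suffices to drive the penalty stationarity measure to $c\eps/K_{\rm opt}$, which costs $\widetilde O(K_{\rm opt}\mathsf V\eps^{-3})$.

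Note also that residual domination is assumed only at visited iterates. The translation must therefore be applied at the computed point $x_{R+1}$, not at the exact proximal point controlled by $G_\eta(x_R)$. Proximal optimality gives $\operatorname{dist}(0,\nabla\Phi_\beta(x_{k+1})+\partial h(x_{k+1}))\le(1+L_\beta\eta)\|d_k\|+\|e_k\|$, with $d_k=(x_k-x_{k+1})/\eta$ and $e_k=v_k-\nabla\Phi_\beta(x_k)$. Your sketch drops the $\|e_k\|$ term; this is harmless only because your Step 2 also controls $\frac1T\sum_k\mathbb E\|e_k\|^2$.
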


Apply Lemma~\ref{lem:vr-alm-template} to $f_H$ and $g_i^H$. Assumptions~\ref{ass:stoch-regular} and~\ref{ass:finite-penalty} give all needed conditions. Their constants are in $K_{\rm opt}$ and do not depend on $\eps_{\rm kkt}$ or $\zeta$. The candidate satisfies $\mathbb E[\mathcal R_H(\theta_{\rm cand})]\le\eps_{\rm kkt}$.
Let \(L_\alpha=\log(2/((1-\gamma)\alpha_{\rm tail}))\). Then
\begin{equation*}
H=
\left\lceil
\frac{\log(1/((1-\gamma)\alpha_{\rm tail}))}{1-\gamma}
\right\rceil
\le
\frac{C L_\alpha}{1-\gamma},
\end{equation*}
for a universal constant \(C\). Insert this bound and \(\mathsf W_H\) into Lemma~\ref{lem:vr-alm-template}. The optimization cost is
\begin{equation*}
\frac{C_{\rm pg}K_{\rm opt}G^2m}{\eps_{\rm kkt}^{3}(1-\gamma)^3}
L_\alpha^3
\log^4\!\left(
\frac{8K_{\rm opt}G^2m}{\eps_{\rm kkt}\rho\zeta(1-\gamma)^3\alpha_{\rm tail}}
\right)
\end{equation*}
expected transitions. This is the first term of \eqref{eq:variance-aware-pg-samples}.

\par\medskip
\noindent\textbf{Step 3: Validate safety.}\par
\nopagebreak[4]
Now validate the candidate. Condition on the optimization data. Then $\theta_{\rm cand}$ is fixed, the validation trajectories are i.i.d., and $\widehat q^H_{i,\mathrm{val}}(\theta_{\rm cand})=M_{\rm val}^{-1}\sum_{j=1}^{M_{\rm val}}\ind\{b_{i,H}^{(j)}<0\}$ is a Bernoulli average with mean $q^H_{P,i}(\theta_{\rm cand})$. Hoeffding's inequality gives
\begin{equation*}
\mathbb P\!\left(
\left|
\widehat q^H_{i,\mathrm{val}}(\theta_{\rm cand})
-q^H_{P,i}(\theta_{\rm cand})
\right|
>
\sqrt{\frac{\log(2m/\zeta)}{2M_{\rm val}}}
\;\middle|\;\theta_{\rm cand}
\right)
\le \frac{\zeta}{m}.
\end{equation*}
A union bound over $i$ and the choice of $M_{\rm val}$ give, with probability at least $1-\zeta$,
\begin{equation*}
q^H_{P,i}(\theta_{\rm cand})
\le
\widehat q^H_{i,\mathrm{val}}(\theta_{\rm cand})
+\sqrt{\frac{\log(2m/\zeta)}{2M_{\rm val}}}
\le
\widehat q^H_{i,\mathrm{val}}(\theta_{\rm cand})+\rho/2
\end{equation*}
for every $i$. The next lemma links the rounded event to the original event.
\begin{lemma}[Rounded finite tail event certifies infinite-horizon safety]
\label{lem:stoch-tail}
Fix $H=\lceil\log(1/((1-\gamma)\alpha_{\rm tail}))/(1-\gamma)\rceil$ and the conservative rounded budget process used by Algorithm~\ref{alg:trajectory-tail-pg}. For any budget-aware stochastic policy $\pi_\theta$,
\begin{equation}
\left\{\sum\nolimits_{t=0}^{\infty}\gamma^tc_i(s_t,a_t)>d_i\right\}
\subseteq
\left\{b_{i,H}<0\right\}.
\label{eq:stoch-tail-inclusion}
\end{equation}
Consequently,
\begin{equation}
q_{P,i}(\theta)\le q^H_{P,i}(\theta).
\label{eq:stoch-tail-probability}
\end{equation}
\end{lemma}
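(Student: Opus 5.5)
The plan is a pathwise argument, exactly parallel to Lemma~\ref{lem:conservative-rounding}; the only new point is that the policy is stochastic. I will prove the event inclusion \eqref{eq:stoch-tail-inclusion} trajectory by trajectory, for an arbitrary realized sequence $(s_t,a_t)_{t\ge0}$. The argument uses only the deterministic budget update $b_{t+1}=B_t(s_t,a_t,b_t)$ for $t<H$, and never the way the actions were drawn. So it applies to every budget-aware stochastic policy $\pi_\theta$, and the inclusion holds surely. Taking $\mathbb P_{P,\pi_\theta,\mu}$ of both sides then gives \eqref{eq:stoch-tail-probability}.

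I argue by contraposition: suppose $b_{i,H}\ge0$. I first need that the failure value $-1$ is absorbing under $B_h$, since charges $w_{i,h}\ge0$ and exhausted coordinates stay at $-1$. It follows that $b_{i,t}\ge0$ for all $t\le H$. It also follows that $b_{i,H}=b_i^0-\sum_{h=0}^{H-1}w_{i,h}(s_h,a_h)$, so $\sum_{h<H}w_{i,h}(s_h,a_h)\le b_i^0$. The case $b_i^0<0$ is vacuous, because the coordinate starts at $-1$ and stays there.

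Next I use the rounding directions in \eqref{eq:certificate-discretization}. Charges are rounded up, which gives $\gamma^hc_i\le\eta_iw_{i,h}$. The budget is rounded down, which gives $\eta_ib_i^0\le d_i-\alpha_{\rm tail}$. Together these yield $\sum_{h<H}\gamma^hc_i(s_h,a_h)\le d_i-\alpha_{\rm tail}$.

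For the tail, I use $c_i\le1$ and the choice of $H$. This gives $\sum_{h\ge H}\gamma^hc_i\le\gamma^H/(1-\gamma)\le e^{-(1-\gamma)H}/(1-\gamma)\le\alpha_{\rm tail}$, where $\gamma^H\le e^{-(1-\gamma)H}$ and $H\ge\log(1/((1-\gamma)\alpha_{\rm tail}))/(1-\gamma)$. Summing the prefix and tail bounds gives total cost at most $d_i$, so the original event fails. This proves the inclusion.

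There is no real obstacle here. The one point to state carefully is that after time $H$ the rollout freezes $b_t=b_H$ and $\bar h_t=H$. The continuation actions therefore never alter $b_{i,H}$, and the tail bound uses only $c_i\le1$. Hence arbitrary post-$H$ behavior of $\pi_\theta$ is harmless.
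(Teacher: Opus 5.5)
Your proposal is correct and follows essentially the same route as the paper's proof. Both argue by contraposition on a fixed trajectory: the rounded-up charges and rounded-down budget bound the prefix cost by $d_i-\alpha_{\rm tail}$, the choice of $H$ bounds the tail by $\alpha_{\rm tail}$, and taking probabilities gives the inequality. Your explicit remarks that the failure value $-1$ is absorbing and that $\gamma^H\le e^{-(1-\gamma)H}$ spell out steps the paper leaves implicit.
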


Suppose validation passes and set $\widehat\theta=\theta_{\rm cand}$. Then $q^H_{P,i}(\widehat\theta)\le\delta_i$. Lemma~\ref{lem:stoch-tail} gives $q_{P,i}(\widehat\theta)\le\delta_i$. Thus every accepted policy is safe. If the test fails, the algorithm returns \textsc{unresolved}.

Validation uses $mM_{\rm val}H$ transitions. The bound on $H$ gives the second term of \eqref{eq:variance-aware-pg-samples}.

It remains to prove acceptance under a margin. Suppose $q^H_{P,i}(\theta_{\rm cand})\le\delta_i-\rho$ for every $i$. On the same event,
\begin{equation*}
\widehat q^H_{i,\mathrm{val}}(\theta_{\rm cand})
\le q^H_{P,i}(\theta_{\rm cand})+\rho/2
\le \delta_i-\rho/2 .
\end{equation*}
Thus $\widehat q^H_{i,\mathrm{val}}(\theta_{\rm cand})+\rho/2\le\delta_i$ for all $i$. The test accepts the candidate. This proves Theorem~\ref{thm:tail-pg}.

\subsection{Proof of Lemma~\ref{lem:trajectory-oracle}}
\begin{proof}
We handle reward and constraints separately.

\par\medskip
\noindent\textbf{Step 1: Construct the reward oracle.}\par
\nopagebreak[4]
Let $\bar J_P=(1-\gamma)J_P$. Draw $\tau$ with $\mathbb P(\tau=t)=(1-\gamma)\gamma^t$. Run the policy through time $\tau$ and set
\begin{equation*}
\widehat{\bar J}(\theta)=r(s_\tau,a_\tau),\qquad
\widehat{\nabla\bar J}(\theta)
=
r(s_\tau,a_\tau)
\sum\nolimits_{u=0}^{\tau}\nabla_\theta\log\pi_\theta(a_u\mid s_u,\bar h_u,b_u).
\end{equation*}
Then $\mathbb E[\widehat{\bar J}(\theta)]=(1-\gamma)\sum_{t\ge0}\gamma^t\mathbb E[r(s_t,a_t)]=\bar J_P(\theta)$. The score identity gives
\begin{equation*}
\nabla\bar J_P(\theta)
=
(1-\gamma)\sum\nolimits_{t\ge0}\gamma^t\nabla_\theta\mathbb E[r(s_t,a_t)]
=
\mathbb E[\widehat{\nabla\bar J}(\theta)],
\end{equation*}
Rewards and scores are bounded, so the derivative may pass through the sum. Moreover, $\|\widehat{\nabla\bar J}(\theta)\|_2^2\le G^2(\tau+1)^2$. Using $\mathbb E[(\tau+1)^2]\le2(1-\gamma)^{-2}$ and $\mathbb E[\tau+1]=(1-\gamma)^{-1}$ proves the reward bounds. For $f_H=-\bar J_P$, only the sign changes, and $|\widehat f_H|\le1$.

\par\medskip
\noindent\textbf{Step 2: Construct the constraint oracle.}\par
\nopagebreak[4]
One length-$H$ rollout gives all constraint samples. Define
\begin{equation*}
\chi_i=\ind\{b_{i,H}<0\},\qquad
\widehat g_i^H(\theta)=\chi_i+2\rho-\delta_i,\qquad
\widehat{\nabla g_i^H}(\theta)
=
\chi_i\sum\nolimits_{t=0}^{H-1}\nabla_\theta\log\pi_\theta(a_t\mid s_t,t,b_t).
\end{equation*}
Since $\mathbb E\chi_i=q^H_{P,i}(\theta)$, the value sample is unbiased. For the gradient, fix a path $\omega=(s_0,a_0,\ldots,s_H)$. Its budgets and $\chi_i(\omega)$ are fixed. Only the policy terms depend on $\theta$:
\begin{equation*}
\begin{aligned}
\mathbb P_\theta(\omega)
&=
\mu(s_0)\prod\nolimits_{t=0}^{H-1}\pi_\theta(a_t\mid s_t,t,b_t)P(s_{t+1}\mid s_t,a_t),\\
\nabla_\theta\mathbb P_\theta(\omega)
&=
\mathbb P_\theta(\omega)
\sum\nolimits_{t=0}^{H-1}\nabla_\theta\log\pi_\theta(a_t\mid s_t,t,b_t).
\end{aligned}
\end{equation*}
Sum over all paths. This gives $\mathbb E[\widehat{\nabla g_i^H}(\theta)]=\nabla q^H_{P,i}(\theta)=\nabla g_i^H(\theta)$.

Finally, $\|\widehat{\nabla g_i^H}(\theta)\|_2^2\le G^2H^2$ and $|\widehat g_i^H(\theta)|\le3$. A constraint rollout therefore gives $O(mG^2H^3)$ transition-weighted variance. The reward rollout gives $O(G^2(1-\gamma)^{-3})$. Adding them gives $\mathsf W_H$.
\end{proof}

\subsection{Proof of Lemma~\ref{lem:trajectory-difference}}
\begin{proof}
The proof has two steps: unbiasedness and mean-square smoothness.

\par\medskip
\noindent\textbf{Step 1: Prove unbiasedness.}\par
\nopagebreak[4]
The path fixes the rounded budgets. Thus its law depends on $\theta$ only through the policy, with $p_\theta(\tau)=\mu(s_0)\prod_{t=0}^{H-1}\pi_\theta(a_t\mid s_t,t,b_t)P(s_{t+1}\mid s_t,a_t)$.
Policy positivity makes $L_{\theta'\mid\theta}=p_{\theta'}/p_\theta$ well defined. Hence
\begin{equation*}
\mathbb E_{\tau\sim p_\theta}
\!\left[L_{\theta'\mid\theta}(\tau)\widehat\nabla F(\theta';\tau)\right]
=
\mathbb E_{\tau\sim p_{\theta'}}[\widehat\nabla F(\theta';\tau)]
=
\nabla F(\theta').
\end{equation*}
Lemma~\ref{lem:trajectory-oracle} gives the other term. Thus $\Delta_F(\theta,\theta';\tau)$ is unbiased for $\nabla F(\theta)-\nabla F(\theta')$.

\par\medskip
\noindent\textbf{Step 2: Establish mean-square smoothness.}\par
\nopagebreak[4]
Let $R_F(u;\tau)=L_{u\mid\theta}(\tau)\widehat\nabla F(u;\tau)$. The mean-value theorem gives
\begin{equation*}
\|\Delta_F(\theta,\theta';\tau)\|_2
\le
\sup\nolimits_{u\in[\theta,\theta']}\|\nabla_u R_F(u;\tau)\|_{\rm op}
\|\theta-\theta'\|_2 .
\end{equation*}
For a violation component, $\widehat\nabla F(u;\tau)=\chi(\tau)\sum_{t=0}^{H-1}\nabla\log\pi_u(a_t\mid s_t,t,b_t)$. Thus $\nabla_uR_F$ contains likelihood ratios, scores, and score derivatives. Their second moments are bounded on the compact set. Call this bound $L_{\rm pg}^2$. This proves the second-moment claim.

For the reward oracle, condition on the geometric stopping time and use Assumption~\ref{ass:stoch-regular}. The same proof applies. The truncated reward case is the same, but Theorem~\ref{thm:tail-pg} does not use it.
\end{proof}

\subsection{Local KKT conditions}
The fixed-penalty problem uses slack variables. The next lemma links its residuals to the KKT residual.
\begin{lemma}[Slack residual implies inequality KKT residual]
\label{lem:slack-residual}
Consider $\min_\theta f(\theta)$ subject to $g_i(\theta)\le0$, $i\in[m]$. For a slack variable $z\in[0,Z]^m$ and multiplier $y\in\mathbb R^m$, define
\begin{equation}
r_\theta=\left\|\nabla f(\theta)+J_g(\theta)^\top y\right\|_2,\quad
r_c=\|g(\theta)+z\|_1,\quad
r_z=\operatorname{dist}\!\left(0,y+N_{\mathbb R_+^m}(z)\right).
\end{equation}
Here $N_{\mathbb R_+^m}(z)$ is the normal cone. If $\|J_g(\theta)\|_{\mathrm{op}}\le B_J$ and $\|[y]_+\|_1\le\Lambda$, then $\lambda=[y]_+$ satisfies
\begin{equation}
\mathcal R(\theta)
\le
r_\theta+B_Jr_z+r_c+\Lambda r_c+Z\sqrt m\,r_z .
\end{equation}
\end{lemma}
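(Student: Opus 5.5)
The plan is to take $\lambda=[y]_+$ in the definition of $\mathcal R(\theta)$ and bound the three terms of the residual one at a time. The main tool is the normal-cone structure. Since $N_{\mathbb R_+^m}(z)$ is a product of one-dimensional cones, $r_z$ splits coordinatewise. For $z_i>0$ the cone is $\{0\}$, so the coordinate distance is $|y_i|$. For $z_i=0$ it is $(-\infty,0]$, so the coordinate distance is $[y_i]_+$. In both cases we get $|y_i-\lambda_i|\le r_z^{(i)}$ coordinatewise. Indeed, when $z_i>0$ we have $|y_i-[y_i]_+|=[y_i]_-\le|y_i|$. When $z_i=0$ we need $y_i\le0$ up to error, and then $|y_i-[y_i]_+|=|y_i|$ when $y_i\le0$ and $0$ otherwise. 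Similarly $\lambda_i z_i\le z_i\,r_z^{(i)}$, because $\lambda_i>0$ with $z_i>0$ forces $r_z^{(i)}\ge|y_i|=\lambda_i$. I also need $\lambda\in[0,\Lambda]^m$, which holds because each $\lambda_i\le\|[y]_+\|_1\le\Lambda$.

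\emph{Stationarity term.} Write $\nabla f+J_g^\top\lambda=(\nabla f+J_g^\top y)+J_g^\top(\lambda-y)$. This gives the bound $r_\theta+B_J\|\lambda-y\|_2\le r_\theta+B_J r_z$, using $\|\lambda-y\|_2\le\|(r_z^{(i)})_i\|_2=r_z$.

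\emph{Feasibility term.} Since $z\ge0$, each coordinate satisfies $[g_i]_+\le|g_i+z_i|$, so $\|[g]_+\|_1\le r_c$.

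\emph{Complementarity term.} Write $\lambda_i g_i=\lambda_i(g_i+z_i)-\lambda_i z_i$. Summing the first part gives at most $\Lambda r_c$, using $\lambda_i\le\Lambda$ (or $\|\lambda\|_1\le\Lambda$ together with H\"older). Summing the second part gives $\sum_i\lambda_i z_i\le\sum_i z_i r_z^{(i)}\le Z\|r_z^{(\cdot)}\|_1\le Z\sqrt m\,r_z$ by Cauchy--Schwarz.

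Adding the three bounds gives the stated inequality. The only delicate step is the coordinatewise case analysis of the normal cone, which is needed to get both $|y_i-\lambda_i|\le r_z^{(i)}$ and $\lambda_i z_i\le z_i r_z^{(i)}$. I will check the case $z_i=0$, $y_i>0$ separately: there $\lambda_i z_i=0$, so the second bound holds trivially, and the first holds with $|y_i-\lambda_i|=0$.
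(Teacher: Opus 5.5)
Your argument is the paper's proof: you choose $\lambda=[y]_+$ and check $\lambda\in[0,\Lambda]^m$. You then split the stationarity term as $(\nabla f+J_g^\top y)+J_g^\top(\lambda-y)$, use $[g_i]_+\le|g_i+z_i|$ for feasibility, and split $\lambda_i g_i=\lambda_i(g_i+z_i)-\lambda_i z_i$ with Cauchy--Schwarz for complementarity. All of this is correct.

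There is, however, a sign error in exactly the step you flag as delicate. For $z_i=0$ you have $y_i+N_{\mathbb R_+}(0)=(-\infty,y_i]$. This set contains $0$ if and only if $y_i\ge0$, so the coordinate distance is $[y_i]_-$, not $[y_i]_+$. You computed $\operatorname{dist}(y_i,N_{\mathbb R_+}(0))$, whereas the correct quantity is $\operatorname{dist}(-y_i,N_{\mathbb R_+}(0))$.

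Taken literally, your value breaks your key inequality $|y_i-\lambda_i|\le r_z^{(i)}$ in the case $z_i=0$, $y_i<0$. There $|y_i-\lambda_i|=|y_i|>0=[y_i]_+$. Your separate check covers only $z_i=0$, $y_i>0$, which is the trivial case; the case that actually uses $r_z$ is the one with $y_i<0$.

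With the correct value $r_z^{(i)}=[y_i]_-$, you get $|y_i-\lambda_i|=[y_i]_-=r_z^{(i)}$. This is the paper's observation that $|[y_i]_-|\le e_i$ when $z_i=0$, and the rest of your proof goes through unchanged. The correct reading is that an active slack requires $y_i\ge0$ up to error, not $y_i\le0$. That is why $\lambda=[y]_+$ is the natural multiplier choice.
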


\begin{assumption}[Finite violation-probability local regularity]
\label{ass:finite-penalty}
Together with Assumption~\ref{ass:stoch-regular}, assume the following local conditions. In particular, item 4 is an additional residual-domination condition. Consider $\min_\theta f(\theta)$ subject to $g_i(\theta)\le0$, $i\in[m]$, on an open domain $\ThetaSet$. Every parameter iterate and every segment between consecutive iterates lies in a compact convex set $\mathcal K\Subset\ThetaSet$. Set $x=(\theta,z)$, $z\in\mathbb R_+^m$, $h(x)=\ind_{\mathbb R_+^m}(z)$, and $c(x)=g(\theta)+z$. Thus only the slack variable is constrained by $h$. Fix $\beta\ge1$, independently of the target accuracy, and let $\Phi_\beta(x)=f(\theta)+(\beta/2)\|c(x)\|_2^2$ and $F_\beta=\Phi_\beta+h$. All constants below are finite and independent of the target accuracy:
\begin{enumerate}
\item The initialization $x_0=(\theta_0,z_0)$ has $g(\theta_0)\le0$, $z_0\ge0$, and $F_\beta(x_0)-\inf_xF_\beta(x)\le\Delta$.
\item The slack error bound and the following bounds hold:
\begin{equation}
\begin{aligned}
\nu\|c(x)\|_2
&\le
\operatorname{dist}\!\left(-J_c(x)^\top c(x),\partial h(x)\right),\\
\|\nabla f(\theta)\|_2&\le B_f,\qquad
\|J_g(\theta)\|_{\mathrm{op}}\le B_J,\qquad
\|z\|_\infty\le Z .
\end{aligned}
\end{equation}
Here $\partial h(x)=\{0\}\times N_{\mathbb R_+^m}(z)$.
\item The functions $f$ and $g_i$ have Lipschitz gradients, and $y=\beta c(x)$ satisfies $\|[y]_+\|_1\le\Lambda$ along the visited iterates. The stochastic samples are unbiased; refresh estimators have bounded variance; and recursive differences are mean-square smooth. For fixed-distribution oracles, use common-random-number differences. For trajectory gradients, use the likelihood-ratio difference estimator from Lemma~\ref{lem:trajectory-difference}. With $C(x,\xi)=\widehat g(\theta,\xi)+z$ and $J_C(x,\xi)=[\,\widehat J_g(\theta,\xi)\ \ I_m\,]$, there are constants $K_c,K_J,K_L,K_V$ such that
\begin{equation}
\begin{aligned}
&\mathbb E\|C(x,\xi)\|_2^2\le K_c^2,\qquad
\mathbb E\|J_C(x,\xi)\|_{\mathrm{op}}^2\le K_J^2,\\
&\mathbb E\|C(x,\xi)-C(x',\xi)\|_2^2
\le K_L^2\|x-x'\|_2^2,\\
&\mathbb E\|J_C(x,\xi)-J_C(x',\xi)\|_{\mathrm{op}}^2
\le K_L^2\|x-x'\|_2^2,\\
&\mathbb E\|\widehat\nabla f(\theta;\xi)-\widehat\nabla f(\theta';\xi)\|_2^2
\le K_L^2\|x-x'\|_2^2,\\
&\mathbb E\|\widehat\nabla f(\theta;\xi)-\nabla f(\theta)\|_2^2
\le K_V\mathsf V,\\
&\mathbb E\|C(x,\xi)-c(x)\|_2^2
+\mathbb E\|J_C(x,\xi)-J_c(x)\|_{\mathrm{op}}^2
\le K_V\mathsf V,\quad \mathsf V\ge1 .
\end{aligned}
\end{equation}
\item For every visited $x$, let $y=\beta c(x)$ and define $r_\theta,r_z,r_c$ as in Lemma~\ref{lem:slack-residual}. Assume
\begin{equation}
r_\theta(x)+r_z(x)+r_c(x)
\le
K_{\rm res}\,
\operatorname{dist}\!\left(0,\nabla\Phi_\beta(x)+\partial h(x)\right)
\label{eq:residual-domination}
\end{equation}
for a finite constant $K_{\rm res}$.
\end{enumerate}
We write
\begin{equation}
K_{\rm opt}
=
\operatorname{poly}\!\left(
L_f,L_g,B_J,\Lambda,Z,\nu^{-1},\Delta,
K_c,K_J,K_L,K_V,L_{\rm pg},K_{\rm res},\beta
\right)
\end{equation}
Here $L_f,L_g$ are the Lipschitz-gradient constants and $L_{\rm pg}$ is the trajectory-difference constant. These quantities may depend on the policy class, horizon, budget grid, tightening margin, and visited set, but not on $\eps$ or the confidence level. We absorb universal polynomial products of them into $K_{\rm opt}$.
\end{assumption}

This assumption is used only for the KKT bound. The safety proof uses fresh validation data.

\subsection{Proof of Lemma~\ref{lem:vr-alm-template}}
We first bound the KKT residual by the stationarity residual of $F_\beta=\Phi_\beta+h$. For $x^+=(\theta^+,z^+)$, Assumption~\ref{ass:finite-penalty} gives
\begin{equation*}
r_\theta(x^+)+r_z(x^+)+r_c(x^+)
\le
K_{\rm opt}\,
\operatorname{dist}\!\left(0,\nabla\Phi_\beta(x^+)+\partial h(x^+)\right).
\end{equation*}
Lemma~\ref{lem:slack-residual} shows that it is enough to make
\begin{equation*}
\operatorname{dist}\!\left(0,\nabla\Phi_\beta(x^+)+\partial h(x^+)\right)
\le c\,\eps/K_{\rm opt}
\end{equation*}
for a small universal constant $c$. The next lemma gives an oracle for $F_\beta$.
\begin{lemma}[Penalty-gradient oracle calibration]
\label{lem:penalty-oracle}
Under Assumption~\ref{ass:finite-penalty}, for the fixed $\beta\ge1$, the smooth part
\begin{equation*}
\Phi_\beta(x)=f(\theta)+\frac{\beta}{2}\|c(x)\|_2^2
\end{equation*}
has $K_{\rm sm}\beta$-Lipschitz gradient, an unbiased refresh-gradient oracle with variance at most $K_\sigma^2\beta^2\mathsf V$, and a recursive gradient-difference oracle with mean-square constant $K_{\rm ms}\beta$. Moreover, $F_\beta=\Phi_\beta+h$ satisfies $F_\beta(x_0)-\inf_xF_\beta(x)\le\Delta$.
\end{lemma}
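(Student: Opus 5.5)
The claim has three parts: a smoothness constant for $\nabla\Phi_\beta$, refresh and recursive-difference oracles for $\nabla\Phi_\beta$, and the initial gap bound. I would verify them in that order, working directly from the structure $\Phi_\beta(x)=f(\theta)+\frac{\beta}{2}\|c(x)\|_2^2$ with $c(x)=g(\theta)+z$.

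\textbf{Smoothness.} Start from
\[
\nabla\Phi_\beta(x)=\begin{pmatrix}\nabla f(\theta)\\0\end{pmatrix}+\beta J_c(x)^\top c(x),
\qquad J_c=[\,J_g(\theta)\ \ I_m\,].
\]
For two points $x,x'$ in the compact set $\mathcal K\times[0,Z]^m$, write
\[
J_c(x)^\top c(x)-J_c(x')^\top c(x')=J_c(x)^\top\bigl(c(x)-c(x')\bigr)+\bigl(J_c(x)-J_c(x')\bigr)^\top c(x').
\]
Bound the first term by $\sqrt{1+B_J^2}\,(B_J+1)\|x-x'\|$. Bound the second by $L_g\|c(x')\|\,\|x-x'\|$. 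Here $\|c\|$ is uniformly bounded on the visited set, since $|g_i|\le 3$ and $\|z\|_\infty\le Z$. Adding $L_f$ and using $\beta\ge1$ gives the $K_{\rm sm}\beta$ constant.

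\textbf{Oracles.} The refresh estimator will be
\[
\widehat{\nabla\Phi}=\widehat\nabla f(\theta;\xi)+\beta J_C(x,\xi')^\top C(x,\xi''),
\]
where $\xi'$ and $\xi''$ are independent. Independence makes the product unbiased: $\mathbb E[J_C^\top C]=J_c^\top c$. This is exactly why the algorithm uses independent samples for Jacobian--constraint products.

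For the variance, decompose the product error as
\[
J_C^\top C-J_c^\top c=(J_C-J_c)^\top C+J_c^\top(C-c).
\]
Use independence to factor the expectations, then apply $\mathbb E\|C\|^2\le K_c^2$ and the bound $K_V\mathsf V$ from item 3 of Assumption~\ref{ass:finite-penalty}. Since $\mathsf V\ge1$, this gives variance at most $K_\sigma^2\beta^2\mathsf V$.

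For the recursive-difference oracle, apply the same add-and-subtract split to
\[
J_C(x,\xi')^\top C(x,\xi'')-J_C(x',\xi')^\top C(x',\xi'').
\]
Each piece is then a product of a second-moment-bounded factor and a mean-square Lipschitz factor with constant $K_L$, again factored by independence. The reward-difference part uses the likelihood-ratio estimator of Lemma~\ref{lem:trajectory-difference} with constant $K_L$ (or $L_{\rm pg}$). Together these give a mean-square constant of order $(K_L+K_JK_L+K_cK_L)\beta$, which absorbs into $K_{\rm ms}\beta$.

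\textbf{Main obstacle.} The delicate point is the trajectory case, where $C(x,\xi)$ and $J_C(x,\xi)$ come from likelihood-ratio reweighting rather than common random numbers. There the difference $C(x,\xi)-C(x',\xi)$ must be interpreted as the Lemma~\ref{lem:trajectory-difference} estimator. I would rely on item 3 of Assumption~\ref{ass:finite-penalty}, which postulates exactly these mean-square bounds for whichever difference estimator is used. This sidesteps an explicit computation of likelihood-ratio moments.

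\textbf{Initial gap.} The bound $F_\beta(x_0)-\inf_xF_\beta(x)\le\Delta$ is restated verbatim in item 1 of Assumption~\ref{ass:finite-penalty}, so that part is immediate.
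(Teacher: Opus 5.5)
Your smoothness bound, your independent-sample refresh estimator with its variance split, and your appeal to item~1 for the initial gap all agree with the paper. (The paper splits the product error as $J_C^\top(C-c)+(J_C-J_c)^\top c$ rather than your $(J_C-J_c)^\top C+J_c^\top(C-c)$; the difference is immaterial.)

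The gap is the recursive difference oracle in the trajectory case, which is the case the model-free algorithm actually uses. Lemma~\ref{lem:self-contained-vr-subproblem} requires the difference estimator to be \emph{unbiased}, and your proposal never checks this. Take $\xi',\xi''$ drawn from $p_\theta$, the law at the current point. The sampled constraint value depends on the parameter only through that law, so $\mathbb E[C(x',\xi'')]=g(\theta)+z'$ rather than $c(x')=g(\theta')+z'$. Likewise $\mathbb E[J_C(x',\xi')]\neq J_c(x')$. Hence $J_C(x,\xi')^\top C(x,\xi'')-J_C(x',\xi')^\top C(x',\xi'')$ is biased.

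Reinterpreting each factor difference in your split as a Lemma~\ref{lem:trajectory-difference} difference does not repair this. With $L'=L_{\xi'}(x'\mid x)$, the cross term $\bigl(J_C(x,\xi')-L'J_C(x',\xi')\bigr)^\top C(x',\xi'')$ has mean $\bigl(J_c(x)-J_c(x')\bigr)^\top\bigl(g(\theta)+z'\bigr)$. The target is $\bigl(J_c(x)-J_c(x')\bigr)^\top c(x')$, so a bias $\bigl(J_c(x)-J_c(x')\bigr)^\top\bigl(g(\theta)-g(\theta')\bigr)$ remains. Item~3 of Assumption~\ref{ass:finite-penalty} bounds moments of the individual factors and their differences. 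It does not supply an unbiased estimator of the product difference, so your claim that it covers ``whichever difference estimator is used'' skips the actual construction.

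The missing idea is the paper's change of measure for a product of two independent trajectory samples. The $x'$-product must carry one ratio per sample, $L_{\xi_1}(x'\mid x)L_{\xi_2}(x'\mid x)J_C(x',\xi_1)^\top C(x',\xi_2)$. Independence then gives mean $J_c(x')^\top c(x')$ factor by factor.

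With that estimator your add-and-subtract split does go through, but the cross term becomes $\bigl(J_C(x,\xi_1)-L_{\xi_1}(x'\mid x)J_C(x',\xi_1)\bigr)^\top L_{\xi_2}(x'\mid x)C(x',\xi_2)$. You therefore also need a second-moment bound under $p_\theta$ for the ratio-weighted factor $L_{\xi_2}(x'\mid x)C(x',\xi_2)$, which is not one of the listed item~3 quantities. One way to get it is to write this factor as $C(x,\xi_2)-\bigl(C(x,\xi_2)-L_{\xi_2}(x'\mid x)C(x',\xi_2)\bigr)$. Then use $\mathbb E\|C(x,\xi_2)\|_2^2\le K_c^2$, the likelihood-ratio difference bound, and the bounded diameter of the visited set. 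The paper instead bounds the whole product directly, by the mean-value theorem applied to $u\mapsto L_{\xi_1}(u\mid x)L_{\xi_2}(u\mid x)J_C(u,\xi_1)^\top C(u,\xi_2)$. That argument uses the score and likelihood-ratio control on $\mathcal K$ from Lemma~\ref{lem:trajectory-difference}.
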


Thus, after increasing $K_{\rm opt}$, we have $L_\beta\le K_{\rm opt}$, $\sigma_\beta^2\le K_{\rm opt}\mathsf V$, and $L_{{\rm ms},\beta}\le K_{\rm opt}$. Let $\bar\sigma_\beta=\max\{1,\sigma_\beta\}$. We use the next proximal bound.
\begin{lemma}[Self-contained variance-reduced proximal subproblem]
\label{lem:self-contained-vr-subproblem}
Let $F(x)=\Phi(x)+h(x)$, where $\Phi$ is $L$-smooth, $h$ is closed convex with a tractable proximal map, and $F(x_0)-\inf_xF(x)\le\Delta$. Suppose the refresh estimator is unbiased with variance at most $\sigma^2$ and the gradient-difference estimator is unbiased with second moment at most $L_{\rm ms}^2\|x-x'\|_2^2$. Set $\bar\sigma=\max\{1,\sigma\}$ and run
\begin{equation}
x_{t+1}=\operatorname{prox}_{\eta h}(x_t-\eta v_t),
\end{equation}
using fresh independent samples, refreshing $v_t$ every $q$ steps and otherwise using the recursive mini-batch update. If
\begin{equation}
\eta\le(8L)^{-1},\qquad
B\ge \frac{256\bar\sigma^2}{\varepsilon^2},\qquad
b\ge512q\eta^2L_{\rm ms}^2,\qquad
T\ge \frac{1024\Delta}{\eta\varepsilon^2},
\end{equation}
then, for $R$ uniform on $\{0,\ldots,T-1\}$, the computed iterate $x_{R+1}$ satisfies
\begin{equation}
\mathbb E\!\left[\operatorname{dist}(0,\nabla\Phi(x_{R+1})+\partial h(x_{R+1}))\right]\le\varepsilon .
\end{equation}
With $q=\lceil\bar\sigma/\varepsilon\rceil$ and $L_{\rm ms}=O(L)$, the number of stochastic gradient calls is
\begin{equation}
\widetilde O\!\left(\frac{\bar\sigma L\Delta}{\varepsilon^3}+\frac{\bar\sigma^2}{\varepsilon^2}\right).
\end{equation}
\end{lemma}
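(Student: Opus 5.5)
This is a standard SPIDER/SARAH-type proximal analysis run in expectation. The plan is to prove a one-step descent inequality for $F=\Phi+h$ under the prox step and then control the estimator error $e_t:=v_t-\nabla\Phi(x_t)$ by a martingale-variance recursion. Finally, convert the averaged gradient-mapping bound into a bound on $\operatorname{dist}(0,\nabla\Phi(x_{R+1})+\partial h(x_{R+1}))$.

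\textbf{Descent step.} By $L$-smoothness and the prox optimality condition $x_t-\eta v_t-x_{t+1}\in\eta\,\partial h(x_{t+1})$, the three-point property of convex $h$ gives
\[
F(x_{t+1})\le F(x_t)-\Big(\tfrac{1}{2\eta}-\tfrac L2\Big)\|x_{t+1}-x_t\|^2+\langle \nabla\Phi(x_t)-v_t,x_{t+1}-x_t\rangle .
\]
Young's inequality then yields
\[
F(x_{t+1})\le F(x_t)-\tfrac{1}{8\eta}\|x_{t+1}-x_t\|^2+2\eta\|e_t\|^2,
\]
using $\eta\le 1/(8L)$. Next, the prox condition gives the element
\[
\xi_{t+1}:=\nabla\Phi(x_{t+1})-v_t-\tfrac1\eta(x_{t+1}-x_t)\in\nabla\Phi(x_{t+1})+\partial h(x_{t+1}),
\]
so that
\[
\operatorname{dist}\le \|e_t\|+(L+1/\eta)\|x_{t+1}-x_t\|\le\|e_t\|+\tfrac{2}{\eta}\|x_{t+1}-x_t\|.
\]

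\textbf{Variance recursion.} Within an epoch of length $q$ starting at a refresh time $t_0$, we have $v_t=v_{t-1}+\frac1b\sum(\text{difference estimates})$. Since the samples are fresh and unbiased, the errors form a martingale, and
\[
\mathbb E\|e_t\|^2\le \frac{\sigma^2}{B}+\frac{L_{\rm ms}^2}{b}\sum_{s=t_0}^{t-1}\mathbb E\|x_{s+1}-x_s\|^2 .
\]
Summing over $t$ in the epoch gives
\[
\sum_t \mathbb E\|e_t\|^2\le \frac{q\sigma^2}{B}+\frac{qL_{\rm ms}^2}{b}\sum_t \mathbb E\|x_{t+1}-x_t\|^2 .
\]
With $b\ge 512 q\eta^2L_{\rm ms}^2$, the second term is at most $\frac{1}{512\eta^2}\sum\|x_{t+1}-x_t\|^2$. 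Multiplied by $2\eta$, this is absorbed by half of the $\frac1{8\eta}$ descent term. Telescoping over $T$ steps then gives
\[
\frac{1}{16\eta}\sum_t\mathbb E\|x_{t+1}-x_t\|^2\le \Delta+2\eta T\sigma^2/B,
\]
and likewise $\sum_t\mathbb E\|e_t\|^2\le T\sigma^2/B+\frac{1}{512\eta^2}\sum_t\mathbb E\|x_{t+1}-x_t\|^2$.

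\textbf{Conclusion.} Average the distance bound over the uniform index $R$ and apply Jensen:
\[
\mathbb E\operatorname{dist}\le\Big(\tfrac{2}{T}\sum_t\mathbb E\|e_t\|^2+\tfrac{8}{\eta^2T}\sum_t\mathbb E\|x_{t+1}-x_t\|^2\Big)^{1/2}.
\]
Inserting the bounds above leaves terms of order $\Delta/(\eta T)+\sigma^2/B$. The choices $B\ge 256\bar\sigma^2/\varepsilon^2$ and $T\ge1024\Delta/(\eta\varepsilon^2)$ make each term at most a small fraction of $\varepsilon^2$, so the total is at most $\varepsilon^2$; tracking the numerical constants is routine.

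For the call count, the number of refreshes is $T/q$, each costing $B$ calls, and the recursive steps cost $Tb$ in total. Take $\eta=1/(8L)$, $q=\lceil\bar\sigma/\varepsilon\rceil$, $b=\Theta(q)$ (using $L_{\rm ms}=O(L)$), and $T=\Theta(L\Delta/\varepsilon^2)$. This gives $TB/q+Tb=O(\bar\sigma L\Delta/\varepsilon^3)$, plus at least one refresh of cost $B=O(\bar\sigma^2/\varepsilon^2)$. The count is $\widetilde O$ because of ceilings.

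\textbf{Main obstacle.} The delicate step is the martingale variance bound across epochs. The iterate $x_{s+1}$ depends on $v_s$, so I would condition on the filtration generated by all earlier batches. Fresh independence of each mini-batch then makes the cross terms vanish, and the mean-square smoothness bound applies conditionally. After that, the constant bookkeeping needed to absorb the error term into the descent is what fixes the factors $512$ and $1024$.
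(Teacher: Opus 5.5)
Your proposal follows the paper's route in all essentials: a descent inequality for the inexact prox step, the SARAH/STORM epoch bound on $\mathbb E\|e_t\|^2$, absorption of the movement term using $b\ge512q\eta^2L_{\rm ms}^2$, and a residual bound from prox optimality. It is also a bit more direct. You control $\|x_{t+1}-x_t\|^2$ directly, whereas the paper detours through the exact gradient mapping $G_t$ and pays two factor-of-$2$ conversions between $\|G_t\|^2$ and $\|d_t\|^2$. Your call-count argument matches the paper's.

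The gap is in the step you call routine: with the constants you actually derived, the final check does not close at the thresholds fixed in the statement. From $\frac{1}{16\eta}\sum_t\mathbb E\|x_{t+1}-x_t\|^2\le\Delta+2\eta T\sigma^2/B$ you get
\[
\frac{8}{\eta^2T}\sum_t\mathbb E\|x_{t+1}-x_t\|^2\le\frac{128\Delta}{\eta T}+\frac{256\sigma^2}{B}.
\]
When $\sigma\ge1$ and $B$ sits at its threshold $256\bar\sigma^2/\varepsilon^2$, the second term alone equals $\varepsilon^2$. Adding $128\Delta/(\eta T)\le\varepsilon^2/8$ and the term $\frac{2}{T}\sum_t\mathbb E\|e_t\|^2\approx0.008\varepsilon^2$ gives about $1.13\varepsilon^2$. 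So your chain only yields $\mathbb E[\operatorname{dist}]\le1.07\varepsilon$, and the claim that each term is a small fraction of $\varepsilon^2$ is false as written.

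The fix is to keep constants your own inequalities already provide.
\begin{itemize}
\item With $L\eta\le1/8$, the descent coefficient after Young is $\frac{1}{2\eta}-\frac{L}{2}-\frac{1}{8\eta}\ge\frac{5}{16\eta}$, not $\frac{1}{8\eta}$. The full three-point inequality even gives $\frac1\eta-\frac L2$ before Young.
\item The residual bound is $\|e_t\|+\frac{1+L\eta}{\eta}\|x_{t+1}-x_t\|\le\|e_t\|+\frac{9}{8\eta}\|x_{t+1}-x_t\|$, not $\frac{2}{\eta}\|x_{t+1}-x_t\|$.
\end{itemize}
Either change alone brings the total below $\varepsilon^2$: about $0.37\varepsilon^2$ with the residual fix and about $0.24\varepsilon^2$ with the descent fix. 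With both, absorption gives $\frac{79}{256\eta}\sum_t\mathbb E\|x_{t+1}-x_t\|^2\le\Delta+2\eta T\sigma^2/B$, and the total is roughly $0.08\varepsilon^2$.

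For comparison, the paper keeps the descent term $\frac{\eta}{4}\|d_t\|^2=\frac{1}{4\eta}\|x_{t+1}-x_t\|^2$ and the factor $1+L\eta$, and its arithmetic closes at about $0.945\varepsilon^2$. This bookkeeping is therefore exactly where the stated constants $256$, $512$, $1024$ are tested, not a routine afterthought.
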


Apply the lemma to $F_\beta$ with $\eps_{\rm int}=c\eps/K_{\rm opt}$. Choose
\begin{equation*}
\eta\asymp L_\beta^{-1},\qquad
q\asymp \bar\sigma_\beta/\eps_{\rm int},\qquad
B\asymp \bar\sigma_\beta^2/\eps_{\rm int}^2,\qquad
b\asymp q\eta^2L_{{\rm ms},\beta}^2,\qquad
T\asymp \Delta L_\beta/\eps_{\rm int}^2 .
\end{equation*}

The refresh steps use $\lceil T/q\rceil B$ samples. The other steps use $Tb$ samples. Thus
\begin{equation*}
\left\lceil\frac{T}{q}\right\rceil B+Tb
\le
\widetilde O\!\left(
\frac{\bar\sigma_\beta L_\beta\Delta}{\eps_{\rm int}^3}
+\frac{\bar\sigma_\beta^2}{\eps_{\rm int}^2}
\right)
\le
\widetilde O\!\left(K_{\rm opt}\mathsf V\eps^{-3}\right).
\end{equation*}
The last inequality uses $\eps\in(0,1)$ and the definition of $K_{\rm opt}$. The stationarity residual is at most $c\eps/K_{\rm opt}$ in expectation. Set $\widehat\theta=\theta^+$. The residual bound and Lemma~\ref{lem:slack-residual} give $\mathbb E[\mathcal R(\widehat\theta)]\le\eps$. If each sample is weighted by its rollout length, $\mathsf V$ becomes $\mathsf W$.
\hfill\(\square\)

\subsection{Proof of Lemma~\ref{lem:self-contained-vr-subproblem}}
The proof has two steps. We first establish descent for the inexact proximal update and then control the recursive gradient error.
\par\medskip
\noindent\textbf{Step 1: Establish descent.}\par
\nopagebreak[4]
Define the exact gradient mapping $G_t$, the computed step $d_t$, and the gradient error $e_t$ by
\begin{equation*}
G_t=\frac1\eta\left(x_t-\operatorname{prox}_{\eta h}(x_t-\eta\nabla\Phi(x_t))\right),
\qquad
d_t=\frac1\eta(x_t-x_{t+1}),
\qquad
e_t=v_t-\nabla\Phi(x_t).
\end{equation*}
The proximal map is nonexpansive, so $\|d_t-G_t\|_2\le \|e_t\|_2$ and $\|G_t\|_2^2\le2\|d_t\|_2^2+2\|e_t\|_2^2$. Proximal optimality gives $d_t-v_t\in\partial h(x_{t+1})$, while smoothness gives
\begin{equation*}
\begin{aligned}
h(x_{t+1})-h(x_t)
&\le \eta\langle v_t,d_t\rangle-\frac{\eta}{2}\|d_t\|_2^2,\\
\Phi(x_{t+1})-
\Phi(x_t)
&\le -\eta\langle\nabla\Phi(x_t),d_t\rangle+\frac{L\eta^2}{2}\|d_t\|_2^2 .
\end{aligned}
\end{equation*}
Add the last two bounds and use $v_t=\nabla\Phi(x_t)+e_t$:
\begin{equation*}
F(x_{t+1})
\le
F(x_t)-\eta\left(\frac12-\frac{L\eta}{2}\right)\|d_t\|_2^2
+\eta\langle e_t,d_t\rangle .
\end{equation*}
Use $\eta\le(8L)^{-1}$ and $\langle e_t,d_t\rangle\le\frac18\|d_t\|_2^2+2\|e_t\|_2^2$. Then
\begin{equation*}
\mathbb E F(x_{t+1})
\le
\mathbb E F(x_t)-\frac{\eta}{4}\mathbb E\|d_t\|_2^2
+2\eta\mathbb E\|e_t\|_2^2 .
\end{equation*}
Sum over $t$ and use $\|G_t\|^2\le2\|d_t\|^2+2\|e_t\|^2$:
\begin{equation*}
\sum\nolimits_{t=0}^{T-1}\mathbb E\|G_t\|_2^2
\le
\frac{8\Delta}{\eta}+18\sum\nolimits_{t=0}^{T-1}\mathbb E\|e_t\|_2^2 .
\end{equation*}

\par\medskip
\noindent\textbf{Step 2: Control the recursive gradient error.}\par
\nopagebreak[4]
We next bound $e_t$. We use the following SARAH/STORM recursion \citep{Yuan2020STORMPG,Zhang2021TSIVRPG}.
\begin{lemma}[Variance-reduced gradient recursion]
\label{lem:vr-recursion}
Let $G(\theta;\xi)$ be an unbiased refresh estimator and let $D(\theta,\theta';\xi)$ be a stochastic difference estimator satisfying
\begin{equation}
\begin{aligned}
\mathbb E[G(\theta;\xi)]
&=\nabla\Phi(\theta),
&\mathbb E\|G(\theta;\xi)-\nabla\Phi(\theta)\|_2^2
&\le\sigma^2,\\
\mathbb E[D(\theta,\theta';\xi)]
&=\nabla\Phi(\theta)-\nabla\Phi(\theta'),
&\mathbb E\|D(\theta,\theta';\xi)\|_2^2
&\le L_{\rm ms}^2\|\theta-\theta'\|_2^2 .
\end{aligned}
\end{equation}
At the start $s$ of an epoch and, respectively, at a recursive step $t>s$, set
\begin{equation}
v_s=\frac1B\sum\nolimits_{j=1}^BG(\theta_s;\xi_j),
\qquad
v_t=v_{t-1}
+\frac1b\sum\nolimits_{j=1}^bD(\theta_t,\theta_{t-1};\xi_{t,j}),
\end{equation}
using fresh samples in every mini-batch. Then
\begin{equation}
\mathbb E\|v_t-\nabla\Phi(\theta_t)\|_2^2
\le
\frac{\sigma^2}{B}
+\frac{L_{\rm ms}^2}{b}
\sum\nolimits_{r=s+1}^t
\mathbb E\|\theta_r-\theta_{r-1}\|_2^2 .
\end{equation}
\end{lemma}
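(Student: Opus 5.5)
The statement to prove is Lemma~\ref{lem:vr-recursion}, the standard SARAH/STORM error recursion. I would argue by induction on $t$ within one epoch, tracking the error $e_t:=v_t-\nabla\Phi(\theta_t)$. Fix an epoch starting at $s$ and let $\mathcal G_t$ denote the sigma-field generated by everything up to and including the computation of $\theta_t$. This includes all mini-batches used for $v_{s},\dots,v_{t-1}$, but not the fresh batch $\{\xi_{t,j}\}$. Since $\theta_t$ and $\theta_{t-1}$ are $\mathcal G_t$-measurable and the batch at step $t$ is fresh, the conditional expectations of $G$ and $D$ given $\mathcal G_t$ are exactly the stated unbiased means and second-moment bounds evaluated at those points.

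\emph{Base case.} At $t=s$, $e_s=\frac1B\sum_j\bigl(G(\theta_s;\xi_j)-\nabla\Phi(\theta_s)\bigr)$ is an average of $B$ conditionally i.i.d.\ mean-zero vectors given $\mathcal G_s$. Hence $\mathbb E\|e_s\|_2^2\le\sigma^2/B$, which is the claim with an empty sum.

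\emph{Recursive step.} For $t>s$, write
\[
e_t=e_{t-1}+\xi\text{-term},\qquad \text{where the }\xi\text{-term is } \frac1b\sum_{j=1}^b\bigl(D(\theta_t,\theta_{t-1};\xi_{t,j})-(\nabla\Phi(\theta_t)-\nabla\Phi(\theta_{t-1}))\bigr).
\]
Conditional on $\mathcal G_t$, this increment has mean zero and $e_{t-1}$ is fixed, so the cross term vanishes. This gives
\[
\mathbb E\|e_t\|_2^2=\mathbb E\|e_{t-1}\|_2^2+\mathbb E\|\text{increment}\|_2^2.
\]
The increment is an average of $b$ conditionally i.i.d.\ centered terms, and a centered variable's second moment is at most its raw second moment. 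Therefore
\[
\mathbb E\|\text{increment}\|_2^2\le\frac{L_{\rm ms}^2}{b}\,\mathbb E\|\theta_t-\theta_{t-1}\|_2^2.
\]
Unrolling from $s$ to $t$ gives the stated bound.

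\emph{Main obstacle.} The only delicate point is the measurability bookkeeping. One must check that $e_{t-1}$ and the step $\theta_t-\theta_{t-1}$ are determined before the step-$t$ batch is drawn, so that the martingale-difference structure holds. Here $\theta_t$ depends on $v_{t-1}$ but not on $\xi_{t,\cdot}$, and the fresh-sample requirement in the lemma is exactly what guarantees this. Once the filtration is set up correctly, the rest is the routine orthogonality computation above.
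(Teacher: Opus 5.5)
Your proposal is correct and matches the paper's proof: you write $e_t=e_{t-1}+\Delta_t$ with $\Delta_t$ a conditionally mean-zero mini-batch increment, kill the cross term by conditioning on the pre-batch $\sigma$-field, bound the conditional second moment by $L_{\rm ms}^2\|\theta_t-\theta_{t-1}\|_2^2/b$, and telescope from the $\sigma^2/B$ refresh error. Your explicit filtration bookkeeping ($\theta_t$ depends on $v_{t-1}$ but not on the step-$t$ batch) spells out what the paper leaves implicit in its $\mathcal F_{t-1}$.
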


Apply Lemma~\ref{lem:vr-recursion} to each epoch. If an epoch starts at $s$, let $\ell_s=\min\{q,T-s\}$. Then
\begin{equation*}
\sum\nolimits_{t=s}^{s+\ell_s-1}\mathbb E\|e_t\|_2^2
\le
\frac{\ell_s\sigma^2}{B}
+\frac{qL_{\rm ms}^2}{b}
\sum\nolimits_{r=s+1}^{s+\ell_s-1}\mathbb E\|x_r-x_{r-1}\|_2^2 .
\end{equation*}
Sum over all epochs. Also use $\|x_r-x_{r-1}\|_2^2=\eta^2\|d_{r-1}\|_2^2\le2\eta^2\|G_{r-1}\|_2^2+2\eta^2\|e_{r-1}\|_2^2$. We get
\begin{equation*}
\sum\nolimits_{t=0}^{T-1}\mathbb E\|e_t\|_2^2
\le
\frac{T\sigma^2}{B}
+\frac{2q\eta^2L_{\rm ms}^2}{b}
\sum\nolimits_{t=0}^{T-1}\mathbb E\|G_t\|_2^2
+\frac{2q\eta^2L_{\rm ms}^2}{b}
\sum\nolimits_{t=0}^{T-1}\mathbb E\|e_t\|_2^2 .
\end{equation*}
Since $b\ge512q\eta^2L_{\rm ms}^2$, move the last term to the left:
\begin{equation*}
\sum\nolimits_{t=0}^{T-1}\mathbb E\|e_t\|_2^2
\le
\frac{2T\sigma^2}{B}
+\frac1{128}\sum\nolimits_{t=0}^{T-1}\mathbb E\|G_t\|_2^2 .
\end{equation*}
Insert this into the descent bound:
\begin{equation*}
\sum\nolimits_{t=0}^{T-1}\mathbb E\|G_t\|_2^2
\le
\frac{10\Delta}{\eta}+\frac{42T\sigma^2}{B}.
\end{equation*}
The choices of $B$ and $T$ give
\[
\frac1T\sum_{t=0}^{T-1}\mathbb E\|G_t\|_2^2
\le
\left(\frac{10}{1024}+\frac{42}{256}\right)\varepsilon^2,
\qquad
\frac1T\sum_{t=0}^{T-1}\mathbb E\|e_t\|_2^2
\le
\frac{2}{256}\varepsilon^2
+\frac1{128T}\sum_{t=0}^{T-1}\mathbb E\|G_t\|_2^2.
\]
Also, $\|d_t-G_t\|_2\le\|e_t\|_2$, so $T^{-1}\sum_t\mathbb E\|d_t\|_2^2\le2T^{-1}\sum_t\mathbb E(\|G_t\|_2^2+\|e_t\|_2^2)$. Proximal optimality gives
\[
\operatorname{dist}\!\left(0,\nabla\Phi(x_{t+1})+\partial h(x_{t+1})\right)
\le
(1+L\eta)\|d_t\|_2+\|e_t\|_2.
\]
Square this bound and average over uniform $R$. Use $L\eta\le1/8$ and the two bounds above. The result is at most $\varepsilon^2$. Jensen's inequality gives the stated expected residual. Finally, refresh steps use $\lceil T/q\rceil B$ samples and recursive steps use $Tb$ samples. Substitute $q=\lceil\bar\sigma/\varepsilon\rceil$ and $L_{\rm ms}=O(L)$ to get $\widetilde O(\bar\sigma L\Delta\varepsilon^{-3}+\bar\sigma^2\varepsilon^{-2})$.
\hfill\(\square\)

\subsection{Proof of Lemma~\ref{lem:vr-recursion}}
At a refresh step, independence gives $\mathbb E\|v_s-\nabla\Phi(\theta_s)\|_2^2\le\sigma^2/B$. For $t>s$, define
\begin{equation*}
\Delta_t
=
\frac1b\sum\nolimits_{j=1}^bD(\theta_t,\theta_{t-1};\xi_{t,j})
-\left(\nabla\Phi(\theta_t)-\nabla\Phi(\theta_{t-1})\right).
\end{equation*}
Thus $\mathbb E[\Delta_t\mid\mathcal F_{t-1}]=0$ and $\mathbb E[\|\Delta_t\|_2^2\mid\mathcal F_{t-1}]\le (L_{\rm ms}^2/b)\|\theta_t-\theta_{t-1}\|_2^2$. Also, $v_t-\nabla\Phi(\theta_t)=v_{t-1}-\nabla\Phi(\theta_{t-1})+\Delta_t$. The cross term has conditional mean zero. Apply this identity from $s+1$ to $t$ to get the stated bound.
\hfill\(\square\)

\subsection{Proof of Lemma~\ref{lem:slack-residual}}
\begin{proof}
Since $z_i\ge0$, we have $[g_i(\theta)]_+\le |g_i(\theta)+z_i|$, and hence $\|[g(\theta)]_+\|_1\le r_c$. The normal cone is
\begin{equation*}
N_{\mathbb R_+}(z_i)=
\begin{cases}
(-\infty,0], & z_i=0,\\
\{0\}, & z_i>0.
\end{cases}
\end{equation*}
Let $e_i=\operatorname{dist}(0,y_i+N_{\mathbb R_+}(z_i))$. If $z_i=0$, then $|[y_i]_-|\le e_i$ and $y_i z_i=0$. If $z_i>0$, then $|y_i|\le e_i$. Hence
\begin{equation*}
\|y-[y]_+\|_2\le \|e\|_2\le r_z,
\qquad
\sum\nolimits_i [y_i]_+z_i\le \sum\nolimits_i e_i z_i\le Z\sqrt m\,r_z .
\end{equation*}
Set $\lambda=[y]_+$. Then
\begin{equation*}
\left\|\nabla f(\theta)+J_g(\theta)^\top\lambda\right\|_2
\le
r_\theta+\|J_g(\theta)\|_{\mathrm{op}}\|y-\lambda\|_2
\le r_\theta+B_J r_z .
\end{equation*}

Let $e^c_i=g_i(\theta)+z_i$. Since $g_i(\theta)=e^c_i-z_i$,
\begin{equation*}
\sum\nolimits_i|\lambda_i g_i(\theta)|
\le
\sum\nolimits_i\lambda_i|e^c_i|+\sum\nolimits_i\lambda_i z_i
\le
\Lambda r_c+Z\sqrt m\,r_z .
\end{equation*}
These are the three terms in $\mathcal R(\theta)$. Adding them proves the lemma.
\end{proof}

\subsection{Proof of Lemma~\ref{lem:penalty-oracle}}
\begin{proof}
We prove the refresh bound, the recursive bound, and the smoothness bound in this order.

\par\medskip
\noindent\textbf{Step 1: Construct the refresh estimator.}\par
\nopagebreak[4]
Let $C(x,\xi)=\widehat g(\theta,\xi)+z$, whose sampled Jacobian is $J_C(x,\xi)=[\,\widehat J_g(\theta,\xi)\ \ I_m\,]$.
Use independent samples for the two factors and define
\begin{equation*}
\widehat\nabla\Phi_\beta(x;\xi_0,\xi_1,\xi_2)
=
\widehat\nabla f(\theta;\xi_0)
+\beta J_C(x,\xi_1)^\top C(x,\xi_2)
\end{equation*}
Independence gives $\mathbb E[J_C(x,\xi_1)^\top C(x,\xi_2)]=\mathbb E[J_C(x,\xi_1)]^\top\mathbb E[C(x,\xi_2)]=J_c(x)^\top c(x)$. Thus this is an unbiased estimator of $\nabla\Phi_\beta(x)=\nabla f(\theta)+\beta J_c(x)^\top c(x)$.

For the variance, use $J_C^\top C-J_c^\top c=J_C^\top(C-c)+(J_C-J_c)^\top c$. The moment bounds then give $\mathbb E\|J_C^\top C-J_c^\top c\|_2^2\le 2K_J^2K_V\mathsf V+2K_c^2K_V\mathsf V$.
Add the variance of $\widehat\nabla f$ and use $\beta\ge1$. This gives $K_\sigma^2\beta^2\mathsf V$.

\par\medskip
\noindent\textbf{Step 2: Construct the recursive difference estimator.}\par
\nopagebreak[4]
Now construct the recursive difference. Draw $\xi_1$ and $\xi_2$ independently at $x$. Let $L_{\xi_j}(x'\mid x)$ be the likelihood ratio from $x$ to $x'$. It equals one for fixed-distribution samples. Define
\begin{equation*}
\begin{aligned}
D_\beta(x,x';\xi_0,\xi_1,\xi_2)
={}&
\widehat\nabla f(x;\xi_0)
-L_{\xi_0}(x'\mid x)\widehat\nabla f(x';\xi_0)\\
&+\beta\Big[
J_C(x,\xi_1)^\top C(x,\xi_2)
-L_{\xi_1}(x'\mid x)L_{\xi_2}(x'\mid x)
J_C(x',\xi_1)^\top C(x',\xi_2)
\Big].
\end{aligned}
\end{equation*}
The two ratios change measure for the two samples. Independence gives $\mathbb E[L_{\xi_1}(x'\mid x)L_{\xi_2}(x'\mid x)J_C(x',\xi_1)^\top C(x',\xi_2)]=J_c(x')^\top c(x')$. The same identity holds for the objective-gradient term, so $\mathbb E[D_\beta(x,x';\xi_0,\xi_1,\xi_2)]=\nabla\Phi_\beta(x)-\nabla\Phi_\beta(x')$.

For the second moment, define $\mathcal P(u;\xi_1,\xi_2)=L_{\xi_1}(u\mid x)L_{\xi_2}(u\mid x)J_C(u,\xi_1)^\top C(u,\xi_2)$. The mean-value theorem gives $\|\mathcal P(x;\xi_1,\xi_2)-\mathcal P(x';\xi_1,\xi_2)\|_2\le\sup_{u\in[x,x']}\|\nabla_u\mathcal P(u;\xi_1,\xi_2)\|_{\rm op}\|x-x'\|_2$.
Each derivative acts on one likelihood ratio, $J_C$, or $C$. Assumption~\ref{ass:finite-penalty} and Lemma~\ref{lem:trajectory-difference} bound these terms. Hence
\begin{equation*}
\mathbb E\|\mathcal P(x;\xi_1,\xi_2)-\mathcal P(x';\xi_1,\xi_2)\|_2^2
\le
K^2\|x-x'\|_2^2
\end{equation*}
for a finite constant $K$. The same argument applies to the objective-gradient term. The factor $\beta$ gives the bound $K_{\rm ms}^2\beta^2\|x-x'\|_2^2$.

\par\medskip
\noindent\textbf{Step 3: Verify smoothness and the initial gap.}\par
\nopagebreak[4]
The same product expansion gives the smoothness bound from $J_C(x)^\top C(x)-J_C(x')^\top C(x')=J_C(x)^\top(C(x)-C(x'))+(J_C(x)-J_C(x'))^\top C(x')$.

Finally, item 1 of Assumption~\ref{ass:finite-penalty} gives $F_\beta(x_0)-\inf_xF_\beta(x)\le\Delta$. This proves the lemma.
\end{proof}

\subsection{Proof of Lemma~\ref{lem:stoch-tail}}
\begin{proof}
If $b_i^0<0$, the result is immediate. Now suppose $b_{i,H}\ge0$. The budget update gives $\sum_{t=0}^{H-1}w_{i,t}(s_t,a_t)\le b_i^0=\lfloor(d_i-\alpha_{\rm tail})/\eta_i\rfloor$. Also, $\gamma^t c_i(s_t,a_t)\le \eta_i w_{i,t}(s_t,a_t)$. Thus
\begin{equation*}
\sum\nolimits_{t=0}^{H-1}\gamma^tc_i(s_t,a_t)
\le
\eta_i\left\lfloor\frac{d_i-\alpha_{\rm tail}}{\eta_i}\right\rfloor
\le d_i-\alpha_{\rm tail}.
\end{equation*}
The tail after time $H$ is at most $\gamma^H/(1-\gamma)\le\alpha_{\rm tail}$. Hence $b_{i,H}\ge0$ implies $\sum_{t=0}^{\infty}\gamma^tc_i(s_t,a_t)\le d_i$. The contrapositive gives \eqref{eq:stoch-tail-inclusion}. Taking probabilities gives \eqref{eq:stoch-tail-probability}.
\end{proof}

\section{Experiment Details}
\label{sec:exp_details}

\subsection{Synthetic CCMDP diagnostics}
\label{app:synthetic-details}
\paragraph{Synthetic instance and protocol.}
The synthetic diagnostic is a finite-state absorbing discounted MDP with random absorption time. It has eight decision states \(s_0,\ldots,s_7\), one bad state \(b\), and one absorbing terminal state \(s_{\mathrm{term}}\), with initial state \(s_0\) and discount factor \(\gamma=0.95\). At each decision state there are two actions, safe \((a=0)\) and risky \((a=1)\). Both actions have self-loop probability \(0.06\), which makes the process genuinely infinite-horizon rather than fixed-horizon. The safe action has bad-transition probability \(0.002\) and moves forward with probability \(0.938\). The risky action has bad-transition probability \(p_i^r\) and moves forward with probability \(0.94-p_i^r\); the values of \(p_i^r\) and risky rewards are listed in Table~\ref{tab:synthetic_risky_params}. The bad state moves to \(s_{\mathrm{term}}\), and \(s_{\mathrm{term}}\) is absorbing.

The chance constraint is
\begin{equation*}
\mathbb P\!\left(\sum\nolimits_{t=0}^{\infty}\gamma^t c(s_t,a_t)>0.50\right)\le 0.13 .
\end{equation*}
Only the bad state has nonzero cost, \(c(b,a)=1\), while all decision and terminal states have zero cost. The safe action reward is \(0.45\) at every decision state; the bad and terminal states have zero reward. We enumerate all \(2^8=256\) stationary deterministic policies and use the same class for both learned selectors and the oracle. For each \(N\in\{500,1000,2000,5000,10000,20000,50000\}\), we draw \(N\) next-state samples from each of the 16 decision-state/action rows, giving a total budget \(D=16N\). The bad and terminal transition rows are known. Within each repetition, both methods share one empirical kernel \(\widehat P\), reused across time, policies, safety evaluation, and reward evaluation. We use 150 independent empirical-model draws at each budget, with random seed 20260503, and evaluate selected policies exactly under the true kernel. The datasets at different budgets are drawn separately and are not nested prefixes of one sample stream.

\paragraph{Practical Bellman selector and comparator.}
Since the bad state incurs a unit cost only once, violation occurs exactly when it is visited at a time \(t\le13\), where \(0.95^{13}>0.50\) and \(0.95^{14}<0.50\). Thus the violation recursion is exact for this instance, despite the unbounded absorption time. The method labeled CCMDP in the figure computes a buffered safe-probability recursion. Initialize \(\underline F_{14}^{\pi}\equiv1\); for \(h=13,\ldots,0\), set \(\underline F_h^{\pi}(b)=0\), \(\underline F_h^{\pi}(s_{\mathrm{term}})=1\), and, at each decision state, use
\begin{align*}
 m_h^\pi(s)&=\widehat P(\cdot\mid s,\pi(s))^\top\underline F_{h+1}^{\pi},\\
 \widehat v_h^\pi(s)&=\widehat P(\cdot\mid s,\pi(s))^\top(\underline F_{h+1}^{\pi})^2-(m_h^\pi(s))^2,\\
 \underline F_h^{\pi}(s)&=\left[m_h^\pi(s)-0.75\left(\sqrt{\frac{4\max\{0,\widehat v_h^\pi(s)\}}{N}}+\frac{14}{3(N-1)}\right)\right]_{[0,1]}.
\end{align*}
The CCMDP selector maximizes the exact empirical-model discounted return \(\widehat J(\pi)\) subject to \(q_{\mathrm{buf}}(\pi):=1-\underline F_0^\pi(s_0)\le0.13\). The radius uses the implementation's fixed log term 2 and scale 0.75. The resulting buffered estimate is used for policy selection and plotted in Panel~C.

The Markov-CMDP comparator maximizes the same \(\widehat J(\pi)\) subject to \(\mathbb E_{\widehat P}[C^\pi]\le\delta d=0.065\), where \(C^\pi=\sum_{t\ge0}\gamma^t c(s_t,a_t)\). This is a sufficient chance-safety condition under the true model by Markov's inequality, but its empirical implementation has no additional uncertainty buffer. The experiment compares the true safety and return of the policies selected by these two rules. The oracle maximizes true return subject to true chance feasibility within the same 256-policy class.

\begin{table}[h]
\centering
\small
\caption{State-dependent risky-action parameters in the synthetic CCMDP. The safe action uses reward $0.45$ and bad-transition probability $0.002$ at every decision state.}
\label{tab:synthetic_risky_params}
\renewcommand{\arraystretch}{1.12}
\setlength{\tabcolsep}{6pt}
\begin{tabular}{@{}ccccccccc@{}}
\toprule
Decision state $s_i$
& $s_0$ & $s_1$ & $s_2$ & $s_3$ & $s_4$ & $s_5$ & $s_6$ & $s_7$ \\
\midrule
Risky reward $r(s_i,1)$
& 0.72 & 0.73 & 0.74 & 0.77 & 0.79 & 0.81 & 0.84 & 0.87 \\

Risky bad probability $p_i^r$
& 0.014 & 0.018 & 0.021 & 0.024 & 0.028 & 0.032 & 0.036 & 0.040 \\

Risky next-state probability $0.94-p_i^r$
& 0.926 & 0.922 & 0.919 & 0.916 & 0.912 & 0.908 & 0.904 & 0.900 \\
\bottomrule
\end{tabular}
\end{table}

\paragraph{Synthetic results.}
Figure~\ref{fig:synthetic_post_selection} illustrates the conservatism--return tradeoff of the two deterministic selectors. At \(D=8{,}000\), the buffered selector has lower mean true return than Markov-CMDP (3.562 versus 4.026); its mean return exceeds the comparator's at the tested budgets \(D\ge32{,}000\). At \(D=800{,}000\), their mean returns are 4.391 and 3.928, respectively, compared with the same-class oracle value 4.404. The buffered selector returns a truly feasible policy in all 150 trials at each tested budget. Markov-CMDP returns 148 truly feasible policies out of 150 at the smallest budget and 150 out of 150 at each remaining budget. The pointwise two-sided 95\% Clopper--Pearson interval for 150 successes out of 150 is approximately \([0.9757,1]\). Both methods return a policy in every reported trial. The return bands show the mean plus or minus 1.96 empirical standard errors across model draws.

Panel~A displays repetition 0 at five budgets. The selections at \(D=160{,}000\) and \(320{,}000\) coincide with that at \(80{,}000\) and are omitted to avoid overlap. The displayed improvement is illustrative: individual repetitions need not improve monotonically with the sample budget. Panel~C compares empirical and buffered violation estimates with exact values for all 256 policies in the single draw with \(N=1{,}000\), repetition 0.

\begin{figure}[H]
    \centering
    \includegraphics[width=0.99\linewidth]{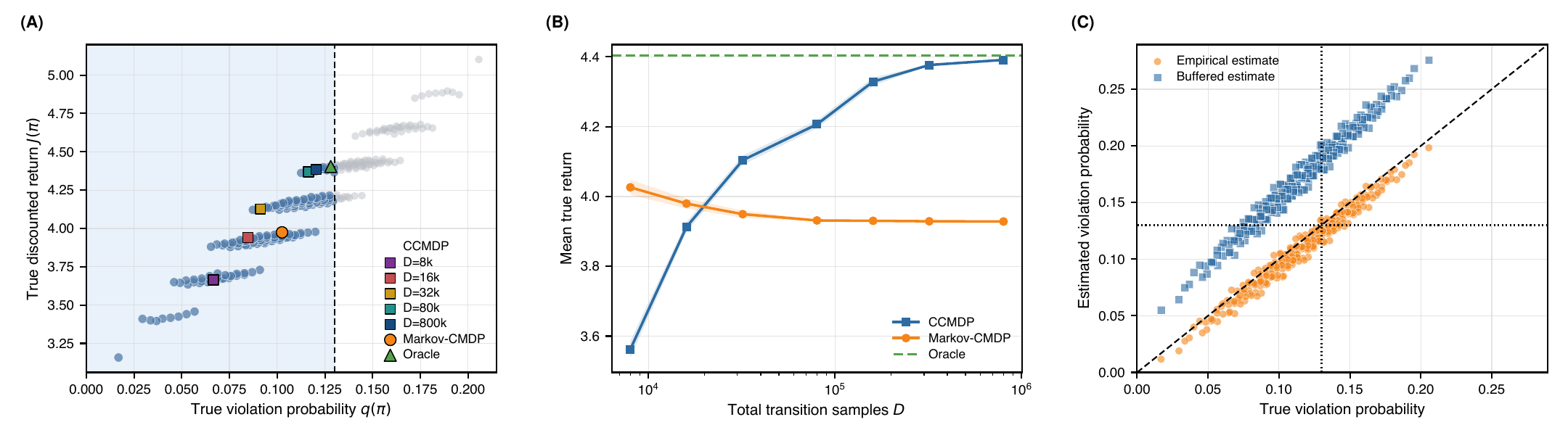}
    \caption{
    Synthetic CCMDP diagnostic with shared transition samples.
    A: true violation probability and discounted return for all 256 stationary deterministic policies, with CCMDP selections from repetition 0 at the indicated budgets; the Markov-CMDP marker uses \(D=16{,}000\). The oracle is optimal within the same policy class.
    B: mean true return over 150 empirical-model draws per budget, with shaded bands of \(\pm1.96\) empirical standard errors.
    C: empirical and practically buffered violation estimates for one shared model at \(D=16{,}000\), against exact true probabilities; the diagonal indicates equality and the dotted lines mark \(\delta=0.13\).
    }
    \label{fig:synthetic_post_selection}
\end{figure}

\paragraph{Model-free experiment.}
We apply the variance-reduced policy-gradient method (VR-PG) with the recursive proximal updates of Algorithm~\ref{alg:trajectory-tail-pg} to the same synthetic instance, using a stationary Bernoulli policy with one logit per decision state. The initial risky-action probability is 0.5 at every state, and the initial nonnegative slack is zero. We optimize the normalized negative return with the quadratic slack penalty, using \(\rho=0.0035\), fixed \(\beta=80\), numerical step size 0.01, and 250,000 updates. Thus the tightened training threshold is \(\delta-2\rho=0.123\). The gradient estimator is refreshed every 20 updates using 2,048 independent trajectory triples; intervening updates use 128 triples with likelihood-ratio corrections. Each triple supplies independent reward-gradient, risk-gradient, and risk-value samples. The proximal step projects the slack onto the nonnegative half-line. Training uses 168 million trajectories. True-model evaluations are used to display the trajectory of the learned policies.

Figure~\ref{fig:synthetic_cdf_pg} shows the VR-PG optimization trajectory for training seed 11. The return rises from 4.16658 initially to 4.36245 at the last iterate, whose true violation probability is 0.12769. The best enumerated deterministic feasible return is 4.40376, and the multi-start numerical stochastic reference has return 4.50970 at violation probability 0.13.

After training, a separate random draw selects uniformly from the 250,000 post-update policies; the selected candidate is at iteration 59,611. Its true return is 4.29506 and its true violation probability is 0.12601. An independent batch of 602,267 trajectories gives a violation estimate of 0.126168; adding the validation margin \(\rho/2=0.00175\) yields 0.127918. Since this upper bound is below \(\delta=0.13\), the candidate is accepted. The selected candidate and the last iterate are marked separately in the figure.

\begin{figure}[H]
    \centering
    \includegraphics[width=0.99\linewidth]{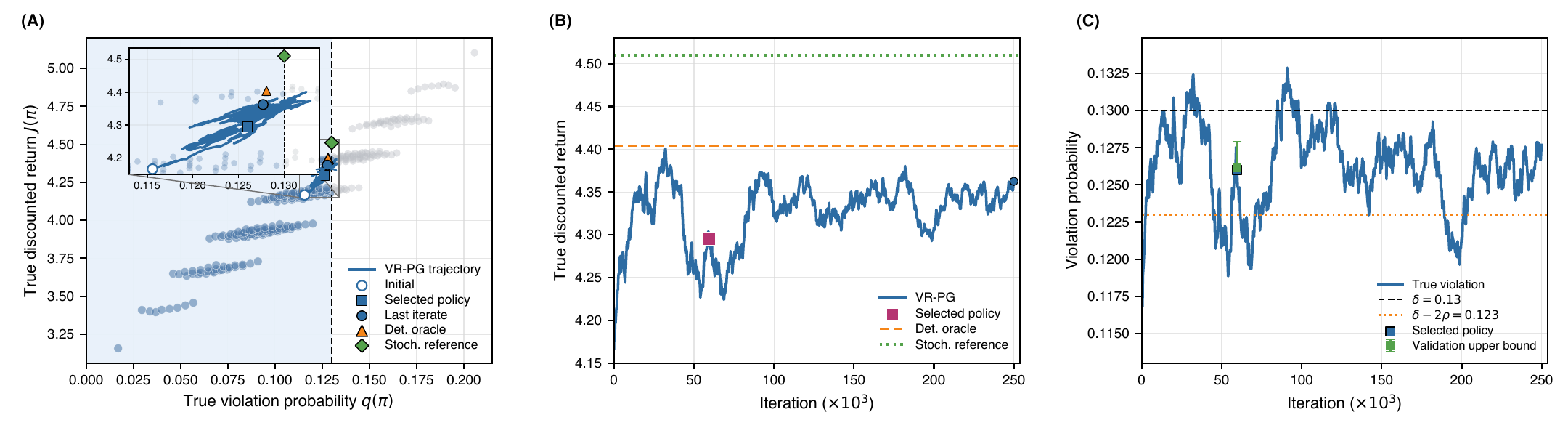}
    \caption{
    VR-PG on the synthetic CCMDP with \(\rho=0.0035\) and \(\beta=80\). The optimization trajectory is shown against the deterministic policy landscape and alongside the evolution of return and violation probability. The inset includes the numerical stochastic reference; the selected policy is marked in magenta on the return curve. Its independent validation upper bound lies below \(\delta=0.13\).
    }
    \label{fig:synthetic_cdf_pg}
\end{figure}

\subsection{IEEE 14-bus storage-control setting}
\label{app:ieee14-storage-setting}
The storage-control experiment in Section~\ref{sec:experiments} is a finite-state CCMDP built from an IEEE 14-bus DC power-flow simulator. The state is \(s_t=(e_t,\tau_t,\ell_t)\), where \(e_t\in\{0,0.1,\ldots,1.0\}\) is storage state of charge, \(\tau_t\) is one of four time blocks, and \(\ell_t\) is one of four load regimes with multipliers \((0.82,1.00,1.18,1.42)\). Thus \(|\mathcal S|=176\). The storage device is placed at bus 14, and the action set is \(\{-12,-6,0,6,12\}\) MW, where negative actions charge and positive actions discharge. Storage capacity is \(48\) MWh, with charging and discharging efficiencies both equal to \(0.95\); infeasible commands are clipped to the SOC range and rounded to the nearest SOC grid point.

The time block advances deterministically, while the load regime follows a time-dependent Markov chain. With rows and columns ordered as low, normal, high, and extreme, the transition row is \(P_{\rm load}(\cdot\mid\tau,\ell)=0.75P_{\rm base}(\ell,\cdot)+0.25P_{\rm time}(\tau,\cdot)\), where
\begin{equation*}
P_{\rm base}=
\begin{bmatrix}
0.68&0.25&0.06&0.01\\
0.15&0.62&0.19&0.04\\
0.04&0.20&0.58&0.18\\
0.02&0.08&0.30&0.60
\end{bmatrix},
\qquad
P_{\rm time}=
\begin{bmatrix}
0.55&0.35&0.09&0.01\\
0.18&0.55&0.22&0.05\\
0.05&0.25&0.45&0.25\\
0.10&0.35&0.40&0.15
\end{bmatrix}.
\end{equation*}
The initial distribution puts SOC at \(0.5\), time at night, and load-regime probabilities \((0.15,0.70,0.13,0.02)\).

Rewards measure normalized operating benefit. The time-block prices are \((0.45,0.75,1.35,1.05)\), and the reward is an affine normalization of negative operating cost to \([0,1]\). The safety cost is normalized line-overload severity from the DC power-flow solution: if \(\rho_{\max}(s,a)\) is the maximum line-loading ratio after applying action \(a\), then \(c(s,a)=\min\{1,[\rho_{\max}(s,a)-1]_+/0.5\}\).  The chance constraint used in the reported IEEE 14-bus experiment is
\begin{equation}
\mathbb{P}_{P,\pi,\mu}\!\left(
    \sum\nolimits_{t=0}^{\infty}\gamma^t c(s_t,a_t) > 0.30
\right)
\leq 0.15,
\qquad
\gamma=0.85.
\label{eq:ieee14-chance}
\end{equation}

For the finite Bellman violation table, we use tail allowance
$\alpha_{\rm tail}=0.005$ and budget-grid width $\eta=0.0015$. By
\eqref{eq:certificate-discretization}, these choices give
\[
H=48,
\qquad
b^0=
\left\lfloor
\frac{d-\alpha_{\rm tail}}{\eta}
\right\rfloor
=196.
\]

The structured policy class contains $144$ threshold storage policies together
with the always-idle policy, for a total of $145$ policies. A threshold policy
discharges when the load regime and SOC exceed prescribed thresholds, charges
during low-price periods when the load and SOC are below prescribed
thresholds, and otherwise idles. All comparisons in
Figure~\ref{fig:ieee14-storage} use this same policy class.

For the sample-budget experiment, we use $N\in\{25,50,100,200,500,1000\}$ next-state samples per original state--action row and repeat the experiment for
$60$ independent empirical-model trials. For safety certification, independent
transition batches are drawn across Bellman stages. Reward ranking uses a
separate empirical transition model sampled independently of the safety data.

The main comparison is between the practical Bellman-buffered selector and a
Markov-CMDP expected-cost benchmark over the same policy class. We write $C_\pi = \sum_{t=0}^{\infty}\gamma^t c(s_t,a_t)$. By Markov's inequality, $\mathbb{P}_{P,\pi,\mu}(C_\pi>d) \leq \frac{\mathbb{E}_{P,\pi,\mu}[C_\pi]}{d}$, so the condition
\begin{equation}
\mathbb{E}_{P,\pi,\mu}[C_\pi]
\leq
\delta d
=
0.045
\label{eq:ieee14-markov}
\end{equation}
is sufficient for the original chance constraint. The Markov-CMDP reference in
Figure~\ref{fig:ieee14-storage} is the highest-return policy in the same
structured class satisfying~\eqref{eq:ieee14-markov} under the true transition
model. The chance-constrained oracle is the highest-return policy in the same
class satisfying~\eqref{eq:ieee14-chance}.

True-model discounted returns and expected discounted safety costs are computed
from the known finite transition kernel for evaluation only. Violation
probabilities displayed in Figure~\ref{fig:ieee14-storage} are estimated under
the true transition kernel using $50{,}000$ Monte Carlo trajectories of length
$100$. These true-model quantities are not available to the learner and are
used only for evaluation and construction of the oracle references.

\paragraph{Practical certificate used in the plots.}
The IEEE 14-bus experiment uses the horizon-normalized empirical buffer $\alpha_{\rm emp}(N)
=
\frac{0.20}{H}
\sqrt{
\frac{
\log\!\left(
2H|\Sset||\Aset|(b^0+1)/0.05
\right)
}{
2N
}
}$. A candidate is accepted when its buffered Bellman violation estimate is at
most $\delta=0.15$. This numerical buffer preserves the leading
$N^{-1/2}$ dependence of the theoretical concentration radius but uses a
calibrated constant and does not apply the $\rho$-tightening used in
Theorem~\ref{thm:scalar-prob}. Consequently,
Figure~\ref{fig:ieee14-storage} should be interpreted as an empirical
illustration of the Bellman-certification mechanism rather than as a numerical
verification of the theorem-level coverage constants.

\end{document}